\documentclass{article}
\usepackage{chngcntr}
\usepackage[utf8]{inputenc} % allow utf-8 input
\usepackage[T1]{fontenc}    % use 8-bit T1 fonts
\usepackage{hyperref}       % hyperlinks
\usepackage{url}            % simple URL typesetting
\usepackage{booktabs}       % professional-quality tables
\usepackage{amsfonts}       % blackboard math symbols
\usepackage{nicefrac}       % compact symbols for 1/2, etc.
\usepackage{microtype}      % microtypography
\usepackage{xcolor}         % colors
\usepackage{multirow}
\usepackage{graphicx}
\usepackage{hyperref}
\usepackage{subfigure}
\usepackage{caption}
\usepackage{url}
\usepackage{fullpage}
\usepackage{amsmath, amsfonts, hyperref, bbm, amsthm, todonotes}
\usepackage{cleveref, minitoc}
\usepackage[numbers]{natbib}
\newtheorem{theorem}{Theorem}
\newtheorem{lemma}{Lemma}

\newtheorem{assumption}{Assumption}

\newcommand{\Maj}{\mathsf{Maj}}
\renewcommand{\hat}{\widehat}
\usepackage{enumitem}
\usepackage{adjustbox}
\usepackage{wrapfig}
\usepackage{float}
\usepackage{minitoc}
\usepackage{siunitx}
% \allowdisplaybreaks
%auto-ignore
%%%%% NEW MATH DEFINITIONS %%%%%

\usepackage{amsmath,amsfonts,bm}

%=================================custom============================

\newcommand{\sign}{\mathsf{sgn}}

\newcommand{\relu}{\sigma}

\newcommand{\defeq}{\vcentcolon=}

\newcommand{\dsame}{\mathcal{D}_{\text{same}}}
\newcommand{\ddiff}{\mathcal{D}_{\text{diff}}}
\newcommand{\dlam}{\mathcal{D}_{\lambda}}

\newcommand{\duni}{\mathcal{D}_{\text{U}}}
\newcommand{\sigmoid}{\phi}
%=================end of custom=================================
% Mark sections of captions for referring to divisions of figures

% Highlight a newly defined term

% Figure reference, lower-case.

% Figure reference, capital. For start of sentence

% Section reference, lower-case.

% Section reference, capital.

% Reference to two sections.

% Reference to three sections.

% Reference to an equation, lower-case.
\def\eqref#1{equation~\ref{#1}}
% Reference to an equation, upper case

% A raw reference to an equation---avoid using if possible

% Reference to a chapter, lower-case.

% Reference to an equation, upper case.

% Reference to a range of chapters

% Reference to an algorithm, lower-case.

% Reference to an algorithm, upper case.

% Reference to a part, lower case

% Reference to a part, upper case

\def\1{\bm{1}}

% Random variables

% rm is already a command, just don't name any random variables m

% Random vectors

% Elements of random vectors

% Random matrices

% Elements of random matrices

% Vectors

% Elements of vectors

% Matrix

% Tensor
\DeclareMathAlphabet{\mathsfit}{\encodingdefault}{\sfdefault}{m}{sl}
\SetMathAlphabet{\mathsfit}{bold}{\encodingdefault}{\sfdefault}{bx}{n}

% Graph

% Sets

% Don't use a set called E, because this would be the same as our symbol
% for expectation.

% Entries of a matrix

% entries of a tensor
% Same font as tensor, without \bm wrapper

% The true underlying data generating distribution

% The empirical distribution defined by the training set

% The model distribution

% Stochastic autoencoder distributions

 % Laplace distribution

\newcommand{\E}{\mathbb{E}}

% \newcommand{\sigmoid}{\sigma}

% Wolfram Mathworld says $L^2$ is for function spaces and $\ell^2$ is for vectors
% But then they seem to use $L^2$ for vectors throughout the site, and so does
% wikipedia.

 % See usage in notation.tex. Chosen to match Daphne's book.

% \DeclareMathOperator{\sign}{sign}

\newcommand{\Dist}[1]{\mathcal{D}_{\mathsf{#1}}}

\usepackage{mathtools}

% \newcommand{\eps}{\epsilon}

% \newcommand{\vw}{\mathbf{w}}

%\newcommand{\wpe}{w_{\perp}}

% \newcommand{\wsig}{w_{\text{sig}}}
% \newcommand{\wopp}{w_{\text{opp}}}

% \newcommand{\bmax}{\gamma_{\text{max}}}
% \newcommand{\bmin}{\gamma_{\text{min}}}
% \newcommand{\bmu}{\gamma_{\mu}}

% \newcommand{\gmin}{g_{\text{min}}}
% \newcommand{\gmax}{g_{\text{max}}}
% \newcommand{\gmu}{g_{\mu}}

% \newcommand{\Ts}{\cs\log(800\cbe)/(\eta\tau)}
% \newcommand{\tS}{\tilde{S}}
% \newcommand{\hmu}{\tilde{\gamma}_{\mu}}
% \newcommand{\hmin}{\tilde{\gamma}_{\text{min}}}
% \newcommand{\hmax}{\tilde{\gamma}_{\text{max}}}

%% Please comment out the todo notes before submitting (in case there're unaddressed comments & weird margins)
% \input{imports/todo_notes}

\title{Complexity Matters: Dynamics of Feature Learning in the Presence of Spurious Correlations}

\author{%
  GuanWen Qiu, Da Kuang, Surbhi Goel \\
% School of Engineering and Applied Science\\
University of Pennsylvania\\
% Philadelphia, PA 19104, USA \\
\texttt{\{guanwenq, kuangda, surbhig\}@seas.upenn.edu}}
\date{}

\begin{document}
\doparttoc % Tell to minitoc to generate a toc for the parts
\faketableofcontents % Run a fake tableofcontents command for the partocis

\maketitle

\def\doublecolumn{0}

\begin{abstract}
  Existing research often posits spurious features as \textit{easier} to learn than core features in neural network optimization, but the impact of their relative simplicity remains under-explored. Moreover, studies mainly focus on end performance rather than the learning dynamics of feature learning. In this paper, we propose a theoretical framework and an associated synthetic dataset\footnote{The code for introduced datasets is made public at:\url{ https://github.com/NayutaQiu/Boolean_Spurious}} grounded in boolean function analysis. This setup allows for fine-grained control over the relative complexity (compared to core features) and correlation strength (with respect to the label) of spurious features to study the dynamics of feature learning under spurious correlations. Our findings uncover several interesting phenomena: (1) stronger spurious correlations or simpler spurious features slow down the learning rate of the core features, (2) two distinct subnetworks are formed to learn core and spurious features separately, (3) learning phases of spurious and core features are not always separable, (4) spurious features are not forgotten even after core features are fully learned. We demonstrate that our findings justify the success of retraining the last layer to remove spurious correlation and also identifies limitations of popular debiasing algorithms that exploit early learning of spurious features. We support our empirical findings with theoretical analyses for the case of learning XOR features with a one-hidden-layer ReLU network.
\end{abstract}

\section{Introduction}

There is increasing evidence \citep{geirhos_shortcut_2020, zhou_examining_2021, geirhos_imagenet-trained_2022, xiao_noise_2020, mccoy_right_2019} indicating that neural networks inherently tend to learn \textit{spurious} features in classification tasks. These features, while correlated with the data label, are non-causal and lead to enhanced training and in-distribution performance. However, this inherent tendency overlooks core or invariant features that are crucial for robustness against distribution shifts. This phenomenon is attributed to the relative simplicity of spurious features compared to core features, reflecting a \textit{simplicity bias} in neural network training \citep{geirhos_shortcut_2020, shah_pitfalls_2020, rahaman_spectral_2019, nakkiran_sgd_2019, xue_which_2023}, where networks inherently prefer simpler features over more complex, yet essential ones. Interestingly, recent empirical work \citep{kirichenko_last_2023,izmailov_feature_2022} has shown that despite this bias and the compromised predictive performance, standard neural network training does in fact learn the harder core features in its representation, as long as the spurious correlation is not perfect. However, a fine-grained understanding of the impact of ``simplicity'' of the spurious features on the learning of the robust features has remained unexplored. Moreover, a precise definition of simplicity that accounts for computational aspects of learning is lacking.\looseness=-1

In our work, we characterize the impact of the relative complexity of spurious features and their correlation strength with the true label on the dynamics of core feature learning in neural networks trained with (stochastic) gradient descent. To ground our exploration, we introduce a versatile framework and corresponding synthetic datasets based on the rich theory of boolean functions (see \cref{background} for a quick review). We quantify simplicity/complexity using the \textit{computational time/pattern} of learning the different features (represented as boolean functions) by gradient-based training, and subsequently study the dynamics of gradient-based learning on these datasets. We focus on two types of boolean functions: \textit{parity} and \textit{staircase} functions \citep{abbe_merged-staircase_2022}. Our key findings are summarized below:

\begin{itemize}
    \item \vspace{-2ex}\textbf{Easier spurious features lead to slower core feature emergence}. We find that the presence of spurious features notably harm the convergence rate of core feature learning when infinite data is available. This is particularly evident in scenarios where two parity functions of differing degrees are being learned; we give a concrete formula that quantify the initial gradient gap between them. Notably, we found even spurious features that are of similar or slightly lower complexity than the core feature can substantially slow down the convergence rate. This delay in convergence manifests as poor robustness of the model when data availability is limited.
    \item \textbf{The common assumption that the learning phase is separated into spurious feature learning and then core feature learning can lead to complete failure of various debiasing algorithm}. \citep{liu_just_2021, liu_avoiding_2023, utama_towards_2020, nam_learning_2020, yaghoobzadeh_increasing_2021} heavily depend on early learning of shortcut features and make an implicit assumption of a clear-cut separation between the learning phases of core and spurious features. We show that this assumption is generally incorrect and provide an insightful counterexample to demonstrate how such algorithms can fail completely. Staircase functions—a category of functions characterized by their hierarchical structure and learning curves similar to those in real datasets—illustrate that both core and spurious features are learned concurrently. The degree to which core and spurious features are learned is influenced by their relative complexity and correlation strength. This observation challenges the effectiveness of widely adopted machine learning algorithms.
    \item \textbf{Spurious features are retained}. We observe that networks retain spurious features in their representations, even after the core feature has been learned sufficiently well. This retention is particularly notable for spurious features with lower complexity compared to the core features. Not only do these spurious features persist in the network's representation, but their corresponding weights in the last layer also remain stable, especially under high confounder strength. We provide theoretical explanation for such phenomenon and show how well the spurious feature is being memorized is closely related to the correlation strength.
    \item \textbf{The network is separated into two distinct subnetworks in learning different features, and Last Layer Retraining (LLR) decreases reliance on the spurious subnetwork}: \citep{kirichenko_last_2023, izmailov_feature_2022} show LLR with balanced dataset is able to improve robustness of the model. While it is clear from the previous works that the core feature can be linearly decoded from the last layer, the mechanism behind this remains elusive.  We demonstrate across numerous datasets that this improvement primarily stems from a reduction in the weights of the last layer that are connected to the spurious subnetwork. This observation is based on the finding that spurious and core representations are disentangled in the last layer.
    
    % Subsequently, we find that applying LLR with heavier $\ell_1$ regularization on the original, spurious dataset can also lead to a significant reduction in the influence of the spurious subnetwork. 
    
\end{itemize}

We use semi-synthetic and real world datasets to validate the above findings and also provide theoretical justifications for these observations using our boolean spurious feature setting.
\subsection{Related Work} \label{Appendix:related}

\paragraph{Datasets for Studying Spurious Correlations.} Numerous datasets have been employed to study learning under spurious correlation. These include synthetic datasets such as WaterBirds \citep{sagawa_distributionally_2020}, Domino Image dataset \citep{shah_pitfalls_2020}, Color-MNIST \citep{zhang_correct-n-contrast_2022}, and a series of datasets proposed in \citep{hermann_what_2020}. It's important to note that these datasets are constructed in an ad-hoc manner, making it challenging to justify the complexity of the spurious features. Real datasets known to contain spurious correlations, such as CivilComments \citep{duchene_benchmark_2023}, MultiNLI\citep{williams_broad-coverage_2018}, CelebA \citep{liu_deep_2015}, and CXR \citep{kermany_identifying_2018}, are also used to evaluate algorithms designed to mitigate shortcut features. A recent work \citep{joshi_towards_2023} points out several problems of existing datasets that has been used to study spurious correlation and evaluating algorithm performances. Our observation provide further support for their claims (see \cref{Appendix:real_dataset}).

\paragraph{Mitigating Spurious Correlations.}
Learning under spurious correlation can be interpreted as an Out-Of-Distribution (OOD) or group imbalance task, as spurious features divide the dataset into imbalanced groups. Two cases arise: (1) when the spurious attribute is given, popular methods like \citep{sagawa_distributionally_2020, idrissi_simple_2022} can be applied, (2) when the spurious label is unknown during training, various algorithms have been proposed to exploit the phenomenon of simplicity bias \citep{valle-perez_deep_2019, shah_pitfalls_2020, nakkiran_sgd_2019}, which posits that spurious features are learned by the model in the early stages of learning, to upweight underrepresented groups. A representative method of this type is the ``Just Train Twice Algorithm''\citep{liu_just_2021}, where a model is first trained to upweight ``easy'' samples.It is worth noting that almost all algorithms assume a balanced validation dataset for extensive hyperparameter tuning, as observed in \citep{izmailov_feature_2022}.  Another line of work focuses on underspecified tasks where the spurious features are fully correlated with the label and attempt to learn diverse features \citep{teney_evading_2022, lee_diversify_2023}.

\paragraph{Last Layer Retraining.}
A key line of work related to our research is \citep{kirichenko_last_2023, izmailov_feature_2022}, where it is demonstrated that last layer retraining on a biased model with balanced data is enough for achieveing state-of-art result on many benmark datasets. \citep{labonte_towards_2023} further shows this is even true for some benchmark dataset when the spurious data is used. The method essentially runs by first finetune the model then apply logistic regression on a group balanced dataset with heavy regularization term to reweight the last layer. Retraining the last layer has also been explored widely and shown to be highly efficient in other settings, such as long-tail learning \citep{kang_decoupling_2020}, probing inner representations of a model \citep{alain_understanding_2018}, and out-of-distribution learning \citep{rosenfeld_domain-adjusted_2022}. In our study, we assess the quality of the learned representations for both the spurious and core features by evaluating the model's performance after reweighting following \citep{hermann_what_2020}.

\paragraph{Learning Boolean functions with Neural Network(NN).} The problem of learning Boolean functions has long been a fundamental challenge in computational learning theory. A body of work has focused on studying the mechanism of learning the parity function with neural networks in great detail \citep{merrill_tale_2023, daniely_learning_2020, edelman_pareto_2023, barak_hidden_2023}. Another important class of functions, referred to as ``staircase'' functions \citep{abbe_merged-staircase_2022, abbe_sgd_2023} has recently attract great attention and is explored in our study.

% \section{Problem Setting}

% 

\section{Boolean Spurious Features Dataset}

% \paragraph{Background on spurious correlations.} 
% We consider a binary classification setting with a training distribution $(x, c, s) \sim \dtrain$ and a test distribution $(x, c, s) \sim \dtest$. Here $x$ is the feature vector, $c$ is called the core label which is the ground truth label of the classification task and $s$ is called the spurious label. A spurious correlation between $c$ and $s$ is said to exist when they are not independent, i.e., $p(c, s) \neq p(c)p(s)$, yet $s$ and $c$ are known to be non-causally related and they are independent in $\dtest$. We use $\duni$ to denote the uniform distribution over the input features. For the majority of this work, this would be the uniform distribution on the boolean hypercube.

% \subsection{Boolean Feature Dataset} 

To rigorously examine the learning mechanisms of neural networks in the presence of spurious correlations, we propose a dataset that encapsulates features via Boolean functions. We create two boolean features on a set of variables: the core feature which completely predicts the label, and a spurious feature which is correlated to the core feature, but with smaller complexity. Formally, consider two boolean functions
\begin{equation*}
    \underbrace{f_c: \{+1, -1\}^c \to \{+1, -1\}}_{\text{core feature}} \;
    \underbrace{f_s: \{+1, -1\}^s \to \{+1, -1\}}_{\text{spurious feature}}
\end{equation*}
We use $n\defeq c+s+u$ to denote the total dimensions of the vector where the remaining $u$ dimensions are irrelevant variables. For a boolean vector $x$, the coordinates associated with the functions $f_c$ and $f_s$ are denoted by $x_c\in \{+1, -1\}^c,  x_s\in \{+1, -1\}^s$ and we call them core coordinates/features and spurious coordinates/features respectively, while $x_u\in \{+1, -1\}^u$ represents the independent or noise coordinates. The spurious dataset is parameterized by constant $\lambda \in [0,1]$ that represents the confounder strength or correlation of the two features. 

In order to define our spurious dataset, we first form two distributions, and then combine them to form the spurious distribution $\dlam$. With $\duni$ being the uniform distribution on the boolean hypercube, we have 

\begin{itemize}
    \item \vspace{-2ex}$\dsame$, where core and spurious label agree:
    \begin{equation*}
        P_{\dsame}(x) \defeq P_{x \sim \duni}(x \mid f_c(x_c) = f_s(x_s)) 
    \end{equation*}
    
    \item $\ddiff$, where core and spurious label disagree:
    \begin{equation*}
        P_{\ddiff}(x) \defeq P_{x\sim \duni}(x \mid f_c(x_c) \neq f_s(x_s))
    \end{equation*}
    
    \item $\dlam$ where with probability $\lambda$, a sample is drawn from $\dsame$; with probability $1-\lambda$, from $\ddiff$:
    % \footnote{When either of the feature is unbiased, for $\lambda > 0.5$, the distribution is equivalent to a mixture of $2(1-\lambda) \duni(x) + (2\lambda-1) \dsame(x)$ See  \cref{lemma: distribution_transformation} for the proof.}. 
    \begin{equation*}
        P_{\dlam}(x) \defeq \lambda P_{\dsame}(x) + (1-\lambda) P_{\ddiff}(x)
    \end{equation*}
\end{itemize} 

\begin{figure*}[t]
    \centering
    % \vskip 0.2in
    \includegraphics[width=0.8\linewidth]{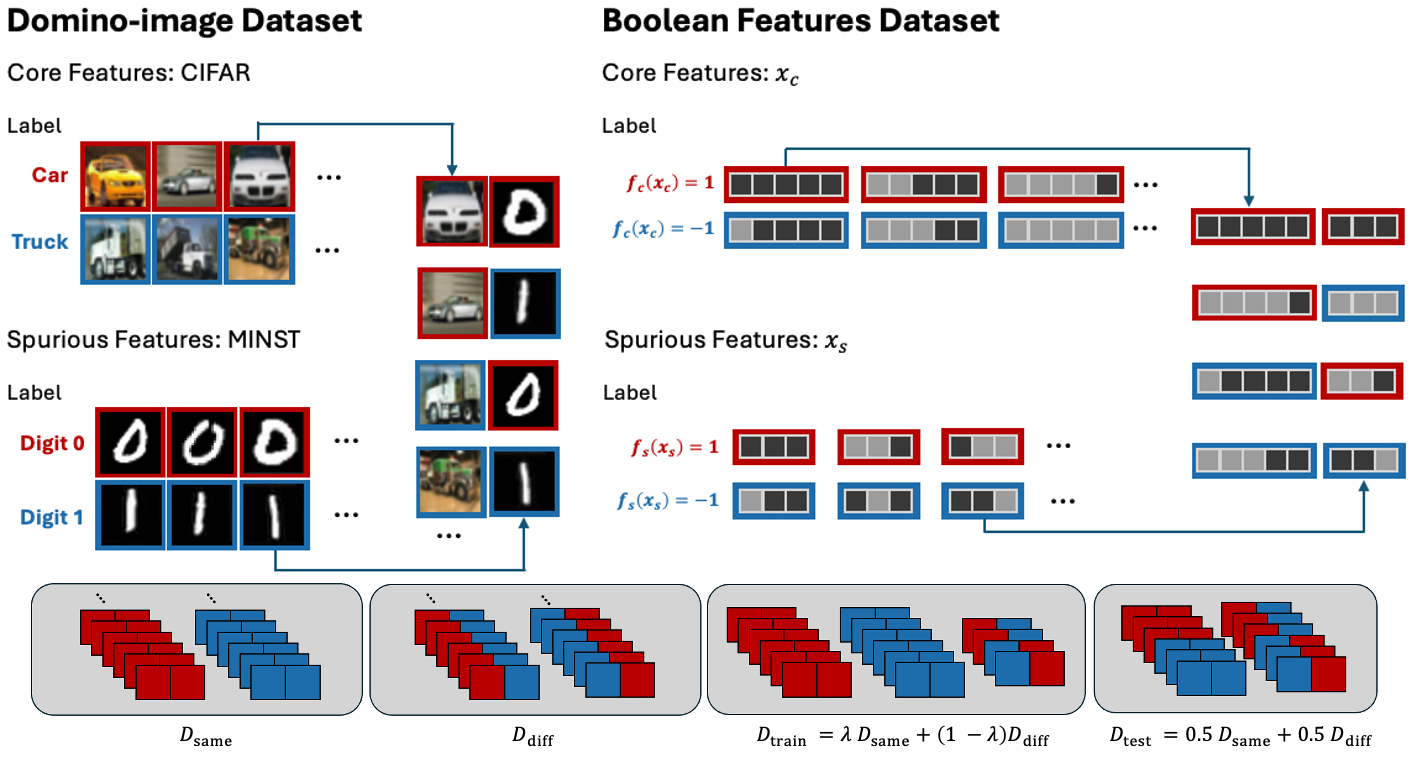}
    % \vspace{-0.2in}
    \caption{A comparison of our dataset with the domino image dataset. Here $\lambda=0.75$. We take both $f_s$ and $f_c$ to be parity function. Dark grey square on a boolean vector denote $1$ and light grey square denote $-1$.}
    \label{fig:Dsame}
    % \vskip -0.2in
\end{figure*}

 We assume that for both $f \in \{f_s,f_c\}$, the probabilities $P_U(f(x) = 0) > 0$ and $P_U(f(x) = 1) > 0$ (to ensure that the conditional probabilities are well-defined) and $\E_{x\sim D_U}[f(x)] = 0$ (to be unbiased). A sample in our spurious dataset is $(x, y=f_c(x_c))$ where $x \sim \dlam$. Without loss of generality, we assume $\lambda \ge 0.5$, since $\lambda <0.5$ is symmetric with the spurious feature being $-f_s(x_s)$. In simple words, we can view $\dsame$ as upsampling the distribution that has $f_s(x) = f_c(x)$ to add spurious correlations. It is helpful to recognize that $x_s-f(x_s)-y=f(x_c)-x_c$ forms a markov chain. We show some important properties of this distribution in \cref{dist_property}. It is noteworthy that our proposed framework/dataset also satisfies the five constraints proposed in \citep{nagarajan_understanding_2021} to be considered as a ``easy-to-learn'' OOD task\footnote{The second criterion \textbf{Identical Invariant Distribution} made in \citep{nagarajan_understanding_2021} may break when $f_s(x_s)$ is biased.}.

 As is common in these datasets, we define the majority group of samples as $\{x:f_c(x_c)=f_s(x_s)\}$ where the core and spurious features agree ($\dsame$) and the minority group as $\{x:f_c(x_c) \neq f_s(x_s) \}$ where the core and spurious features disagree ($\ddiff$). 

\paragraph{Boolean Functions: Parity and Staircase.}
We focus our study on two choices of the spurious and core features: one where both core and spurious functions are parity functions $f(x) = \chi_d(x) \defeq \prod_i^{n} x_i$, and another where they both take the form of leap 1 degree $d$ staircase functions \citep{abbe_merged-staircase_2022} as $f(x) = x_1 + x_1x_2 + x_1x_2x_3 + \ldots + x_1\ldots x_d$\footnote{These are referred to as staircase functions since the training curves look like a staircase where the features are learned one by one from $x_1$ to $x_d$.}. To adapt the staircase feature to its boolean counterpart, we define the $d$ degree threshold staircase functions,
\begin{equation*}
    sc_d(x) \defeq \begin{cases}
         1 & \text{if } x_1 + x_1x_2 + ... + x_1...x_d \geq 0 \\
         -1 &  \text{else}.
     \end{cases}
\end{equation*}
 % \end{definition}
We show that threshold staircase function have the same structure property as the non-threshold version (see \cref{lemma_staircase}) and similar learning dynamic under cross entropy loss as their unthresholded versions (see \cref{fig:2_dynamics}). Note that $sc_d$ is unbiased when $d$ is an odd number. 

 It is evident that increasing the parameter $d$ in either of these function classes increase their complexity, which in turn is reflected in the convergence rate of the model. Our choice for these particular cases is strategic: both functions offer a solid ground for theoretical analysis, having been extensively examined in the context of deep learning \citep{barak_hidden_2023, daniely_learning_2020, edelman_pareto_2023, abbe_merged-staircase_2022, abbe_sgd_2023}, and despite the same degree $d$, parity features are computationally much harder (exponential in $d$) to learn than staircase functions (polynomial in $d$) \citep{abbe_sgd_2023}. Furthermore, staircase functions are arguably more representative of the intricacies of feature learning on real dataset due to their hierarchical structure, where learning lower degrees aids in advancing to higher ones, and their learning loss curves more closely resemble those encountered in real dataset.
 
\begin{figure*}[t]
    \centering
    \vskip 0.2in
    \includegraphics[width=\linewidth]{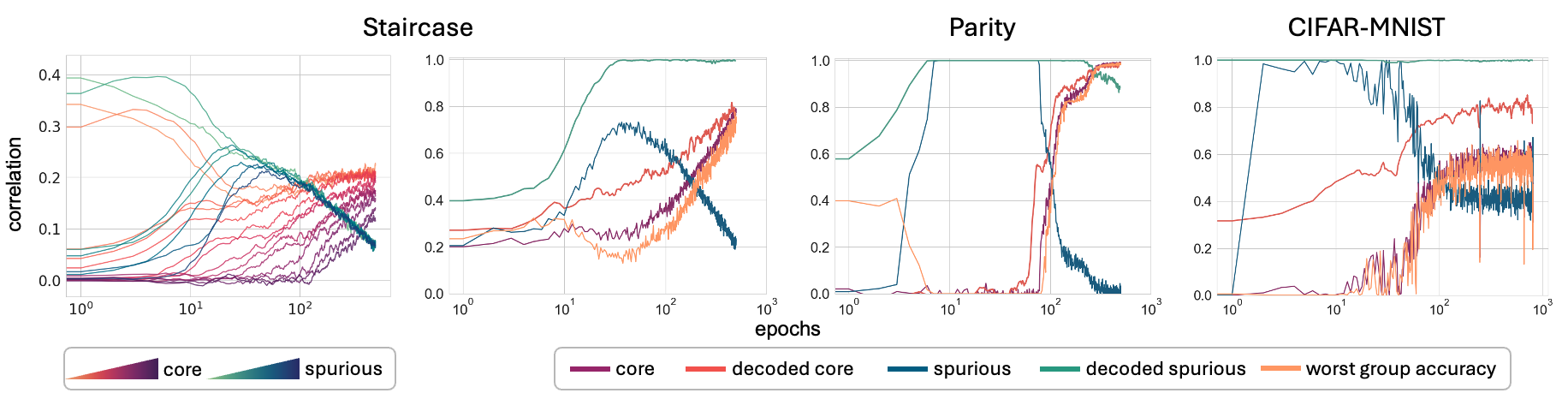}
    \vskip -0.15in
    \caption{%The upper row illustrates the learning process of a degree 14 staircase function without any spurious correlation. 
    Core/spurious correlation and decoded correlation dynamics of different datasets. Leftmost figure shows the fourier coefficients of both the spurious and core function are fitted from low (light color) to high (deep color) for the staircase function. All of the experiments have $\lambda = 0.9$. Staircase: $\deg(f_s)=7, \deg(f_c)=14$; Parity: $\deg(f_s)=4, \deg(f_c)=10$; CIFAR-MINST: (c) Truck-car (s) 01.}
    \label{fig:2_dynamics}
    % \vskip -0.2in
\end{figure*}

\paragraph{Why this spurious dataset? }
Numerous works have proposed different theoretical and experimental setups to study spurious correlation. It is noteworthy that spurious features are often used interchangeably with `shortcut' or `easier' features. However, different works have drastically different notions to encapsulate the easiness of a feature. For example, \citep{shah_pitfalls_2020} examines features along different dimensions, quantifying simplicity by the number of linear segments needed for perfectly separating the data. \citep{wen_toward_2021, yang_identifying_2023, sagawa_investigation_2020, chen_understanding_2023} encapsulate both spurious and core features as 1-bit vectors, gauging simplicity by the amount or variance of noise applied to each feature. Despite our framework bearing resemblance to previously proposed notions of simplicity, we distinguish ourselves by: (1) employing non-linear features for both spurious and core attributes, (2) providing a more general notion of feature complexity and allowing us to explore functions with different properties (3) providing a modular, lightweight implementation of our dataset. Additionally, our dataset allows us to provide theoretical explanation for numerous observed behaviours in learning dynamics under spurious correlation. 

We observe that popular semi-synthetic spurious datasets such as Waterbirds\citep{sagawa_distributionally_2020}, Colorful-MNIST \citep{zhang_correct-n-contrast_2022}, and Domino-Image \citep{shah_pitfalls_2020} share characteristic learning dynamics shown on the boolean feature datasets (see \cref{fig:2_dynamics}).  Therefore, our dataset serves as a good proxy to evaluate algorithms developed to deal with spurious features.
Beyond capturing behaviors of existing dataset, our dataset additionally provides precise control over the complexity and structure of the spurious and core functions, which is under-explored in prior datasets.

% \paragraph{Comparison to other theoretical spurious datasets.}

\section{Empirical Findings}
Here, we provide a comprehensive evaluation of a two-layer\footnote{Neural Networks with more layers share the same behavior. See Figure \ref{fig:depth}.} neural network (width 100) optimized using Batch Stochastic Gradient Descent with the cross entropy loss on the boolean features dataset under the online setting. The exact experimental setup can be found in the Appendix \ref{Appendix:Exp_configs}. 
We emphasize that our main focus here is on the online setting, and we provide more experimental findings regarding limited dataset size in the appendix. We mainly focus on two metrics to measure feature learning: 

\textit{Core and spurious correlation}: correlation between the model and core or spurious feature are measured by $\E_{x\sim \Dist{unif}}[f(x)\mathsf{sgn}(h(x))]$ where $\mathsf{sgn}$ is defined to be the sign function, $h$ is the model, and $f$ is either $f_s$ or $f_c$. Note $\Dist{unif}$ is a group balanced distribution as the functions we studied are unbiased. Thus the core correlation is exactly the mean group accuracy, a metric extensively used in the literature. Additionally, since the spurious correlation is symmetric in our setting, the core correlation closely matches the worst group accuracy.

\textit{Decoded core and spurious correlation} \citep{kirichenko_last_2023,hermann_what_2020, rosenfeld_domain-adjusted_2022, alain_understanding_2018}: we first retrain the last layer of a model with logistic regression to fit either the spurious or core function using a group balanced dataset. Then measure the corresponding correlation as above. The decoded correlation metric is used to capture the extent to which a feature's representation has been effectively learned by the model.

Although our primary focus is on the spurious Boolean dataset, we emphasize that our findings closely align with observations from other semi-synthetic datasets such as Waterbirds \citep{sagawa_distributionally_2020}, CMNIST\citep{zhang_correct-n-contrast_2022}, and Domino Image\citep{shah_pitfalls_2020}. For more detailed exploration of these datasets, see appendix \ref{Appendix:more experiments}.

\begin{figure*}[t]
    \centering
    \vskip 0.2in
    \includegraphics[width=0.9\linewidth]{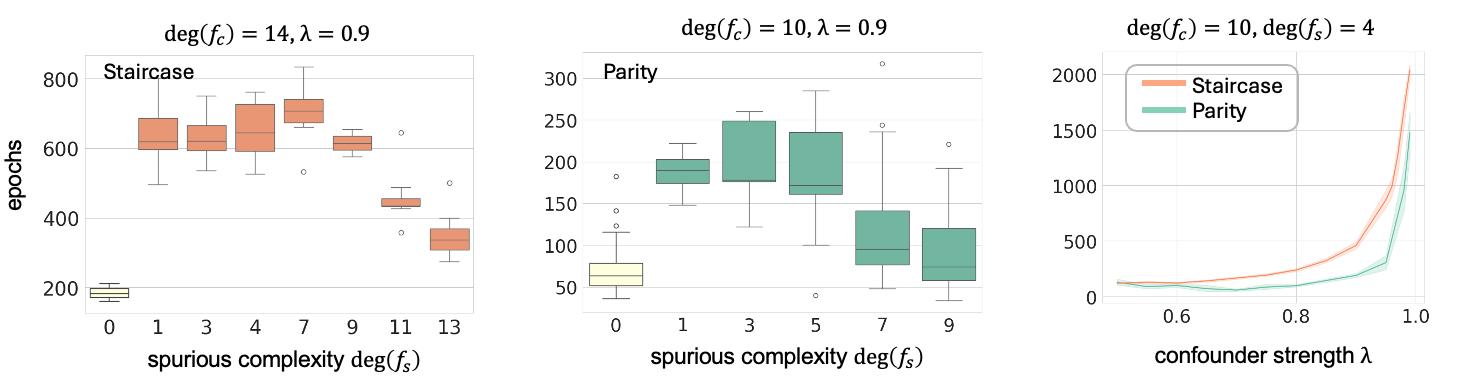}
    \vskip -0.15in
    \caption{Influence of confounder strength and complexity of spurious correlation on learning of core features. The $y$-axis shows the number of epochs required to reach $0.95$ core correlation. The $0$ degree bar indicate the epochs required to learn core feature when spurious correlation is not present i.e $\lambda=0.5$. Each bar of the left two plots is based on 30 repetitions of experiments.}
    \label{fig:impact_corr_comp}
    % \vskip -0.2in
\end{figure*}
 \paragraph{(R1) Simpler spurious features and higher correlation strength slow down the convergence rate of core feature learning} (Figure \ref{fig:impact_corr_comp}). We observe a concave U-shaped phenomenon in the relationship between the complexity of spurious features and convergence time, where lower complexity features slow down convergence. Remarkably, even when the spurious feature approaches the complexity of the core feature, the model's performance is still adversely affected by its presence. Additionally, we find that slower convergence in learning the core feature leads to poorer overall performance on limited-size datasets (see Table \ref{appendix:fig_finite_staircase}). This suggests that the existence of a spurious feature impact the sample complexity required for learning the core feature.

Our investigations indicate that the learning process remains relatively insensitive to the confounder strength until a certain threshold is reached. Beyond this point, there is a sudden and substantial increase in the computational time required to learn the core feature\footnote{Note that our experiments show that even when the confounder strength is as high as 0.99, the model eventually fits the core function perfectly. Refer to Appendix \ref{Appendix:confounder_strength} for more details}.
We hypothesize that this threshold phenomenon can be attributed to two factors. Firstly, different features possess varying learning signal strengths. In the simplest case, exemplified by the parity function, differences in gradient signals for features with different complexities are noticeable from initialization. The gradient of a spurious feature can only surpass that of the core feature if the spurious correlation exceeds a certain threshold value. Secondly, as we will explore later, when $\lambda$ is high, it becomes significantly more challenging for gradient descent to "unlearn" spurious neurons.

\paragraph{(R2)  Spurious and core features are learned by two separate sub-networks} (Figure \ref{fig:neurons}). There exists a classification of neurons into two groups, ``spurious neurons'' which have larger weights on the spurious index and ``core neurons'' which have larger weights on the core index in the late stage of learning. For both parity and staircase tasks, almost all spurious neurons remain focused on spurious coordinates, while core neurons, at the start, do not focus on spurious coordinates and gradually develop an emphasis on core coordinates. See Appendix \ref{Appendix:neurons} for more detail. \looseness=-1

In vision tasks, it becomes more challenging to identify spurious or core neurons. To address this, we retrain the last layer of the neural network to learn either the spurious or core function separately. We observe that the set of neurons with significant weights in both trials is indeed very small, suggesting that neurons are separated into two distinct networks, similar to the spurious Boolean case. Our studies indicate that non-causally related feature representations are perhaps disentangled (at least in the last layer) from the outset, which aligns with a common goal in the fairness and model explainability literature \cite{locatello_fairness_2019, higgins_towards_2018}. Therefore, it is of future interest to understand what conditions are sufficient for a model to learn disentangled representations under common training procedures.
\vskip -0.2in 

\begin{figure*}[t]

    \centering        
    \includegraphics[width=\linewidth]{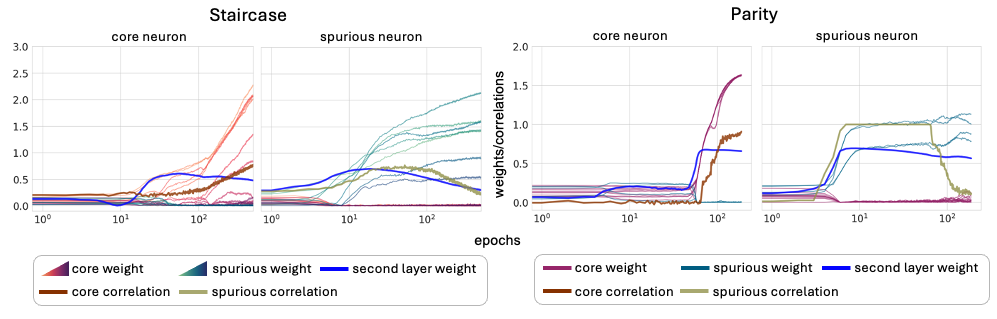}

    \caption{Each plot here shows the weight dynamic throughout training within a single selected neuron and each curve here corresponding to the weight dynamic on a single coordinate. The left two plots are for staircase with $\text{deg}(f_s)=7, \text{deg}(f_c)=14$ and the right two plots are for parity with $\text{deg}(f_s)=4, \text{deg}(f_c)=10$. For both experiment, $\lambda=0.9$. We see the neurons are separated into core and spurious neurons. Spurious neurons remain focus on learning spurious feature and core neuron eventually emerge and learns the core feature. }
    \label{fig:neurons}

\end{figure*}

% Staircase guanwen-SC-CS-9 ----------
% https://upenn.app.box.com/file/1322150423097
% core_feature_len: 14
% core_staircase_degree: 14
% spurious_feature_len: 14 
% spurious_staircase_degree: 7 
% confounder_strength: 0.9

% Parity exp-0906-7 ----------
% https://upenn.app.box.com/file/1303402251772
% core_feature_len: 10
% core_parity_degree: 10
% spurious_feature_len: 10
% spurious_parity_degrees: [4]
% confounder_strength_list: [0.9]

% CIFAR-MNIST exp-0906-7 ----------
% https://upenn.app.box.com/file/1315712447124
% core_dataset_name: cifar_truck_car
% spurious_dataset_name: mnist01
% confounder_strength: 0.9
% dataset_size: 10000
% use_pretrained_weights: false

\paragraph{(R3) Spurious correlation strength determine how well the spurious feature is memorized.} (\cref{fig:2_dynamics}, \cref{fig:mem_spurious_feature}). When $\lambda$ is high, the decoded spurious correlation and the total weight within the spurious subnetwork remains high even after extended training, both in the hidden and last layers. This phenomenon persists even when regularization is applied to the loss function. 

 Notably, when $\lambda$ is low, around 0.6 in our cases, the decoded spurious correlation and total weights on the spurious subnetwork decrease over time as the core feature is learned. In both cases, the learning process for spurious features plateaus when the core correlation starts to exceed the spurious correlation. Our observation thus illustrates another kind of in-distribution forgetting that occurs during training, contrasting with the established catastrophic forgetting, which happens when training on out-of-distribution (OOD) data. Thus to learn diverse features, it could be beneficial to identify and freeze such spurious neurons adaptively as have been done in \cite{kirkpatrick_overcoming_2017, ye_freeze_2023}.

\begin{figure}[t]
    \centering
    \setlength{\belowcaptionskip}{5pt} % Space below caption
    \includegraphics[width=0.7\textwidth]{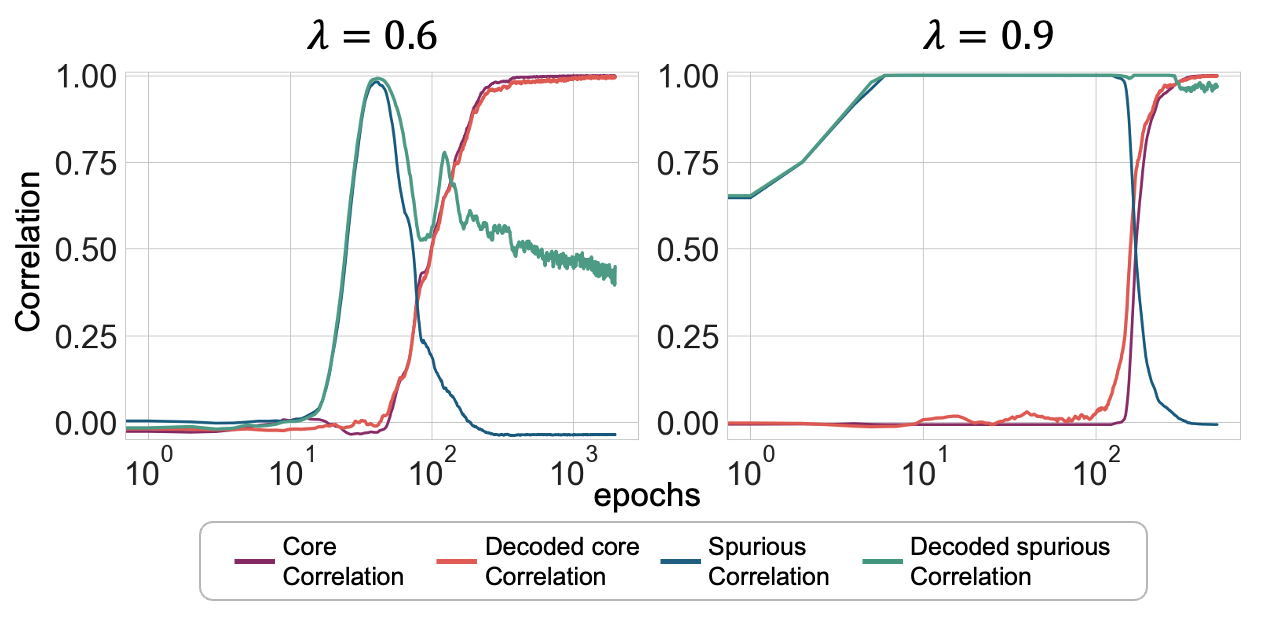}
    \caption{The two plots are produced by running experiments on parity cases under different $\lambda$, the focus here is on the decoded spurious correlation. The experiments are run under $\text{deg}(f_s)=4, \text{deg}(f_c)=10$. Left: we see when $\lambda$ is low, the spurious feature is being forgotten in the later stage of training. Right: when $\lambda$ is relatively high, the spurious feature is memorized once it is learned.}
    \label{fig:mem_spurious_feature}
\end{figure}
\paragraph{(R4) Last Layer Retraining works by decreasing reliance on spurious subnetwork.}
We observed that last layer retraining consistently improves the worst group accuracy or core function correlation, with the most significant performance boost occurring during the early stages of training (Figure \ref{fig:2_dynamics}). This improvement is attributed to a substantial decrease in the ratio of second-layer weights between spurious neurons and core neurons (Table \ref{table:core_spurious_ratios}) which is a consequence of R2 and R3.

Notably, our findings align with those in \citep{labonte_towards_2023}, where we observed that even retraining the last layer on the training dataset (with heavy $l1$ regularization) significantly enhances robustness. Furthermore, we found the performance boost is most significant when just a small amount of group-balanced data is used for LLR (\cref{fig:retrained_core_correlation_hard_staircase}).

\paragraph{(R5) Popular debiasing algorithms fail in more general settings.} (Figure \ref{fig:weakness_algo})
In scenarios where a spurious attribute is absent, debiasing algorithms typically adopt a two-stage approach \citep{liu_just_2021, yang_identifying_2023, nam_learning_2020, kim_biaswap_2021, kim_learning_2022}. They first train an initial model using Early Stop Empirical Risk Minimization (ERM). These algorithms diverge in the second stage, where different heuristics are applied to distinguish and separate minority group data based on insights from the initial model. Implicit in their approach is the assumption of an extreme bias toward simplicity in the spurious feature, expecting the early model to prioritize learning the spurious feature and providing valuable information for segregating minority group samples.

However, our investigation reveals that the benchmark datasets commonly used by these algorithms exhibit a \textbf{crucial dataset bias} towards having a much simpler spurious feature compared to the core feature. This bias creates a distinct separation between the learning phases of spurious and core features, as demonstrated in our parity case, allowing these methods to effectively separate minority groups (see Figure \ref{fig:weakness_algo}). We demonstrate that this separation may not hold true in many cases, particularly with limited datasets and spurious features of similar complexity to the core feature, as illustrated in the staircase case. To emphasize the practicality of our dataset, we introduce a domino-vision dataset with more challenging spurious and core features and shows the model is not able to improve further using early stopped model (see the caption of \cref{fig:real} for dataset configuration). To further assess the effectiveness of these debiasing algorithms, we employ Jaccard scores defined as $\frac{|A \cap B|}{|A \cup B|}$ and containment scores $\frac{|A \cap B|}{|A|}$, where $A$ represents the predicted minority group by the algorithm using an early stop model, and $B$ represents the ground truth minority group. These metrics allow us to evaluate the extent to which minority group data is included in the predictions. For a more detailed discussion of the weaknesses of previous debiasing algorithms and their performance on real datasets, please refer to Appendix \ref{Appendix:debiasing_algorithm_limitations}.

\begin{figure*}[ht]
    \centering
    \vskip 0.2in
    \includegraphics[width=1\linewidth]{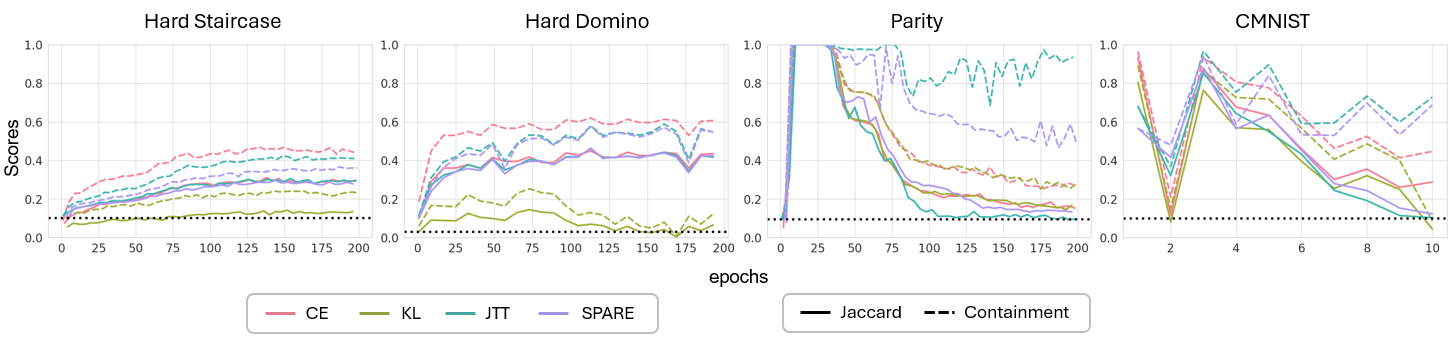}
    \vskip -0.15in
    \caption{%The upper row illustrates the learning process of a degree 14 staircase function without any spurious correlation. 
    The two plots on the right demonstrate that debasing methods JTT \citep{liu_just_2021} and Spare \citep{yang_identifying_2023} is able to infer points from the minority group successfully on datasets that share the characteristic of the parity case where there is an early spike in spurious correlation and the spurious feature is much easier than the core feature. While for the more challenging spurious staircase and Domino datasets shown in the left the highest Jaccard score remains below 0.5. We report the worst group accuracy is not improving after upsampling based on the inferred group. See \cref{Appendix:debiasing_algorithm_limitations} for more detail. The black dash line remark the minority group proportion in the training dataset.}
    \label{fig:weakness_algo}
    % \vskip -0.2in
\end{figure*}

    % \caption{Learning dynamics of widely-used spurious datasets. Staircase: $\lambda=0.9, \deg(f_s)=10, \deg(f_c)=14, \text{Dataset Size }60000$; Domino: $\lambda=0.95$, (c)cat-dog (s)truck-automobile, Dataset size 6000; Parity: $\lambda=0.9, \deg(f_s)=10, \deg(f_c)=4$; CMNIST, CelebA, CivilComments, MultiNLI: $\lambda=0.9$; WaterBirds: $\lambda=0.95$.}

\section{Theoretical Explanation}
We will focus on the case of parities for our theoretical analysis. We do not endeavor to present a comprehensive end-to-end analysis of the feature learning dynamics in the general spurious parity case i.e $f_c, f_s$ are different degree of parity functions. We stress that our understanding of end-to-end dynamics of feature learning in the boolean case is still very limited with only recent work \citep{glasgow_sgd_2023} providing an analysis on the end-to-end learning dynamics of 2-parity case. Furthermore, adding spurious correlations, introduces an additional phase of learning which is not tackled by these prior works. However, we hope to provide justification for each observation under certain assumptions at the beginning of each phase. We leave the full end-to-end case analysis to future work.

\paragraph{Setup and Notations.} We consider a two layer neural network with $p$ neurons as $h(x)\defeq \sum_{i=1}^p a_i \relu(w_i^\top x)$ where $\relu$ is the relu activation function $\relu(x) = \max(0,x)$. 
% We use $(a_w, w)$ refers to the second layer weight and hidden weights of a neuron.
$L_{\mathcal{D}}(h) \defeq \E_{(x,y)\sim \mathcal{D}}[\ell(h(x),y)]$ is defined as the population loss of the model on distribution $\mathcal{D}$. We will use cross entropy loss $\ell(\hat{y},y) \defeq -2\log(\sigmoid(y\hat{y}))$ where $\sigmoid(x) \defeq \frac{1}{1+e^{-x}}$ is the sigmoid function. 

\paragraph{Outline.} We begin by quantifying the "Fourier gap" which represents the difference in population gradient at initialization between the core and spurious features relative to the independent coordinates. The gap immediately implies that, with layer-wise training as proposed in \citep{barak_hidden_2023}, the spurious feature can be learned sufficiently well. After this phase, we estimate the influence of the learned spurious feature on core feature learning by analyzing the change in magnitude of the gradient. Lastly we show that even when the network has learned the core features, learned spurious features must still persist. For detailed calculations and proof, we refer the reader to \cref{Appendix:proofs}.

\paragraph{Spurious Feature is Learned First.}
% The gradient gap between the spurious and core coordinates at initialization provides a basis for a proof resembling the JTT model. This allows us to use the initially learned model to induce a balanced distribution for the learning of the core feature in the second model, employing techniques similar to those used in \citep{barak_hidden_2023}.
Following \cite{barak_hidden_2023}, at initialization, the Fourier gap on the spurious and core coordinates relative to the irrelevant coordinates is as follows.
\begin{lemma}[informal]
    Let $\xi_k = \hat{\Maj}([k])$ be the $k$-th Fourier coefficient of the $n=c+s+u$ variable Majority function. At initialization, there is a set of neurons such that the population gradient gap on the variables compared to the irrelevant variables\footnote{These quantities are negative because they refer to the gradient and when we update the weights using gradient descent, the sign will cancel to have a positive contribution on the weights.} are: 
    % \begin{enumerate}[noitemsep]
    %     \item \textbf{Spurious Variable}: $-\frac{1}{2}\xi_{c+1} - (\lambda - \frac{1}{2})\xi_{s-1}$,
    %     \item \textbf{Core Variable}: $-\frac{1}{2}\xi_{c-1} - (\lambda - \frac{1}{2})\xi_{s+1}$,
    %     \item \textbf{Independent Variable}: $-\frac{1}{2}\xi_{c+1} - (\lambda - \frac{1}{2})\xi_{s+1}$.
    % \end{enumerate}
    \begin{enumerate}
        % \vspace{-1.5ex}
        \item \textbf{Spurious Variable}: $-(\lambda - \frac{1}{2})(\xi_{s-1} - \xi_{s+1})$,
        \item \textbf{Core Variable}: $-\frac{1}{2}(\xi_{c-1} - \xi_{c+1})$.
    \end{enumerate}
\end{lemma}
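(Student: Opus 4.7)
The plan is to reduce the mixture-distribution gradient to uniform-distribution Fourier coefficients, then apply the standard correspondence between the ReLU derivative and the Majority function already used in \citep{barak_hidden_2023} to read the answer off coefficient by coefficient.

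First I would rewrite $\dlam$ as a density reweighting of $\duni$. Since both $f_c$ and $f_s$ are unbiased, the event $f_c(x_c)=f_s(x_s)$ has uniform probability $\tfrac{1}{2}$, which gives
\[
P_{\dlam}(x) \;=\; P_{\duni}(x)\bigl(1 + (2\lambda-1)\, f_c(x_c)\, f_s(x_s)\bigr).
\]
Writing $\chi_T(x) = \prod_{j\in T} x_j$ for the parity character and using $C, S$ for the core and spurious index sets, this lets me expand $\E_{\dlam}[h] = \E_{\duni}[h] + (2\lambda-1)\,\E_{\duni}[h\,\chi_C \chi_S]$ for any test function $h$. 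At initialization the network output is zero, so the cross-entropy derivative collapses to $-y$, and the $j$-th coordinate of the population gradient on neuron $w_i$ becomes
\[
-\tfrac{a_i}{2}\,\E_{\duni}\!\bigl[\chi_{C\triangle\{j\}}(x)\,\sigma'(w_i^\top x)\bigr] \;-\; \tfrac{a_i}{2}(2\lambda-1)\,\E_{\duni}\!\bigl[\chi_{S\triangle\{j\}}(x)\,\sigma'(w_i^\top x)\bigr],
\]
where I used $\chi_C(x)\cdot x_j = \chi_{C\triangle\{j\}}(x)$ and $\chi_C(x)^2=1$ on the boolean cube.

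Next I would invoke the Majority reduction for the ReLU derivative. For a neuron whose initial weights have the same sign on every relevant coordinate, $\sigma'(w^\top x) = \mathbf{1}[\sum_j x_j > 0] = \tfrac{1}{2}(1+\Maj(x))$, so by the permutation symmetry of $\Maj$,
\[
\E_{\duni}\!\bigl[\chi_T(x)\,\sigma'(w^\top x)\bigr] \;=\; \tfrac{1}{2}\,\xi_{|T|}
\]
for any nonempty $T$. Coordinatewise sign-flipping involutions reduce a generic initial weight to this canonical case at the cost of twisting $x$, so on a positive-measure set of neurons the Fourier coefficient depends only on $|T|$ as claimed.

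With this in hand, the three cases follow by substituting the sizes $|C\triangle\{j\}|$ and $|S\triangle\{j\}|$ for $j\in U,C,S$ respectively and taking differences against the $j\in U$ baseline. For the spurious gap the first term cancels because $|C\cup\{j\}|=c+1$ in both situations, and the second contributes $\tfrac{1}{2}(2\lambda-1)(\xi_{s-1}-\xi_{s+1})$, yielding $-(\lambda-\tfrac{1}{2})(\xi_{s-1}-\xi_{s+1})$ after absorbing the $-a_i$ prefactor; the core case is symmetric, with the $(2\lambda-1)$ piece cancelling and the first term giving $-\tfrac{1}{2}(\xi_{c-1}-\xi_{c+1})$. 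The main obstacle, and the reason the statement is only informal, is pinning down the precise ``set of neurons'': one needs a positive-measure family of initializations that simultaneously aligns the weight signs on both $C$ and $S$ so the two Fourier identities hold with the right orientation, which is a direct adaptation of the sign-alignment bookkeeping in \citep{barak_hidden_2023} and introduces no new ingredient beyond the uniform-to-$\dlam$ reweighting above.
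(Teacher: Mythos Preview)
Your approach is correct and lands on the same Fourier-of-Majority computation as the paper, but your route is a bit more direct. The paper first writes $\dlam$ as the mixture $2(1-\lambda)\duni + (2\lambda-1)\dsame$, then handles the $\dsame$ piece by expanding the product $\mathbbm{1}\{w_i^\top x>0\}\cdot\mathbbm{1}\{\chi_{[c]\cup[s]}(x)=1\}$ into four Fourier terms ($1$, $\chi_{[c]\cup[s]}$, $\Maj$, $\chi_{[c]\cup[s]}\Maj$) and computes each separately before recombining. Your density identity $P_{\dlam}=P_{\duni}\bigl(1+(2\lambda-1)\chi_C\chi_S\bigr)$ collapses all of that into the two terms $\E_{\duni}[\chi_{C\triangle\{j\}}\sigma']$ and $(2\lambda-1)\E_{\duni}[\chi_{S\triangle\{j\}}\sigma']$ in one stroke, after which the Majority reduction finishes immediately. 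The two computations are algebraically equivalent, so nothing is lost; your version just skips the detour through $\dsame$ and the four-term bookkeeping.

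One minor slip: with the paper's loss convention $\ell(\hat y,y)=-2\log\sigmoid(y\hat y)$ the derivative at zero is $-y$, not $-y/2$, so the prefactor in your displayed gradient should be $-a_i$ rather than $-\tfrac{a_i}{2}$. Your final gap expressions are nonetheless correct, so this appears to be a self-cancelling arithmetic inconsistency rather than a real error.
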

We know that $|\xi_{k}| \approx \Theta \left( n^{-(k-1)/2}\right)$  is monotonically decreasing with $k$, and thus we see the population gradient gap is exponentially higher for the spurious feature than the core feature with respect to the difference in complexity $c-s$ when $\lambda$ is large, which would imply the following:

\begin{theorem}[informal, \citep{barak_hidden_2023}]
    Layer-wise training of the two-layer neural network with SGD is able to learn the spurious parity function up to error $\epsilon$ with $\frac{n^{O(s)}}{\lambda}$ samples and time when $c \gg s$.  
\end{theorem}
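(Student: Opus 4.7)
The plan is to adapt the layer-wise SGD analysis of \citep{barak_hidden_2023} to the spurious distribution $\dlam$, using the Fourier-gap lemma above to show that in the $c \gg s$ regime the spurious coordinates overwhelmingly dominate the first-layer gradient at initialization. Since $|\xi_k| \approx n^{-(k-1)/2}$ is monotonically decreasing, the ratio between the spurious and the core signals from the lemma is $(\lambda-\tfrac{1}{2})\cdot n^{(c-s)/2}$, which is exponentially large in $c-s$; so on the constant fraction of neurons with the favorable sign-pattern from the lemma, the preferred first-step direction is essentially the spurious block $x_s$.

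Step 1 (identifying the spurious support). I would take a single large-batch gradient step on the first-layer weights, exactly as in \citep{barak_hidden_2023}. Standard Hoeffding/Bernstein concentration gives that the empirical gradient concentrates around the population gradient at rate $1/\sqrt{B}$; to separate the spurious signal of magnitude $(\lambda-\tfrac{1}{2}) n^{-(s-1)/2}$ both from sampling noise and from the much smaller core and irrelevant signals, I would choose $B = \tilde{\Theta}\!\left(n^{s-1}/(\lambda-\tfrac{1}{2})^2\right)$. In the regime $\lambda \geq \tfrac{1}{2}+\Omega(1)$ this matches the stated $n^{O(s)}/\lambda$ count after absorbing polylog factors. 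After this step, with high probability the ``good'' neurons have their $s$-block entries of $w_i$ separated in sign and magnitude from their $c$- and $u$-block entries.

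Step 2 (fitting the second layer). I would freeze the first layer and train only the $a_i$ via online SGD with the cross-entropy loss. Because the post-step first layer approximately implements a random-feature expansion whose span contains the $s$-variable parity (this is the random-feature argument of \citep{barak_hidden_2023}), the second-layer problem reduces to a convex logistic regression over a bounded-norm hypothesis class; standard online-convex-optimization regret bounds then give convergence of the population loss on $\dlam$ to within $\epsilon$ of the optimum in $\mathrm{poly}(1/\epsilon, p, n)$ additional samples. Converting the loss bound into correlation with $f_s$ under $\duni$ uses the Markov-chain structure of $(x_s, f_s(x_s), y, f_c(x_c), x_c)$ already noted in the dataset section.

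The main obstacle will be controlling the correlation between $x_s$ and $x_c$ induced by $\dlam$: this coupling introduces extra Fourier cross-terms into the first-step gradient that are absent from the i.i.d.\ uniform setting of \citep{barak_hidden_2023}. The Fourier-gap lemma already isolates the leading contribution, but I would have to carefully bound the residual cross-terms and verify that they are still dominated by the $(\lambda-\tfrac{1}{2})\,\xi_{s-1}$ signal when $c \gg s$. A secondary but routine issue is threading a union bound over neurons, initialization sign patterns, and concentration failure events so that the final sample and runtime complexity remains $n^{O(s)}/\lambda$.
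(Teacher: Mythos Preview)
Your proposal is correct and takes essentially the same two-phase approach as the paper's Appendix~B.1: a single large first-layer gradient step with Hoeffding concentration against the Fourier gap of Lemma~1 to isolate the spurious support, followed by convex SGD on the frozen second layer via the ReLU-parity construction of \citep{barak_hidden_2023}. The only notable addition in the paper is an explicit thresholding step after the first update (zeroing coordinates whose empirical gradient falls below the mean) to remove the core and irrelevant weights cleanly, and the cross-term concern you flag is handled directly by the full Fourier computation in the formal version of Lemma~1.
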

After the above layer-wise training, the model would become a Bayes-optimal predictor that depends only on the spurious coordinates. This corresponds to our empirical observation in the parity case when the model is fully correlated with the spurious feature and has not learned the core feature. We have:
\begin{lemma}\label{lem:bayes_opt}
    The Bayes-optimal classifier, with respect to the logistic loss, among the classifiers that depend only on spurious coordinates is $h_s(x_s) = \log\left(\frac{\lambda}{1- \lambda}\right) f_s(x_s)$.
\end{lemma}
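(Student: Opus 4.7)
\textbf{Proof proposal for Lemma~\ref{lem:bayes_opt}.} The plan is to reduce the population logistic risk to a pointwise minimization over the value of $f_s(x_s)$, which amounts to the classical Bernoulli logistic problem with parameter $\lambda$. Since $h$ depends only on $x_s$, I write
\begin{equation*}
\E_{(x,y)\sim \dlam}[\ell(h(x_s), y)] = \E_{x_s}\Big[\E_{y\mid x_s}\big[{-2}\log \sigmoid(y\, h(x_s))\big]\Big],
\end{equation*}
so it is enough to minimize the inner conditional loss separately for each $x_s$. Because $y = f_c(x_c)$ and, by construction, $y = f_s(x_s)$ on $\dsame$ while $y = -f_s(x_s)$ on $\ddiff$, the required conditional distribution is fully determined once we know $P_{\dlam}(\dsame \mid x_s)$.

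The first substantive step is showing $P_{\dlam}(\dsame \mid x_s) = \lambda$. I would use Bayes' rule together with the unbiasedness hypothesis $\E_{x\sim \duni}[f_c(x_c)] = \E_{x\sim \duni}[f_s(x_s)] = 0$: under $\duni$, the coordinates $x_c$ and $x_s$ are independent, so $P_{\duni}(f_c(x_c) = f_s(x_s) \mid x_s) = \tfrac{1}{2}$ regardless of $x_s$. Consequently the marginals $P_{\dsame}(x_s)$ and $P_{\ddiff}(x_s)$ are both uniform on $\{\pm 1\}^s$, and the mixture weight $\lambda$ is preserved in the conditional. This gives $P(y = f_s(x_s) \mid x_s) = \lambda$ and $P(y = -f_s(x_s)\mid x_s) = 1-\lambda$.

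With that in hand, write $z = f_s(x_s)\, h(x_s)$ (using $f_s(x_s)\in\{\pm 1\}$), so the conditional risk becomes
\begin{equation*}
\Phi(z) \defeq -2\lambda \log \sigmoid(z) - 2(1-\lambda)\log(1-\sigmoid(z)).
\end{equation*}
Differentiating, using $\sigmoid'(z) = \sigmoid(z)(1-\sigmoid(z))$, gives $\Phi'(z) = 2(\sigmoid(z) - \lambda)$, which vanishes uniquely at $z^\star = \log\tfrac{\lambda}{1-\lambda}$; strict convexity of $\Phi$ (its second derivative is $2\sigmoid(z)(1-\sigmoid(z))>0$) confirms this is the global minimum. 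Multiplying back by $f_s(x_s)$ yields $h_s(x_s) = \log\tfrac{\lambda}{1-\lambda}\, f_s(x_s)$, as claimed.

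I do not expect any serious obstacle here; the only step that requires a small computation rather than a one-line argument is verifying that the marginal of $x_s$ is identical under $\dsame$ and $\ddiff$, and this is where the unbiasedness assumption on $f_s, f_c$ (stated just after the definition of $\dlam$) is genuinely used. Everything else is the textbook derivation of the Bayes-optimal logit for a two-class problem under logistic loss.
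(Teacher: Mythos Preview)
Your proposal is correct and follows essentially the same route as the paper: condition on $x_s$, observe that $P(y=f_s(x_s)\mid x_s)=\lambda$, and minimize the resulting Bernoulli logistic loss pointwise via the first-order condition $\sigmoid(z)=\lambda$. If anything, you are slightly more careful than the paper in explicitly justifying why the conditional probability equals $\lambda$ (via the uniform marginal of $x_s$ under both $\dsame$ and $\ddiff$) and in invoking strict convexity rather than just convexity; the paper simply asserts these steps.
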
 
% We defer the proof to appendix (see \cref{Appendix:proof:bayes_spurious}).

\paragraph{Slow down of Core Feature Learning.} Suppose the network can be divided into a part that has learned this Bayes optimal and the remaining part, we show that having learned spurious feature leads to a reduction in the gradient in the remaining network of the core feature compared to the gradient if there was no spurious feature.
\begin{lemma}[informal]\label{lemma:slow_learning}
    Assume the model can be decomposed into a sub-network $h_s(x_s)$ that is at the Bayes optimal from \cref{lem:bayes_opt} and the remaining network $h(x)$ which is $\approx 0$. Then the gradient with respect to core weights in $h$ is $4\lambda(1-\lambda)$ smaller relative to the gradient if there was no spurious correlation. 
\end{lemma}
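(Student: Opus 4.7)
The plan is to explicitly compute the population gradient of the cross-entropy loss with respect to a weight $w_i$ of a neuron living inside the remaining sub-network $h$ (so that $w_i$ appears in $h$ but not in $h_s$), using the decomposition $H(x) = h_s(x_s) + h(x)$ with $h(x) \approx 0$. Since $\partial_{\hat y}\ell(\hat y, y) = -2y\,\sigmoid(-y\hat y)$ and $\nabla_{w_i} H(x) = \nabla_{w_i} h(x) = a_i\, \drelu(w_i^\top x)\, x$, the chain rule gives
\begin{equation*}
\nabla_{w_i} L_{\dlam} \;=\; -2a_i\, \E_{(x,y)\sim \dlam}\bigl[\,y\,\sigmoid(-y H(x))\,\drelu(w_i^\top x)\,x\,\bigr].
\end{equation*}
Replacing $H(x)$ by $h_s(x_s) = \log(\lambda/(1-\lambda))\, f_s(x_s)$ makes $\sigmoid(-y h_s(x_s))$ piecewise constant: using $f_s(x_s)^2 = 1$ one checks that it equals $1-\lambda$ on $\dsame$ (where $y f_s(x_s) = 1$) and $\lambda$ on $\ddiff$ (where $y f_s(x_s) = -1$).

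Next I would split the outer expectation using $\dlam = \lambda\,\dsame + (1-\lambda)\,\ddiff$. The two group weights multiply against the two sigmoid values to yield a common factor $\lambda(1-\lambda)$:
\begin{equation*}
\nabla_{w_i} L_{\dlam} \;\approx\; -2a_i\,\lambda(1-\lambda)\Bigl[\E_{\dsame}\bigl[y\,\drelu(w_i^\top x)\,x\bigr] + \E_{\ddiff}\bigl[y\,\drelu(w_i^\top x)\,x\bigr]\Bigr].
\end{equation*}
The final simplification invokes the mixture identity $\duni = \tfrac12\,\dsame + \tfrac12\,\ddiff$, which holds because $f_s$ is unbiased and the blocks $x_s, x_c$ are independent under $\duni$, so $\Pr_{\duni}[f_c(x_c) = \pm f_s(x_s)] = \tfrac12$. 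Since $y = f_c(x_c)$, the bracket collapses to $2\,\E_{\duni}[f_c(x_c)\,\drelu(w_i^\top x)\,x]$, and therefore
\begin{equation*}
\nabla_{w_i} L_{\dlam} \;\approx\; -4 a_i\,\lambda(1-\lambda)\, \E_{\duni}\bigl[f_c(x_c)\,\drelu(w_i^\top x)\,x\bigr].
\end{equation*}

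The ``no spurious correlation'' baseline corresponds to $\lambda = \tfrac12$, in which case $h_s \equiv 0$, $\sigmoid(-yH) \equiv \tfrac12$, and the same chain-rule derivation yields $-a_i\,\E_{\duni}[f_c(x_c)\,\drelu(w_i^\top x)\,x]$. Taking the ratio of magnitudes produces the claimed suppression factor $4\lambda(1-\lambda) \in [0,1]$, attained at $\lambda = 1/2$ and vanishing as $\lambda \to 1$; note that the direction of the gradient is preserved, matching the empirical observation that the core subnetwork keeps learning, only more slowly.

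The main obstacle is making the ``$\approx$'' rigorous. Since $\sigmoid$ is $1/4$-Lipschitz, swapping $H$ for $h_s$ inside the expectation introduces an additive error bounded by $\tfrac12 |a_i|\,\E\bigl[|h(x)|\,\drelu(w_i^\top x)\,\|x\|\bigr]$, which must be absorbed into the $o(1)$ slack permitted by the ``$h \approx 0$'' hypothesis; a secondary subtlety is formally verifying the mixture identity $\duni = \tfrac12\dsame + \tfrac12\ddiff$ from the coordinate-block independence and unbiasedness already baked into the dataset. Both are routine given the assumptions, but the statement of the lemma should specify the regime (e.g.\ $\|h\|_\infty \le \epsilon$) under which the factor $4\lambda(1-\lambda)$ is exact up to $O(\epsilon)$.
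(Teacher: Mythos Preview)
Your proof is correct and follows essentially the same route as the paper's: split $\dlam$ into $\dsame$ and $\ddiff$, use the Bayes-optimal identity $\sigmoid(yh_s)=\lambda$ (equivalently $\sigmoid(-yh_s)=1-\lambda$ on $\dsame$ and $\lambda$ on $\ddiff$) to pull out the common factor $\lambda(1-\lambda)$, then collapse to a uniform expectation and compare to the $\lambda=\tfrac12$ baseline. The only cosmetic differences are that the paper writes the derivative in terms of $f_s$ rather than $y$ and converts to $\duni$ via the indicator $\tfrac{1+f_cf_s}{2}$ instead of your mixture identity $\duni=\tfrac12\dsame+\tfrac12\ddiff$ (these are equivalent, and the latter is exactly the relation $P_{\dsame}+P_{\ddiff}=2P_{\duni}$ established in the paper's distribution lemma); your explicit Lipschitz control of the $h\approx 0$ error is a nice addition the paper omits.
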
 
This implies that the core feature will continue to increase but at a slower convergence rate which depends on the correlation strength $\lambda$. Therefore, if the spurious feature is simpler, then the core feature gradient reduces to the lower value earlier, leading to even slower convergence rate. This is consistent with our empirical observations. 

% Note that our lemma can only show that core gradient will be lower, but does not give us the actual value of the gradient at this point.

\paragraph{Persistence of Learned Spurious Features.}
Another empirical observation is the persistence of spurious features despite the core feature being learned. Here we present a justification for this in an idealized setup. We consider a neural network that can be decomposed into two sub-networks, \(h_s\) and \(h_c\), such that \(h(x) = h_s(x) + h_c(x)\) where $h_s(x) = \sum_{i \in S} a_i \relu(w_i^\top x_s), h_c(x) = \sum_{i \in C} b_i \relu(v_i^\top x_c)$. The $S, C$ represent the index set of spurious and core neurons respectively. Note that we base this assumption on the empirical finding we have in (R3). 

We further make the assumption that the spurious feature is learned and being Bayes optimal throughout the later stage of training (as observed in our experiments) while the core feature is being learned and the model gives homogeneous response to $x_c$. In particular we assume the following:
\begin{assumption}[Complete correlation to spurious and core features]
    For all $x$, we have $h_s(x)= \gamma_s f_s(x)$ and $h_c(x) = \gamma_cf_c(x)$.
\end{assumption}
The reason we require that the model gives homogesnous response to $x_c$ is due to the property of cross entropy loss which has caused the slow down to be non-linear for different samples if response to $x_c$ is not the same.

We first show that spurious neurons are "dead" as they do not learn core feature in the later stage. \begin{lemma}[informal]
     If $\sum_{i \in C}|w_i| < |w_j|$ for all $j \in S$. Then the gradient on core coordinates of the spurious neuron will be 0.
\end{lemma}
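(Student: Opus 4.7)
The plan is to compute the population gradient on a core coordinate of a spurious neuron directly and show that it vanishes by a parity symmetry argument. Fix a spurious neuron $j\in S$ and a core coordinate index $k$. Differentiating the cross-entropy population loss gives
\[
\frac{\partial L_{\dlam}}{\partial w_{j,k}}
\;=\; -2\,a_j\,\E_{(x,y)\sim \dlam}\!\left[\,y\,\sigmoid(-y\,h(x))\,\relu'(w_j^\top x)\,x_{c,k}\,\right],
\]
since $\ell'(\hat y,y) = -2y\,\sigmoid(-y\hat y)$ under the cross-entropy loss. The stated dominance condition $\sum_{i\in C}|w_i|<|w_j|$ is exactly what is needed to force the sign of $w_j^\top x$ to be controlled by the spurious block, so that $\relu'(w_j^\top x)$ is $x_s$-measurable and can be pulled outside a conditional expectation on $x_s$.

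Next, I would invoke the complete-correlation assumption $h(x)=\gamma_s f_s(x_s)+\gamma_c f_c(x_c)$ together with $y=f_c(x_c)$. Then
\[
y\,\sigmoid(-y\,h(x)) \;=\; f_c(x_c)\,\sigmoid\bigl(-\gamma_s f_s(x_s)f_c(x_c)-\gamma_c\bigr)
\]
depends on $(x_s,x_c)$ only through $f_s(x_s)$ and $f_c(x_c)$, taking just two distinct values on the events $\{f_c(x_c)=f_s(x_s)\}$ and $\{f_c(x_c)\neq f_s(x_s)\}$. Using the explicit conditional law $P_{\dlam}(x_c\mid x_s) = 2^{-c}\bigl(2\lambda\,\ind[f_c(x_c)=f_s(x_s)] + 2(1-\lambda)\,\ind[f_c(x_c)\neq f_s(x_s)]\bigr)$, which follows from the definition of $\dlam$ and the unbiasedness of $f_c,f_s$, the inner conditional expectation then reduces to a $\lambda$-weighted linear combination of the two level-set sums
\[
S_{\pm}(x_s)\;\defeq\;\sum_{x_c:\,f_c(x_c)=\pm f_s(x_s)} x_{c,k}.
\]

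Finally, in the parity setting $f_c=\chi_d$ with $d\ge 2$, both $\E_{\mathrm{Unif}}[x_{c,k}]$ and $\E_{\mathrm{Unif}}[\chi_d(x_c)\,x_{c,k}]=\hat\chi_d(\{k\})$ vanish (the latter because $\{k\}\neq[d]$), so by averaging and differencing one gets $\E_{\mathrm{Unif}}[x_{c,k}\mid \chi_d(x_c)=\pm 1]=0$; equivalently $S_{+}(x_s)=S_{-}(x_s)=0$ for every $x_s$. The inner conditional expectation is therefore identically zero, and the whole gradient vanishes as claimed. The main obstacle I anticipate is the bookkeeping around the conditional measure $P_{\dlam}(x_c\mid x_s)$ and carefully justifying that the dominance condition truly makes $\relu'(w_j^\top x)$ a function of $x_s$ alone; once those are pinned down, the cancellation is a short Fourier-symmetry argument specific to parity.
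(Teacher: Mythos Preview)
Your proposal is correct and follows essentially the same approach as the paper: both use the dominance condition to reduce $\relu'(w_j^\top x)$ to a function of $x_s$ alone, then exploit the vanishing of $\E_{\mathrm{Unif}}[x_{c,k}]$ and $\E_{\mathrm{Unif}}[\chi_{[c]}(x_c)\,x_{c,k}]$ for parity of degree at least two to obtain the cancellation. The paper organizes the calculation via the $\dsame/\ddiff$ split together with the indicator identity $\ind[f_c=f_s]=\tfrac{1+f_cf_s}{2}$, whereas you condition on $x_s$ and work through the explicit conditional law $P_{\dlam}(x_c\mid x_s)$ and the level-set sums $S_\pm$; these are equivalent bookkeepings of the same Fourier argument, and your explicit flagging of the $\relu'$ measurability step is if anything more careful than the paper's own treatment.
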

The lemma would also suggest that learned spurious neurons occupy part of the neural network capacity and if the network is of small size, the core feature may not be learned at all. 

On the other hand, the population loss of the model on \(\dlam\) is given by the equation: 
\[
L_{\dlam}(h) = -\lambda \log(\sigmoid(\gamma_c + \gamma_s)) - (1-\lambda) \log(\sigmoid(\gamma_c - \gamma_s)).
\]
The loss function is convex when \(\lambda\) is within the range \([0,1]\) with respect to $\gamma_s$. The optimal point occurs when \(\gamma_c \to \infty\), at which point \(\gamma_s \to 0\). However, in practice, due to bounded iterations of optimization, this ideal scenario is unattainable. Instead, \(\gamma_c\) will inevitably be less than some constant. In such cases, a Bayes optimal model that considers both features will have a positive value for \(\gamma_s\) given that the spurious feature has been learned in the early stage. We conduct numerical experiment (see \cref{fig:gradient_plot}) to illustrates the optimal value of \(\gamma_s\) for varying \(\lambda\) and \(\gamma_c\) and the slow down ratio of core feature gradient. The slow down ratio can be formulated as $\frac{2\lambda(1-\sigmoid(\gamma_s^*+\gamma_c))}{1-\sigmoid(\gamma_c)}$ where the numerator is gradient toward core feature under spurious distribution and the denominator is the same when spurious and core feature is uncorrelated or $\lambda=0.5$. We use $\gamma_s^*$ to denote the optimal value of spurious margin under a fixed $\gamma_c$ and $\lambda$.

\begin{figure}
    \centering
    \includegraphics[width=0.7\textwidth]{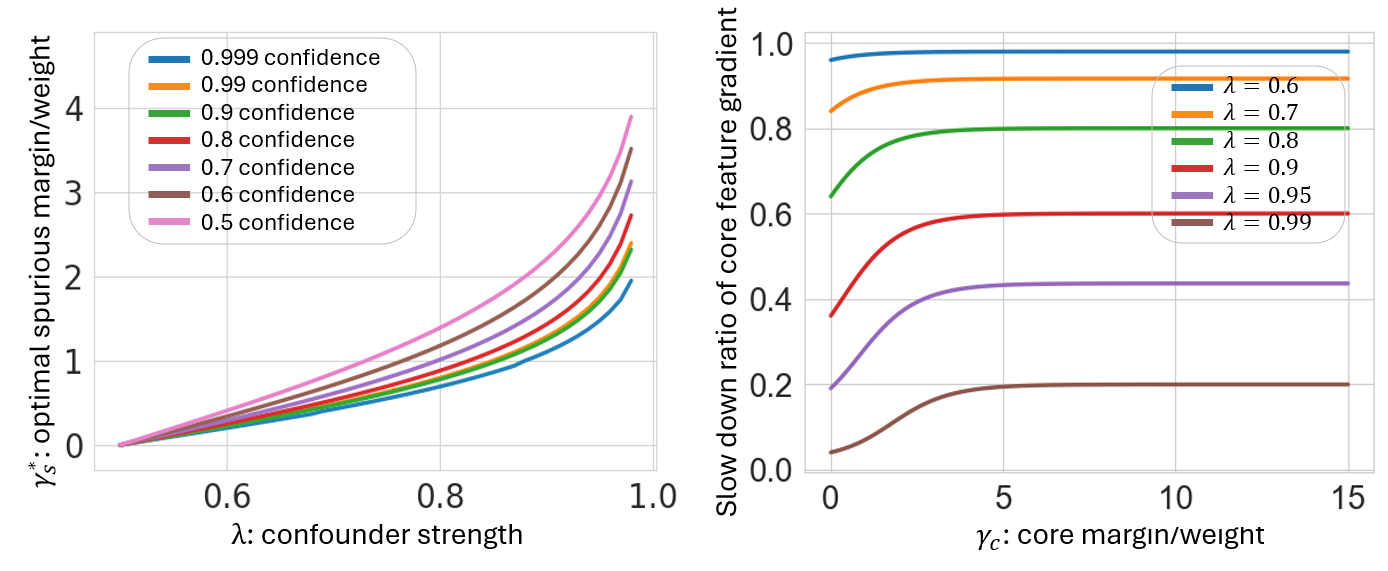}
    \caption{Optimal values of spurious margin under different $\lambda$ with varying confidence core margin. The left plot shows how spurious weights changes with varying $\lambda$ and confidence on core feature. The plot implies retraining a model with more balanced data points would yield most significant improvement when $\lambda$ is high and core feature is learned poorly. The right plot shows how core feature learning is slowed down by the learned spurious feature. Note that under the cross-entropy loss, the margin represents the confidence level of a model, hence $\textbf{confidence} = \phi(\gamma) \leq 1$. For $\gamma_c=15$, we have $\textbf{confidence} > 0.99999$, which is likely to be an empirical upper bound for most iterative optimization methods. The plots are generated by fixing $\lambda$ and $\gamma_c$ at various values and optimizing $\gamma_s$ with respect to $L_{\dlam}$.}
    \vskip -0.2in
    \label{fig:gradient_plot}
\end{figure}
The left plot in \cref{fig:gradient_plot} offers insights into the efficacy of LLR (or data balancing retraining in general) when both the spurious and core features can be adequately learned. Incorporating minority group data points effectively diminishes the convergence point of the spurious margin by reducing $\lambda$, subsequently decreasing the weights of the spurious subnetwork, as evidenced by our findings. Additionally, the plot illustrates that LLR yields more pronounced improvements when the confidence in the core feature is low, as observed in (R4) during the early stages of learning. This observation aligns with the findings in \cite{labonte_towards_2023}, where the addition of a small number of minority examples results in the most significant improvement. The right plot demonstrates that the slow down is less significant in the later stage of training although $\lambda$ plays a key role in determine the empirical convergence point. It is noteworthy that the values of $\gamma_s$ at $\gamma_c = 0$ are equal to the value calculated in \cref{lem:bayes_opt}, thereby offering an estimate of the weight dynamics within the spurious network after \cref{lem:bayes_opt}.
% , where core feature start to be learned gradually by the core subnetwork.

% Regarding a feasible core feature, as training progresses, we anticipate the model to gain more confidence in its overall predictions. If a spurious feature is learned early in the training process, the right plot offers a reasonable estimate of its weight dynamics as core feature learning progresses, where the value on $y$-axis corresponding to the value we calculated in \cref{lem:bayes_opt}. Notably, we observe that the progress of core feature learning has less impact on the spurious feature compared to $\lambda$. Thus, under high $\lambda$, if the spurious feature has been sufficiently learned early on, it will exert a lasting slowdown effect on the learning of the core feature and it will likely be memorized.

% (Figure \cref{fig:gradient_plot}).

\section{Conclusion}

Our study uses a detailed set of experiments on the boolean feature dataset to better understand how spurious features affect the learning of core features. We show that the dataset is a valuable tool in highlighting the shortcomings of previous algorithms that only perform well with simpler, less challenging spurious features. Our dataset assumes that spurious and core features are completely separate, which might not always be the case in real world datasets, thus improving our dataset to more closely mirror real-world complexities is an exciting opportunity for future research.

On the theoretical side, studying the end-to-end dynamics with spurious features is a technically interesting problem, and may require new tools. Our analysis currently assumes certain well-motivated conditions to bypass some parts of the unknown underlying feature learning process, and we believe that rigorously proving them would be challenging. Notably, showing that the spurious and core networks remain disjoint seems particularly non-trivial to show. A key finding of our analysis is the persistence of spurious feature weights, which tend to converge to significant values when $\lambda$ is high even if the model has learned the core feature confidently. Our investigation reveals that retraining the model with more balanced dataset effectively reduces the weights on the spurious network. While we refer to the observed correlation of another feature as "spurious" in this context, it's essential to recognize that there are instances where correlated features can be benign and offer valuable insights into the core feature. In such cases, we may aim to learn both the spurious and core features.
% Notably, the main assumptions we made in deriving the formula $L_{\dlam}(h)$ that both features can be learned sufficiently well might only works when the spurious feature is learned first. If the core feature is learned first, then there will be little gradient signal for the network to learn the spurious feature which can be consider as a reverse scenario as we have shown in \cref{lemma:slow_learning}. 

\section*{Acknowledgements}
 We thank Ben L. Edelman, Vaishnavh Nagarajan, and the anonymous reviewers of the M3L workshop at NeurIPS and ICML for their helpful suggestions to improve the paper.

\bibliography{reference}
\bibliographystyle{alpha}

% OPTIONAL: relabel equations in the appendix
\renewcommand{\theequation}{\thesection.\arabic{equation}}

\newpage
\appendix

\addcontentsline{toc}{section}{Appendix} % Add the appendix text to the document TOC
\part{Appendix} % Start the appendix part
\parttoc % Insert the appendix TOC
\newpage

\section{Background}
\subsection{Boolean Function Analysis Background}\label{background}
\counterwithin{theorem}{section}
\setcounter{theorem}{0}
See \citep{odonnell_analysis_2021} for a comprehensive review for boolean function analysis. We only include the most important tools here. \begin{theorem}
    Any boolean function $f:\{+1, -1\}^n \to \mathbbm{R}$ can be decomposed into an orthogonal basis \begin{equation*}
        f(x) = \prod_{S\in[n]} \hat{f}(S)\chi_S(x)
    \end{equation*}
    where $\chi_S(x) = \prod_{i\in S} x_i$ and note we have $E[\chi_S(x)\chi_{S'}(x)] = 0$ for all $S'\neq S$. Thus we have $\hat{f}(S) = E[f(x)\chi_S(x)]$.
\end{theorem}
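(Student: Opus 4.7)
The plan is to prove this standard Fourier expansion theorem by establishing that the characters $\{\chi_S\}_{S\subseteq[n]}$ form an orthonormal basis of the $2^n$-dimensional inner product space of real-valued functions on $\{\pm 1\}^n$, equipped with the inner product $\langle f, g\rangle \defeq \E_{x\sim \duni}[f(x)g(x)]$. (I note that the formula as stated uses $\prod$, but this appears to be a typographical substitution for $\sum$, which is the intended sum over subsets $S\subseteq[n]$; my proof will produce the sum-form decomposition.) Once orthonormality and a dimension count are in hand, the decomposition and the coefficient formula follow by standard linear algebra.

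First I would verify orthonormality of the characters. For any $S, S' \subseteq [n]$, using that $x_i \in \{\pm 1\}$ satisfies $x_i^2 = 1$, one has
\begin{equation*}
\chi_S(x)\chi_{S'}(x) = \prod_{i \in S} x_i \prod_{j \in S'} x_j = \prod_{k \in S \triangle S'} x_k = \chi_{S \triangle S'}(x).
\end{equation*}
Taking expectation under $\duni$ and using independence of coordinates, $\E_{x\sim \duni}[\chi_T(x)] = \prod_{k \in T}\E[x_k] = 0$ whenever $T \neq \emptyset$, and equals $1$ when $T = \emptyset$. Thus $\langle \chi_S, \chi_{S'}\rangle = \mathbf{1}[S=S']$, which proves both orthogonality and unit norm.

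Next I would perform a dimension count to upgrade orthonormality to a basis statement. The vector space of functions $f:\{\pm 1\}^n \to \R$ has dimension $2^n$, since each function is determined by its $2^n$ values on the hypercube. There are exactly $2^n$ subsets $S \subseteq [n]$, and an orthonormal family of $2^n$ vectors in a $2^n$-dimensional inner product space is automatically a basis (an orthonormal set is linearly independent, and a linearly independent set of size equal to the dimension spans). Therefore every $f$ admits a unique expansion $f = \sum_{S\subseteq[n]} \hat{f}(S)\chi_S$.

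Finally, to recover the coefficient formula, I would take the inner product of both sides of the expansion with $\chi_{S'}$ and invoke orthonormality:
\begin{equation*}
\E_{x\sim \duni}[f(x)\chi_{S'}(x)] = \sum_{S\subseteq[n]} \hat{f}(S)\,\langle \chi_S, \chi_{S'}\rangle = \hat{f}(S'),
\end{equation*}
yielding the stated formula $\hat{f}(S) = \E[f(x)\chi_S(x)]$. There is no real obstacle here — the only subtlety is being careful with the dimension-counting argument and the treatment of $S = \emptyset$ (where $\chi_\emptyset \equiv 1$), and flagging the apparent $\prod$/$\sum$ typo so that the final expansion is presented correctly.
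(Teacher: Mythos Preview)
Your proof is correct and is the standard argument (orthonormality of the characters plus a dimension count). The paper itself does not prove this theorem; it is stated as background with a reference to O'Donnell's textbook, so there is no alternative approach in the paper to compare against. Your flag of the $\prod$/$\sum$ typo is also accurate.
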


\subsection{Properties of Threshold Staircase Function}
\begin{lemma}\label{lemma_staircase}
     For every term $S$ inside $sc_d(x)$, with $d\geq 2$, we have \begin{equation*}
         \Hat{sc_d}(S) = \begin{cases}
             \frac{\binom{d-1}{\frac{d}{2}-1}}{2^{d-1}} & \text{if } d \text{ even} \\ 
             \frac{\binom{d-2}{\frac{d-1}{2}}}{2^{d-2}} & \text{if } d \text{ odd}
         \end{cases}
     \end{equation*}
 \end{lemma}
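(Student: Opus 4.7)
The plan is to exploit a nice change of variable that collapses the staircase sum into an ordinary symmetric sum. Define $y_i \defeq x_1 x_2 \cdots x_i$ for $i = 1, \ldots, d$. This is a bijection on $\{\pm 1\}^d$ since it can be inverted by $x_1 = y_1$ and $x_i = y_{i-1} y_i$ for $i \geq 2$, and it preserves the uniform distribution. Under this substitution, the staircase argument $x_1 + x_1 x_2 + \cdots + x_1 \cdots x_d$ is exactly the symmetric sum $Z \defeq y_1 + y_2 + \cdots + y_d$, so $sc_d(x)$ becomes the threshold majority $\mathrm{Maj}_d(y) = \sign(Z)$ (with the convention $\sign(0) = 1$). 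Crucially, for every $k \in \{1, \ldots, d\}$, the monomial $\chi_{[k]}(x) = x_1 x_2 \cdots x_k$ is precisely $y_k$, so the Fourier coefficients at the stated terms become single-coordinate expectations
\[
\hat{sc_d}([k]) = \E_y[\,\mathrm{Maj}_d(y)\, y_k\,].
\]

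Next I would use the permutation-symmetry of $\mathrm{Maj}_d$ in its inputs: since the value above is independent of $k$, averaging over $k$ and pulling the sum inside gives
\[
\hat{sc_d}([k]) \;=\; \tfrac{1}{d}\,\E[\,\mathrm{Maj}_d(y) \cdot Z\,] \;=\; \tfrac{1}{d}\,\E[\,|Z|\,],
\]
where the last equality uses the pointwise identity $\sign(Z)\cdot Z = |Z|$ (which holds at $Z=0$ trivially, regardless of the tie-breaking convention). This already establishes that the claimed coefficient is the same for each staircase term and reduces the lemma to evaluating $\E[|Z|]$ for $Z$ a sum of $d$ independent Rademacher variables.

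Finally, I would evaluate $\E[|Z|]$ by the classical identity $\E[|Z|] = d\binom{d-1}{\lfloor (d-1)/2\rfloor}/2^{d-1}$. The cleanest route is to write $\E[|Z|] = 2\E[\max(Z,0)]$ (using $\E[Z]=0$) and compute
\[
\E[\max(Z,0)] \;=\; 2^{-d}\sum_{k > d/2}(2k-d)\binom{d}{k},
\]
then apply the telescoping identity $(2k-d)\binom{d}{k} = d\binom{d-1}{k-1} - d\binom{d-1}{k}$. The sum collapses to $d\binom{d-1}{\lfloor (d-1)/2\rfloor}/2^d$, giving the desired closed form. Dividing by $d$ yields $\binom{d-1}{\lfloor (d-1)/2\rfloor}/2^{d-1}$; for $d$ even this is exactly $\binom{d-1}{d/2-1}/2^{d-1}$, and for $d$ odd I would rewrite using the Pascal/symmetry identity $\binom{d-1}{(d-1)/2} = 2\binom{d-2}{(d-1)/2}$ to obtain $\binom{d-2}{(d-1)/2}/2^{d-2}$, matching the two branches of the lemma.

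The conceptual heart of the argument is the change of variable in the first paragraph, which turns a non-symmetric boolean function on $x$ into a symmetric one on $y$; once this is in hand, the rest is mechanical. The only step that requires any real care is the telescoping evaluation of $\E[|Z|]$, and there one must be attentive to the parity of $d$ in identifying the correct central binomial coefficient and in handling the $Z=0$ contribution (which, happily, does not affect $|Z|$).
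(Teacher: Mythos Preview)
Your proof is correct and follows the same approach as the paper's sketch: both observe that the staircase terms $y_k = x_1\cdots x_k$ are independent Rademacher variables, so $\hat{sc_d}([k])$ equals the degree-$1$ Fourier coefficient of the $d$-variable majority function. The paper stops at this reduction and appeals to the known value of $\hat{\Maj_d}(\{k\})$, whereas you additionally carry out the explicit telescoping evaluation of $\E[|Z|]/d$, which is a welcome level of detail.
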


\begin{proof}
    We outline the proof of staircase function here. Note for each of the term in $x_1+x_1x_2+x_1x_2x_3$, they are independently distributed. Thus for terms in the staircase the fourier coefficent of them for the threshold staircase is the same as the majority function for $\Hat{Maj_d}(k)$ where $k=1$. Further, we have $\sum_{s\in S} \Hat{sc_d}(s)^2 \to \frac{2}{\pi}$ as $d \to \infty$ \citep{odonnell_analysis_2021} (ch5.3).
\end{proof}
\subsection{Properties of Boolean spurious distribution}
\counterwithin{lemma}{section}
\setcounter{lemma}{0}
\begin{lemma} \label{dist_property}
    Given a spurious boolean distribution defined previously, if either of the feature is unbiased. Then the distribution is equivalent to a mixture of $2(1-\lambda)\Dist{unif} + (2\lambda-1)\Dist{same}$. 
\end{lemma}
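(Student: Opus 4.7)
The plan is to express $\ddiff$ as a linear combination of $\duni$ and $\dsame$, then substitute into the definition of $\dlam$. The key enabling fact is that unbiasedness of one feature forces the events $\{f_c(x_c) = f_s(x_s)\}$ and $\{f_c(x_c) \neq f_s(x_s)\}$ to each have uniform probability $1/2$.

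First I would verify this probability computation. Since $x_c$ and $x_s$ are disjoint coordinate blocks of a uniform boolean vector, they are independent under $\duni$, so
\[
P_{\duni}(f_c(x_c) = f_s(x_s)) = \sum_{b \in \{\pm 1\}} P(f_c(x_c)=b)\, P(f_s(x_s)=b).
\]
If, say, $f_c$ is unbiased so that $P(f_c(x_c)=+1) = P(f_c(x_c)=-1) = 1/2$, this sum collapses to $\tfrac{1}{2}(P(f_s=+1)+P(f_s=-1)) = \tfrac{1}{2}$, regardless of the bias of $f_s$. The symmetric argument handles the case where $f_s$ is unbiased.

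Next, I would use the law of total probability. For any measurable $A$,
\[
P_{\duni}(A) = P_{\duni}(A \mid f_c=f_s)\,P_{\duni}(f_c=f_s) + P_{\duni}(A \mid f_c\neq f_s)\,P_{\duni}(f_c\neq f_s) = \tfrac{1}{2}P_{\dsame}(A) + \tfrac{1}{2}P_{\ddiff}(A),
\]
which gives the identity $\duni = \tfrac{1}{2}\dsame + \tfrac{1}{2}\ddiff$, equivalently $\ddiff = 2\duni - \dsame$. Substituting into the definition $\dlam = \lambda \dsame + (1-\lambda)\ddiff$ yields
\[
\dlam = \lambda \dsame + (1-\lambda)(2\duni - \dsame) = 2(1-\lambda)\duni + (2\lambda - 1)\dsame,
\]
which is the claimed decomposition. (Note this is a signed combination of measures when $\lambda < 1/2$; the statement is really about equality of measures as linear functionals, which is why assuming $\lambda \ge 1/2$ as in the main text is natural.)

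No step here looks like a real obstacle; the only subtle point is recognizing that unbiasedness of a single feature, together with the independence of $x_c$ and $x_s$ under $\duni$, is exactly what forces $P_{\duni}(f_c = f_s) = 1/2$. I expect the entire argument fits in a few lines once that observation is stated.
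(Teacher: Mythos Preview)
Your proposal is correct and follows essentially the same approach as the paper: both arguments reduce to showing $\dsame + \ddiff = 2\,\duni$ (equivalently $\duni = \tfrac{1}{2}\dsame + \tfrac{1}{2}\ddiff$), and both rely on the observation that unbiasedness of one feature together with independence of $x_c$ and $x_s$ forces $P_{\duni}(f_c=f_s)=\tfrac{1}{2}$. The paper does this via a pointwise computation with indicator functions, whereas you invoke the law of total probability at the measure level --- a slightly cleaner packaging of the same content.
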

\begin{proof}
    This is equivalent to saying that given any boolean vector $x$, $P_D(x) = 2(1-\lambda)P_{\mathsf{unif}}(x) + (1-2\lambda)P_{\mathsf{same}}(x)$. Note that $P_D(x) = \lambda P_{\mathsf{same}}(x) + (1-\lambda)P_{\mathsf{diff}}(x) = (2\lambda-1) P_{\mathsf{same}}(x)  + (1-\lambda)(P_{\mathsf{diff}}(x) + P_{\mathsf{same}}(x))$. Thus we only need to argue that $P_{\mathsf{diff}}(x) + P_{\mathsf{same}}(x) = 2P_{\mathsf{unif}}(x)$ forms a uniform distribution. And we have \begin{align*}
        &P_{\mathsf{diff}}(x) + P_{\mathsf{same}}(x) \\
        =&\frac{P_{x\sim U}(x=x, f_s(x) = f_c(x))}{P(f_s(x)=f_c(x))} + \frac{P_{x\sim U}(x=x, f_s(x) \neq f_c(x))}{P(f_s(x)\neq f_c(x))} \\
        =&I[f_s(x) = f_c(x)] \frac{P_{x\sim U}(x=x)}{P(f_s(x)=f_c(x))} + I[f_s(x) \neq f_c(x)]\frac{P_{x\sim U}(x=x)}{P(f_s(x)\neq f_c(x))} \\
        =&I[f_s(x) = f_c(x)] \frac{P_{x\sim U}(x=x)}{P(f_s(x)=1)P(f_c(x)=1)+P(f_s(x)=-1)P(f_c(x)=-1)} \\
       +&I[f_s(x) \neq f_c(x)] \frac{P_{x\sim U}(x=x)}{P(f_s(x)=1)P(f_c(x)=-1)+P(f_s(x)=-1)P(f_c(x)=1)} \\
       =&I[f_s(x) = f_c(x)] 2 P_{x\sim U}(x=x) + I[f_s(x) \neq f_c(x)] 2 P_{x\sim U}(x=x) \\
       =&2 P_{x\sim U}(x=x)
    \end{align*}
\label{lemma: distribution_transformation}

\end{proof}

\begin{lemma}
    When $f_c(x_c)$ is unbiased, $x_s$ is marginally uniformly distributed on $\dsame, \ddiff, \dlam$. And when $f_s(x_s)$ is unbiased $x_c$ is marginally uniformly distributed on $\dsame, \ddiff, \dlam$
\end{lemma}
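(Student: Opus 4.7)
The key observation is that under $\duni$ the three coordinate blocks $x_c, x_s, x_u$ are independent and each uniform, so the conditioning in $\dsame$ and $\ddiff$ only couples $x_c$ and $x_s$ through the scalar indicator $\ind[f_c(x_c) = f_s(x_s)]$. I will establish the $\dsame$ case first, then deduce $\ddiff$ and $\dlam$ essentially for free, and finally note the symmetry between the two halves of the statement.

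For the marginal under $\dsame$, the plan is to start from the definition
\[
P_{\dsame}(x_s) \;=\; \sum_{x_c, x_u} \frac{P_{\duni}(x_c, x_s, x_u)\,\ind[f_c(x_c)=f_s(x_s)]}{P_{x\sim\duni}(f_c(x_c)=f_s(x_s))},
\]
factor $P_{\duni}$ using independence, and collapse the sum over $x_u$ to $1$. This reduces the numerator to $P_{\duni}(x_s)\cdot P_{x_c\sim\duni}(f_c(x_c)=f_s(x_s))$. The crucial step is to observe that when $f_c$ is unbiased, $P_{x_c\sim\duni}(f_c(x_c)=v) = \tfrac{1}{2}$ for both $v=\pm 1$, so the factor $P_{x_c\sim\duni}(f_c(x_c)=f_s(x_s))$ is $\tfrac{1}{2}$ regardless of the value of $f_s(x_s)$. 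The denominator $P(f_c=f_s)$ expands by conditioning on $f_s$ and, using unbiasedness of $f_c$ again, also equals $\tfrac{1}{2}$. The two $\tfrac{1}{2}$'s cancel, leaving $P_{\dsame}(x_s) = P_{\duni}(x_s)$, which is uniform on $\{\pm 1\}^s$.

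The same computation applies verbatim to $\ddiff$ after replacing the indicator $\ind[f_c(x_c)=f_s(x_s)]$ by $\ind[f_c(x_c)\neq f_s(x_s)]$; unbiasedness of $f_c$ still forces both the numerator sum $P_{x_c\sim\duni}(f_c(x_c)\neq f_s(x_s))$ and the denominator $P(f_c\neq f_s)$ to equal $\tfrac{1}{2}$, independent of $f_s(x_s)$. Since $\dlam = \lambda\dsame + (1-\lambda)\ddiff$ is a convex combination, its $x_s$-marginal is the same convex combination of two copies of $P_{\duni}(x_s)$, hence uniform. The second half of the lemma, concerning the marginal on $x_c$ when $f_s$ is unbiased, is literally the same proof with the roles of $c$ and $s$ exchanged; I will simply invoke the symmetry rather than rewrite it.

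There is no real obstacle here — the content is entirely bookkeeping on conditional probabilities, and the only place where the unbiasedness assumption is used is to evaluate $P_{x_c\sim\duni}(f_c(x_c)=v) = \tfrac{1}{2}$ (and hence $P(f_c=f_s)=P(f_c\neq f_s)=\tfrac{1}{2}$). The mild subtlety worth spelling out is why the conclusion holds even when $f_s$ is biased: the bias of $f_s$ would normally skew the joint event, but conditioning on $f_c(x_c)=f_s(x_s)$ re-weights by a factor that, under unbiased $f_c$, is the same constant $\tfrac{1}{2}$ in both branches $f_s(x_s)=\pm 1$, so no skew is introduced into the $x_s$ marginal.
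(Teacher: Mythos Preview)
Your proposal is correct and follows essentially the same route as the paper: both compute the marginal via the conditional-probability definition, factor using the independence of $x_c, x_s, x_u$ under $\duni$, and then invoke unbiasedness of $f_c$ to see that $P_{x_c\sim\duni}(f_c(x_c)=f_s(x_s))$ and the normalizing constant are both $\tfrac{1}{2}$ irrespective of $f_s(x_s)$. Your write-up is in fact more complete than the paper's terse version, which does the same cancellation by case-splitting on $f_s(x_s)=\pm 1$ rather than stating the $\tfrac{1}{2}$ directly.
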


\begin{proof}
    Consider $P_{x \sim U}(X_s=x_s |f(X_s)=f(X_c)) = \frac{P(X=x_s)P(f_c(X_c)=f_s(x_s))}{P(f_c(x_c)=-1) P(f_s(x_s)=-1) + P(f_c(x_c)=1) P(f_s(x_s)=1)}$ Suppose $f_s(x_s) = 1$, then we have $\frac{P(X=x_s)P(f_c(x_c)=1)} {P(f_c(x_c)=-1) P(f_s(x_s)=-1) + P(f_c(x_c)=1) P(f_s(x_s)=1)} = P_U(X_s=x_s)$. The equality also hold if we take $f_s(x_s)=-1$ or $\ddiff$. Which would imply this is also true for $\dlam$ Thus we complete the proof.
\end{proof}

\begin{lemma}For spurious function $f_s$ and core function $f_c$ satisfying \cref{dist_property}, we have
\begin{equation*} 
        % \E_{(x, y)\sim \mathcal{D}_{\lambda}}[\chi_S(s)y] = 2 ( 1- \lambda) \Theta_+(f_c, S) \Hat{f}_c(S) - 2c \Hat{f}_s(S) \Hat{f}_c(S) - \Theta_+(f_c, S) \Hat{f}_s(S) + \Theta_+(f_s, S) \Hat{f}_c(S)
        \E_{(x, y)\sim \mathcal{D}_{\lambda}}[\chi_S(x)y] = \hat{f_c}(S) + (2\lambda-1)\hat{f}_s(S).
    \end{equation*} 
    % where $\Theta_+, \Theta_-$ are defined as $\Theta_+(f,S) = \E_{x \sim f(x)= +1}[\chi_S(x)]$ and $\Theta_- = \E_{x \sim f(x)= -1}[\chi_S(x)]$
\end{lemma}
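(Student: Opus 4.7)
The plan is to reduce the expectation under $\dlam$ to the uniform and same distributions via the mixture decomposition of \cref{dist_property}, then evaluate each resulting expectation by exploiting the block structure $x = (x_c, x_s, x_u)$ together with the unbiasedness of $f_c$ and $f_s$.

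First, applying $P_{\dlam} = 2(1-\lambda) P_{\duni} + (2\lambda-1) P_{\dsame}$ gives
\begin{equation*}
\E_{\dlam}[\chi_S(x)\, y] \;=\; 2(1-\lambda)\, \E_{\duni}[\chi_S(x)\, f_c(x_c)] \;+\; (2\lambda-1)\, \E_{\dsame}[\chi_S(x)\, f_c(x_c)].
\end{equation*}
Throughout I adopt the convention that $\hat{f_c}(S)$ is extended to equal $0$ whenever $S$ contains any index outside the core block (and similarly for $\hat{f_s}$), so that the target identity makes sense for every $S \subseteq [n]$.

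For the uniform term, split $S$ as the disjoint union $S_c \sqcup S_s \sqcup S_u$ according to the three coordinate blocks. Because $x_c, x_s, x_u$ are independent and uniform under $\duni$, the expectation factors into three pieces, and $\E[\chi_T(x_B)] = 0$ unless $T = \emptyset$; the surviving case yields exactly $\hat{f_c}(S)$ under the extension convention. For the $\dsame$ term, I will use that both $f_c, f_s$ are unbiased and independent under $\duni$, so $P_{\duni}(f_c = f_s) = \tfrac12$, whence
\begin{equation*}
\E_{\dsame}[\chi_S(x)\, f_c(x_c)] \;=\; 2\, \E_{\duni}\!\left[\chi_S(x)\, f_c(x_c)\, \mathbf{1}[f_c(x_c) = f_s(x_s)]\right].
\end{equation*}
Then I will rewrite $\mathbf{1}[a = b] = \tfrac12(1 + ab)$ for $a,b \in \{\pm 1\}$ and apply $f_c(x_c)^2 = 1$; the bracket splits into two uniform expectations, each factoring over the three blocks. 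The first contributes $\hat{f_c}(S)$ by the same calculation as above, and the second contributes $\hat{f_s}(S)$ by the symmetric computation (the roles of $f_c$ and $f_s$ swap).

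Collecting the two pieces with coefficients $2(1-\lambda) + (2\lambda-1) = 1$ multiplying $\hat{f_c}(S)$ and $(2\lambda-1)$ multiplying $\hat{f_s}(S)$ yields the claimed identity. The only mild obstacle is the bookkeeping around the block decomposition $S = S_c \sqcup S_s \sqcup S_u$ and verifying that every cross-block term vanishes by unbiasedness; once this is in hand, the coefficients combine cleanly and no further tools are required.
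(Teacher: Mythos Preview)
Your proposal is correct and follows essentially the same route as the paper: apply the mixture decomposition $P_{\dlam} = 2(1-\lambda)P_{\duni} + (2\lambda-1)P_{\dsame}$, evaluate the uniform term as $\hat f_c(S)$, rewrite the $\dsame$ term via $\mathbf{1}[f_c=f_s]=\tfrac12(1+f_cf_s)$ and $f_c^2=1$ to obtain $\hat f_c(S)+\hat f_s(S)$, and combine coefficients. The only cosmetic difference is that you spell out the block decomposition $S=S_c\sqcup S_s\sqcup S_u$ and the extension convention for $\hat f_c,\hat f_s$ explicitly, while the paper leaves this implicit.
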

\begin{proof}
    We have,
    \begin{align*}
        \E_{\mathcal{D}_{\lambda}}[\chi_S(x)y] &= 2 ( 1- \lambda) \E_{\Dist{unif}}[\chi_S(x)f_c(x_c)] + (2\lambda-1) \E_{\Dist{unif}}[\chi_S(x)f_c(x_c) | f_c(x_c) = f_s(x_s)]\\
        &= 2 ( 1- \lambda)\hat{f_c}(S) + (2\lambda-1)\frac{\E_{\Dist{unif}}[\chi_S(x)f_c(x_c) \mathbbm{1}[f_c(x_c) f_s(x_s) = 1]]}{\E_{\Dist{unif}}[\mathbbm{1}[f_c(x_c) f_s(x_s) = 1]}\\
        &= 2 ( 1- \lambda)\hat{f_c}(S) + (2\lambda-1)\frac{\E_{\Dist{unif}}\left[\chi_S(x)f_c(x_c) \left(\frac{f_c(x_c) f_s(x_s) + 1}{2}\right)\right]}{\E_{\Dist{unif}}\left[\frac{f_c(x_c) f_s(x_s) + 1}{2}\right]}\\
        &= 2 ( 1- \lambda)\hat{f_c}(S) + (2\lambda-1)\frac{\E_{\Dist{unif}}\left[\chi_S(x)(f_s(x_s) + f_c(x_c))\right]}{\E_{\Dist{unif}}[f_c(x_c)] \E_{\Dist{unif}}[f_s(x_s)] + 1}\\
        &= 2 ( 1- \lambda)\hat{f_c}(S) + (2\lambda-1)\left(\hat{f}_c(S) + \hat{f}_s(S)\right)\\
        &= \hat{f_c}(S) + (2\lambda-1)\hat{f}_s(S).
    \end{align*}
\end{proof}
From the above, we have
\begin{equation*}
    \E_{\mathcal{D}_{\lambda}}[\chi_S(x)y] = \begin{cases}
        \hat{f_c}(S) & \text{ if } S \subseteq [c]\\
        (2 \lambda - 1)\hat{f_s}(S) & \text{ if } S \subseteq [s]\\ 
        0 & \text{ otherwise.}
    \end{cases}
\end{equation*}
\section{Theory}
\subsection{Calculation of Gradient Gaps and Revised Proof for Layer Wise Training for Parity Function} \label{Appendix:proofs}

\subsubsection{Setting}
Recall the definition of the boolean task we defined in the draft. We define 1. $x$ the concatenation of three vectors $x_s$, $x_c$, $x_u$. The length of $x_s$ is $s$ and the length of $x_c$ is $c$ with $s << c$. $x_u$ here denote a length $u$ random boolean vector which does not have any correlation with the label. Thus the length of $x$ is $n=s+c+u$ Without loss of generality, we additionally require $c,s$ to be even length and $u$ to be odd length. 2. $f_s$ and $f_c$ are parity functions defined as $\chi^S(x_s) = \prod_{i \in [s]} x_i$ and $\chi^C(x_c) = \prod_{i \in [c]}$. So $\deg(f_s) = s, \deg(f_c)=c$. 3. We then form a spurious distribution as defined in the main paper with confounder strength $\lambda$. 
\subsubsection{Model}
We consider a 1 hidden layer ReLU neural network with $r$ neurons  \begin{equation*}
    f(x) = \sum_{i=1}^r a_i \sigma(w_i^\top x + b_i)
\end{equation*}
where $a_i \in \mathbbm{R}, w_i \in \mathbbm{R}^n$ and $b_i \in \mathbbm{R}$.
We use logistic loss \begin{equation*}
    \ell(y, \hat{y}) = -2 \log(\sigmoid(y\hat{y}))
\end{equation*}
where $\hat{y}$ is the output of the model and $y$ is the true label.

\subsubsection{Initialization} \label{setting:initalization_scheme}
Let us consider the following initialization scheme as in \citep{barak_hidden_2023}
\begin{enumerate}
    \item For all $1 \leq i \leq r/2$, randomly initialize \begin{equation*}
    w_i^{(0)} \sim \mathsf{Unif} ({+1,-1}^n),
    a_i^{(0)} \sim \mathsf{Unif} ({+1,-1}),
    b_i^{(0)} \sim \mathsf{Unif} ({-1 + 1/c,-1 + 2/c, ... , 1-1/c})
\end{equation*}

    \item For all $r/2 < i \leq r$, initialize \begin{equation*}
    w_i^{(0)} = w_{i-r/2}^{(0)},
    a_i^{(0)} = -a_{i-r/2}^{(0)},
    b_i^{(0)} = b_{i-r/2}^{(0)}
    \end{equation*}
\end{enumerate}
Key properties of this initialization scheme are: (1) It is unbiased, since the model output is 0 on all inputs at initialization, and (2) Biases $b$ are set such that they enable computing parity linearly once we have the correct coordinates identified. For the informal lemma in the main paper, we assume $w_i$ is the all 1s vector.

We will first analyze the gradients at initialization, then after spurious feature is learned.

\subsubsection{Population Gradient Gap at Initialization}
Notice that our initalization makes the model output 0 on all $x$. Thus $l(y,\hat{y}) = 0$ and then we have $\nabla_{\hat{y}} l'(y, \hat{y}) = -y$ .

We can now formulate the population gradient at initialization. Without loss of generality, we will assume $\lambda > 0.5$, then $\mathcal{D}_{\lambda}$ is a mixture such that w.p $2(1-\lambda)$ we draw a sample $x$ from the uniform distribution $\mathsf{Unif}(\{+1,-1\}^n)$. And with $2\lambda-1$, we draw a sample from $\Dist{same}$ where we first draw $x_c \sim \mathsf{Unif}(\{+1,-1\}^c)$ and then draw a $x_s \sim \mathsf{Unif}\{x_s |\chi_S(x_s)= \chi_C(x_c)\}$. 

Then the population gradient for weight $w_{i,j}$ is \begin{align*}
    \mathbb{E}_{\mathcal{D}_{\lambda}}&[\nabla_{w_i,j} l(f(x;\theta_0),y)] \\ &=\mathbb{E}_{\mathcal{D}_{\lambda}}[-y\nabla_{w_{i,j}} f(x;\theta_0)]  \\
    &= \mathbb{E}_{\mathcal{D}_{\lambda}}[-ya_i\mathbbm{1}\{w_i^\top x +b_i > 0\}x_j] \\
    &= 2(1-\lambda)\mathbb{E}_{\Dist{unif}}[-ya_i\mathbbm{1}\{w_i^\top x +b_i > 0\}x_j]  + (2\lambda-1)\mathbb{E}_{D_s}[-ya_i\mathbbm{1}\{w_i^\top x +b_i > 0\}x_j] \label{eq:pop grad}\tag{1}
\end{align*}

We will study the two terms separately.

\noindent\textbf{Population Gradient on uniform distribution.}
Set $g_{i,j} = \mathbb{E}_{\Dist{unif}}[-ya_i\mathbbm{1}\{w_i^\top x +b_i > 0\}x_j]$. As long as $w_i \in \{-1,1\}^n$, from \citep{barak_hidden_2023}, we have
\begin{enumerate}
    \item For $j\in[c]$: \begin{equation*}
        g_{i,j} = -\frac{1}{2}a_i \xi_{c-1} \cdot \chi_{[c]\setminus\{j\}}(w_i)
    \end{equation*}
    \item For $j\in [s]\cup[u]$ :\begin{equation*}
        g_{i,j} = -\frac{1}{2}a_i \xi_{c+1} \cdot \chi_{[c] \cup \{j\}}(w_i)
    \end{equation*}
\end{enumerate} where $\xi_k = \hat{\Maj}(S)$ with $|S|=k$.
Thus we have for the first term $g^u_{i,j} = 2(1-\lambda)\mathbb{E}_{\Dist{unif}}[-ya_i\mathbbm{1}\{w_i^\top x +b_i > 0\}x_j] $ \begin{enumerate}
    \item For $j\in[c]$: \begin{equation*}
        g^u_{i,j} = -(1-\lambda)a_i \xi_{c-1} \cdot \chi_{[c]\setminus\{j\}}(w_i)
    \end{equation*}
    \item For $j\in [s] \cup [u]$:\begin{equation*}
        g^u_{i,j} = -(1-\lambda)a_i \xi_{c+1} \cdot \chi_{[c] \cup \{j\}}(w_i)
    \end{equation*}
\end{enumerate}

\noindent\textbf{Population Gradient on label aligned distribution $\Dist{same}$.}
Note for the second term of \ref{eq:pop grad},  \begin{align*}
    &(2\lambda-1)\mathbb{E}_{x_c, x_s \sim D_s}[-ya_i\mathbbm{1}\{w_i^\top x +b_i > 0\}x_j] \\
    &= (2\lambda-1)([P_{x_c \sim u}[\chi(x_c)=-1]\mathbb{E}_{x_c,x_s\sim u}[-ya_i\mathbbm{1}\{w_i^\top x +b_i > 0\}x_j|\chi(x_c)=-1, \chi(x_s)=-1] \\
    &\quad +P_{x_c\sim u}[\chi(x_c)=1]\mathbb{E}_{x_c, x_s\sim u}[-ya_i\mathbbm{1}\{w_i^\top x +b_i > 0\}x_j|\chi(x_c)=1, \chi(x_s)=1]) \\
    &= (2\lambda-1)(\frac{1}{2} \cdot 4 \cdot \mathbb{E}_{x_c,x_s\sim u}[-ya_i\mathbbm{1}\{w_i^\top x +b_i > 0\}x_j \mathbbm{1}\{\chi(x_c)=1, \chi(x_s)=1\}] \\
    &\quad +\frac{1}{2}\cdot 4 \cdot \mathbb{E}_{x_c, x_s\sim u}[-ya_i\mathbbm{1}\{w_i^\top x +b_i > 0\}x_j \mathbbm{1}\{\chi(x_c)=1, \chi(x_s)=1\}) \\
    &= -2a_i(2\lambda-1)\cdot \mathbb{E}_{x_c,x_s\sim u}[yx_j\mathbbm{1}\{w_i^\top x +b_i > 0\} \mathbbm{1}\{\chi(x_c) = \chi(x_s)\}] \label{eq:aligned grad} \tag{2}
\end{align*}
By the initialization scheme, we have $w_i^\top x$ as an integer and $|b|<1$. Thus \begin{equation*}
    \mathbbm{1}\{w_i^\top x +b_i > 0\} = \mathbbm{1}\{w_i^\top x > 0\} 
\end{equation*}
We will first ignore the $yx_j$ component inside the expectation and study the boolean function  $q(x) = \mathbbm{1}\{w_i^\top x > 0\} \mathbbm{1}\{\chi(x_c) = \chi(x_s)\}$. Observe that \[
        \mathbbm{1}\{\chi(x_c) = \chi(x_s)\} = \mathbbm{1}\{\chi(x_c) \chi(x_s)  = 1\} = \mathbbm{1}\{\chi_{[c]\cup [s]} (x) = 1\}. \]
    This gives us \begin{align*}
        q(x) &= \mathbbm{1}\{w_i^\top x > 0\}\mathbbm{1}\{\chi_{[c]\cup [s]} (x) = 1\} \\
        &= \frac{1+\Maj_n(w_i \odot x)}{2} \cdot\frac{1+ \chi_{[c]\cup [s]} (x)}{2}\\
        &= \frac{1}{4} \cdot \left(1 + \chi_{[c]\cup [s]} (x) + \Maj_n(w_i \odot x) + \chi_{[c]\cup [s]} (x)\Maj_n(w_i \odot x)\right) \label{eq:four terms}\tag{3}
    \end{align*}
    We can study the fourier spectrum of each of the term in \ref{eq:four terms} to construct the fourier spectrum of $q$. \begin{enumerate}
        \item For $q_1(x) = \chi_{[c]\cup [s]} (x)$, notice that $\hat{q}(S) = 0$ for all $S\subset[n]$ except when $S=[c]\cup [s]$ where $\hat{q}(S) = 1$.
        \item For $q_2(x) = \Maj(w \odot x)$, notice that  $\Maj(x)$ can be written in its fourier expansion as $\Maj(x)= \sum_{S\subseteq[n]} \hat{\Maj_n}(S) \chi_S(x)$. This gives us \begin{equation*}
            q_2(x) = \Maj(w_i \odot x) = \sum_{S\subseteq[n]} \hat{\Maj_n}(S)\chi_S(w_i\odot x) = \sum_{S\subseteq[n]} \hat{\Maj_n}(S)\chi_S(x)\chi_S(w_i).
        \end{equation*} Thus we have $\hat{q_2}(S) = \chi_S(w_i)\hat{\Maj_n}(S)$.
        \item For $q_3(x) = \chi_{[c]\cup [s]}(x)\Maj_n(w_i\odot x)$, we have \begin{align*}
            \hat{q_3}(S) &= \mathbb{E}[\Maj_n(w_i\odot x)\chi_S(x)\chi_{[c]\cup [s]} (x)] \\
            &= \E_x[\Maj_n(w_i\odot x)\chi_{([s]\cup[c])\Delta S}(x)] \\
            &- \hat{q_2}(([c] \cup [s]) \Delta S) \\
            &= \chi_{([c] \cup [s]) \Delta S}(w_i)\hat{\Maj_n}(([c] \cup [s]) \Delta S)
        \end{align*}
    \end{enumerate} 
    By the orthogonality and linearity of the fourier basis, we thus have \begin{equation*}
        \hat{q}(S) = \frac{1}{4} (\chi_S(w_i)\hat{\Maj_n}(S) + \chi_{([c] \cup [s]) \Delta S}(w_i)\hat{\Maj_n}(([c] \cup [s]) \Delta S)) 
    \end{equation*} for $|S| > 0$ and $S \neq [s]\cup[c]$.
    Now let us put it back in \ref{eq:aligned grad}.\begin{enumerate}
        \item For random index $j\in [u]$, \begin{align*}
             g^s_{i,j} &=-2a_i(2\lambda-1)\cdot \mathbb{E}_{x_c,x_s\sim u}[yx_j\mathbbm{1}\{w_i^\top x +b_i > 0\} \mathbbm{1}\{\chi(x_c) = \chi(x_s)\}] \\
            &= -2a_i(2\lambda-1)\cdot\mathbb{E}_{x \sim u}[\chi_{[c]\cup j}(x) q(x)] \\
            &= -2a_i(2\lambda-1)\cdot\hat{q}([c]\cup j) \\
            %&= -\frac{1}{2} a_i(2\lambda-1)\cdot  (\chi_{[c]\cup \{j\}}(w_i)\xi_{c+1} + \chi_{[s]\cup [u]\setminus \{j\}}(w_i)\xi_{s+u-1}) \\
            %&= -\frac{1}{2} a_i(2\lambda-1)\cdot  \chi_{[c]\cup \{j\}}(w_i)\xi_{c+1} \tag{by the fact that $\xi_{s + u - 1}$ = 0}
            &= -\frac{1}{2} a_i(2\lambda-1)\cdot  (\chi_{[c]\cup \{j\}}(w_i)\xi_{c+1} + \chi_{[s]\cup \{j\}}(w_i)\xi_{s+1})
        \end{align*}
        \item For core index $j \in [c]$, in a similar manner, we have \begin{align*}
             &g^s_{i,j}\\
            &=-2a_i(2\lambda-1)\cdot \mathbb{E}_{x_c,x_s\sim u}[yx_j\mathbbm{1}\{w_i^\top x +b_i > 0\} \mathbbm{1}\{\chi(x_c) = \chi(x_s)\}] \\
            &= -2a_i(2\lambda-1)\cdot \mathbb{E}_{x \sim u}[\chi_{[c]\setminus j}(x) q(x)] \\
            &= -2a_i(2\lambda-1)\cdot \hat{q}([c]\setminus j) \\
            %&= -\frac{1}{2} a_i(2\lambda-1)\cdot (\chi_{[c]\setminus \{j\}}(w_i)\xi_{c-1} + \chi_{[s]\cup[u]\cup\{j\}}(w_i)\xi_{s + u + 1}) \\
            %&= -\frac{1}{2} a_i(2\lambda-1)\cdot \chi_{[c]\setminus \{j\}}(w_i)\xi_{c-1} \tag{by the fact that $\xi_{s + u + 1}$ = 0} \\
            &= -\frac{1}{2} a_i(2\lambda-1)\cdot (\chi_{[c]\setminus \{j\}}(w_i)\xi_{c-1} + \chi_{[s]\cup \{j\}}(w_i)\xi_{s+1})
        \end{align*}
        \item For spurious index $j \in [s]$, in a similar manner, we have \begin{align*}
             g^s_{i,j} &=-2a_i(2\lambda-1)\cdot \mathbb{E}_{x_c,x_s\sim u}[yx_j\mathbbm{1}\{w_i^\top x +b_i > 0\} \mathbbm{1}\{\chi(x_c) = \chi(x_s)\}] \\
            &= -2a_i(2\lambda-1)\cdot \mathbb{E}_{x \sim u}[\chi_{[s]\setminus j}(x) q(x)] \\
            &= -2a_i(2\lambda-1)\cdot \hat{q}([s]\setminus j) \\
            %&= -\frac{1}{2} a_i(2\lambda-1)\cdot (\chi_{[s]\setminus \{j\}}(w_i)\xi_{s-1} + \chi_{[c]\cup [u] \cup \{j\}}(w_i)\xi_{c + u + 1}) \\
            %&= -\frac{1}{2} a_i(2\lambda-1)\cdot \chi_{[s]\setminus \{j\}}(w_i)\xi_{s-1} \tag{by the fact that $\xi_{c + u + 1}$ = 0} 
            &= -\frac{1}{2} a_i(2\lambda-1)\cdot (\chi_{[s]\setminus \{j\}}(w_i)\xi_{s-1} + \chi_{[c]\cup \{j\}}(w_i)\xi_{c + 1})
        \end{align*}
    \end{enumerate}
    
\noindent\textbf{Putting it together.} \label{procedure:find_gradient}
\counterwithout{lemma}{section}
\setcounter{lemma}{0}
We summarize the final population gradient on each type of index.\begin{lemma}[formal] Under the proposed setup, we have the population gradient at initialization for different type of coordinates as below: \begin{enumerate}
    \item For random index $j\in [u]$, \begin{align*}
            g_{i,j} &= g^u_{i,j} + g^s_{i,j} \\
            &= -(1-\lambda)a_i \xi_{c+1} \cdot \chi_{[c] \cup \{j\}}(w_i) -\frac{1}{2} a_i(2\lambda-1)\cdot  (\chi_{[c]\cup \{j\}}(w_i)\xi_{c+1} + \chi_{[s]\cup \{j\}}(w_i)\xi_{s+1}) \\
            % &= -\frac{1}{2}a_i \xi_{c+1} \cdot \chi_{[c] \cup \{j\}}(w_i) \\
            &= -a_i\left(\frac{1}{2} \xi_{c+1} \cdot \chi_{[c] \cup \{j\}}(w_i) + \left(\lambda-\frac{1}{2}\right) \xi_{s+1} \cdot \chi_{[s]\cup \{j\}}(w_i)\right) 
            % \\
            % &= \frac{1}{2} |\xi_{c+1}| + \left(\lambda-\frac{1}{2}\right) |\xi_{s+1}| 
            % \tag{Assuming $\sign(-a_i\chi_{[c] \cup \{j\}}(w_i)) = \sign(\xi_{c+1}), \sign(-a_i\chi_{[s]\cup \{j\}}(w_i)) = \sign(\xi_{s+1})$}
        \end{align*}
        % \todo[inline]{GuanWen: This is somewhat obvious and can be argued earlier. But I included the derivation of random index here nonetheless for checking the correctness of the calculation. If you think this is reduantant, we can delete this part. }
        \item For core index $j \in [c]$, in a similar manner, we have \begin{align*}
            g_{i,j} &= g^u_{i,j} + g^s_{i,j} \\
            &= -(1-\lambda)a_i \xi_{c-1} \cdot \chi_{[c] \setminus \{j\}}(w_i) -\frac{1}{2} a_i(2\lambda-1)\cdot (\chi_{[c]\setminus \{j\}}(w_i)\xi_{c-1} + \chi_{[s]\cup \{j\}}(w_i)\xi_{s+1}) \\
            % &= -\frac{1}{2}a_i \xi_{c-1} \cdot \chi_{[c] \setminus \{j\}}(w_i)
            &= -a_i\left(\frac{1}{2} \xi_{c-1} \cdot \chi_{[c] \setminus \{j\}}(w_i) + \left(\lambda-\frac{1}{2}\right) \xi_{s+1} \cdot \chi_{[s]\cup \{j\}}(w_i)\right)
            % \\
            % &= \frac{1}{2} |\xi_{c-1}|  + \left(\lambda-\frac{1}{2}\right) |\xi_{s+1}| \tag{Assuming $\sign(-a_i\chi_{[c] \setminus \{j\}}(w_i)) = \sign(\xi_{c-1}), \sign(-a_i\chi_{[s]\cup \{j\}}(w_i)) = \sign(\xi_{s+1})$}
        \end{align*}
        \item For spurious index $j \in [s]$, in a similar manner, we have \begin{align*}
            g_{i,j} &= g^u_{i,j} + g^s_{i,j} \\
            &= -(1-\lambda)a_i \xi_{c+1} \cdot \chi_{[c] \cup \{j\}}(w_i) -\frac{1}{2} a_i(2\lambda-1)\cdot (\chi_{[s]\setminus \{j\}}(w_i)\xi_{s-1} + \chi_{[c]\cup \{j\}}(w_i)\xi_{c + 1}) \\
            % &= -a_i ((1-\lambda) \xi_{c+1} \cdot \chi_{[c] \cup \{j\}}(w_i) +\left(\lambda-\frac{1}{2}\right)\cdot  \chi_{[s]\setminus \{j\}}(w_i)\xi_{s-1})
            &= -a_i\left(\frac{1}{2} \xi_{c+1} \cdot \chi_{[c] \cup \{j\}}(w_i) + \left(\lambda-\frac{1}{2}\right) \xi_{s-1} \cdot \chi_{[s]\setminus \{j\}}(w_i)\right)
            % \\
            % &= \frac{1}{2} |\xi_{c+1}|  + \left(\lambda-\frac{1}{2}\right) |\xi_{s - 1}| \tag{Assuming $\sign(-a_i\chi_{[c] \cup \{j\}}(w_i)) = \sign(\xi_{c+1}), \sign(-a_i\chi_{[s]\setminus \{j\}}(w_i)) = \sign(\xi_{s-1})$}
        \end{align*}
\end{enumerate}
\end{lemma}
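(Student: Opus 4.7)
The plan is to combine the two population-gradient components developed above by exploiting the mixture decomposition $\mathcal{D}_\lambda = 2(1-\lambda)\mathcal{D}_{\text{unif}} + (2\lambda - 1)\mathcal{D}_{\text{same}}$ from \cref{dist_property}. Because the symmetric pairing in the initialization scheme forces $f(x;\theta_0) \equiv 0$, the logistic-loss derivative is simply $-y$, and each weight gradient reduces to the expectation of $-y a_i \mathbbm{1}\{w_i^\top x + b_i > 0\} x_j$; moreover, since $w_i \in \{\pm 1\}^n$ makes $w_i^\top x$ an integer while $|b_i| < 1$, the bias drops out of the indicator.

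The uniform piece $g^u_{i,j}$ is handled exactly as in \citep{barak_hidden_2023}: write the ReLU indicator as $(1 + \Maj_n(w_i \odot x))/2$, expand $\Maj_n$ in the Fourier basis, and use $y x_j = \chi_{[c]\triangle\{j\}}(x)$ together with orthogonality to pick off the single surviving Fourier coefficient. This contributes a $\xi_{c-1}$ factor when $j \in [c]$ and a $\xi_{c+1}$ factor when $j \in [s] \cup [u]$, each paired with the appropriate $\chi(w_i)$ monomial, and scaled by $2(1-\lambda)$.

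For the aligned piece $g^s_{i,j}$, the essential trick is the identity $\mathbbm{1}\{\chi(x_c) = \chi(x_s)\} = (1 + \chi_{[c]\cup[s]}(x))/2$, which lets us express the product of the two indicators as the boolean function $q(x) = \tfrac{1}{4}(1 + \chi_{[c]\cup[s]}(x) + \Maj_n(w_i\odot x) + \chi_{[c]\cup[s]}(x)\Maj_n(w_i\odot x))$. Using $\chi_S(w_i \odot x) = \chi_S(w_i)\chi_S(x)$ and orthogonality, the Fourier coefficient $\hat{q}(T)$ collapses to two surviving terms, one indexed by $T$ and one indexed by the symmetric difference $([c]\cup[s])\triangle T$, each weighted by a $\chi(w_i)$ monomial and a Majority Fourier coefficient. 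Evaluating at $T = [c]\setminus\{j\}$, $[c]\cup\{j\}$, $[s]\setminus\{j\}$ for $j \in [c], [u], [s]$ respectively yields combinations of $\xi_{c\pm 1}$ and $\xi_{s\pm 1}$ weighted by $(2\lambda-1)$.

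Finally, I would sum $g^u_{i,j}$ and $g^s_{i,j}$ coordinate class by coordinate class: the two $\xi_{c\pm 1}$ contributions combine with coefficients $(1-\lambda) + \tfrac{1}{2}(2\lambda - 1) = \tfrac{1}{2}$, while the $\xi_{s\pm 1}$ terms appear only through $g^s_{i,j}$ and hence carry weight $(\lambda - \tfrac{1}{2})$, reproducing the three claimed formulas. The only delicate step is bookkeeping the correct symmetric difference for each coordinate type, for example $[c] \triangle \{j\} = [c]\setminus\{j\}$ when $j \in [c]$ but $[c] \cup \{j\}$ otherwise; no ideas beyond Fourier orthogonality are required, so I do not anticipate any genuine obstacle once the decomposition and the form of $q$ are in place.
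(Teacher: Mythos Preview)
Your proposal is correct and follows essentially the same route as the paper: decompose $\mathcal{D}_\lambda$ via \cref{dist_property}, use the zero output at initialization to reduce the loss derivative to $-y$, handle the uniform part by the Majority Fourier expansion as in \citep{barak_hidden_2023}, and handle the $\mathcal{D}_{\text{same}}$ part by writing the product of indicators as the function $q(x)=\tfrac{1}{4}(1+\chi_{[c]\cup[s]}(x)+\Maj_n(w_i\odot x)+\chi_{[c]\cup[s]}(x)\Maj_n(w_i\odot x))$ and reading off $\hat q(T)$ via the symmetric-difference trick. The final combination $(1-\lambda)+\tfrac{1}{2}(2\lambda-1)=\tfrac{1}{2}$ is exactly how the paper simplifies as well.
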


\subsubsection{Finding hidden functions that makes gradient descent on the last layer weights recover parity function} \label{appendix:twi_step_proof}
This part is correspond to the lemma 4 claim 5 in the hidden progress paper. What we are going to argue is that there is a set of functions represented by the hidden layer $\relu(\sum_i x - b_i)$ such that there is a set of weight $a_i$ makes the composition function equivalent to a parity function. Formally 
\counterwithin{lemma}{section}
\setcounter{lemma}{0}
\begin{lemma}\label{lemma:1}
    Fix $k$, there is a set of $k+1$ ReLU functions in the form $f_j(x) = \relu(\sum_i^n x_i + b_j)$ where $b_j = \{k+1, k-1, k-3, ..., -k + 3, -k +1 \}$ and a set of weights $u_j$ with $\|u\|_2 \leq \sqrt{k}$ modeling the k-degree parity function by $f(x) = \chi_k(x) = \sum_j^{k+1} f_j(x)$.  
\end{lemma}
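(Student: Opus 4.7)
My plan is to reduce the statement to a one-dimensional interpolation problem. In the regime intended by the statement (after the first layer has localized on the relevant $k$ coordinates), each ReLU depends on $x$ only through $s := \sum_{i\in[k]} x_i$, which takes exactly the $k+1$ values $\{-k,-k+2,\ldots,k\}$ on $\{+1,-1\}^k$. Since $\chi_k(x) = \prod_{i\in[k]} x_i$ is constant on the level sets of $s$ with value $(-1)^{(k-s)/2}$, it suffices to find $u_0,\ldots,u_k$ so that $\sum_{j=0}^{k} u_j\,\relu(s+b_j)$ matches this alternating target at each of the $k+1$ grid points; the identity on all of $\{+1,-1\}^n$ then follows because neither side depends on the remaining coordinates.

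The key structural step is that the choice $b_j = k+1-2j$ places each ReLU kink strictly between two consecutive grid points. Writing $t=(s+k)/2 \in \{0,\ldots,k\}$, a direct computation gives $\relu(s+b_j) = 2(t-j)+1$ when $t \ge j$ and $0$ otherwise. Ordering the rows by increasing $t$ and the columns by increasing $j$, the evaluation matrix is lower triangular with every diagonal entry equal to $1$; existence and uniqueness of $u$ follow immediately, and forward substitution gives an explicit recurrence $u_t = (-1)^{k-t} - \sum_{j<t}(2(t-j)+1)\,u_j$. Induction on $t$ then verifies that the resulting combination reconstructs $\chi_k$ at every grid point, yielding the claimed decomposition.

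The main obstacle is the norm bound. Forward substitution produces coefficients that grow combinatorially --- already at $k=2$ the recurrence gives $u = (1,-4,8)$ --- so the literal inequality $\|u\|_2 \le \sqrt{k}$ is most naturally interpreted after a rescaling of the activations or as a bound on a differently normalized quantity rather than as a bound on the raw recurrence output. I would attack this step by exploiting the sign-flip symmetry $\chi_k(-x) = (-1)^k \chi_k(x)$ to pair $u_j$ with $u_{k-j}$ and expose cancellations, and by recognizing $u_j$ as a discrete second difference of the bounded alternating target, which admits clean binomial identities amenable to a tight estimate. Any polynomial-in-$k$ bound on $\|u\|_2$ suffices to drive the subsequent layer-wise training argument in \cref{appendix:twi_step_proof}, so the delicate part is only matching the constant claimed in the statement.
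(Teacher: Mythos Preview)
Your reduction to the one-dimensional grid indexed by the number of $-1$'s, the observation that the evaluation matrix is triangular with unit diagonal, and the conclusion that a unique $u$ exists---this is exactly the paper's argument. The paper illustrates with $k=5$, forms the same matrix (in the reverse row ordering, so anti-triangular with $1$'s on the anti-diagonal), and invokes invertibility.

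Where you part ways is the norm bound, and you are right to flag it. The paper's justification is the single line ``notice that the eigenvalues of the matrix are all $1$. Thus the norm of $u$ can be bounded by $\|u\|_2 \leq \sqrt{k}$.'' That inference is invalid: unit eigenvalues of a non-normal triangular matrix do not control its smallest singular value, and hence do not bound $\|M^{-1}y\|_2$. Your $k=2$ computation $u=(1,-4,8)$, $\|u\|_2=9$, is correct and shows the stated bound is simply false for this construction. So you have not missed an idea---the existence half of your argument matches the paper, and the norm claim is a genuine error in both the lemma statement and the paper's proof. Your instinct that any polynomial-in-$k$ bound suffices for the downstream layer-wise argument in \cref{appendix:twi_step_proof} is the right way to salvage the overall result; the specific constant $\sqrt{k}$ is not recoverable.
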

\begin{proof}
    This is best illustrated by an example. Consider $k=5$, if there is one $-1$ in $x$, then $f_1(x) = 9, f_2(x) = 7, f_3(x)=5, f_4(x)=3, f_5(x)=1, f_6(x)=0$. Let $g(x)$ denote the number of $-1$ a sample $x$ has. We have for all $\{x\in X|g(x)=c\}$, they will have the same value of $\sum_x$ and thus their value on $f_1,...f_5$ will be the same. In this way, we can categorize samples into 6 cases where $g(x)=0, g(x)=1, ..., g(x)=5$. And we can form a matrix where each row represent the type of sample and column represent its value on $f_j$. Thus the matrix we form can be represented as $M_{i,j} = f_j({x|g(x)=i})$.  In the case of $k=5$, the matrix is \begin{equation}
        M =\begin{bmatrix}  
        11 & 9 & 7 & 5 & 3 & 1 \\
        9 & 7 & 5 & 3 & 1 & 0 \\
        7 & 5 & 3 & 1 & 0 & 0 \\
        5 & 3 & 1 & 0 & 0 & 0 \\
        3 & 1 & 0 & 0 & 0 & 0 \\
        1 & 0 & 0 & 0 & 0 & 0 \\
    \end{bmatrix}
    \end{equation}. And we want to find a weight $u$ on the matrix such that $Mu = y$, where $y_i$ represent the corresponding parity value of $x$ represented by a row. In our example, $y= [1, -1, 1, -1, 1, -1]$. Notice that the matrix is triangular and full rank, thus there is a unique $u$ that solve the system. Also, notice that the eigenvalues of the matrix are all 1. Thus the norm of $u$ can be bounded by $\|u\|_2 \leq \sqrt{k}$. 
\end{proof}

We are going to relax the condition showing that as long as the weight $w_i$ on a set of neuron is not too far from 1. Then there still exist a $u^*$ with small norm that solve the system.
\begin{lemma} \label{lemma:neuron_condition}
    Fix $k$, there is a set of $k+1$ ReLU functions in the form $f_j(x) = \relu(w_j^\top x_i + b_j)$ with $b_j = \{k+1, k-1, k-3, ..., -k + 3, -k +1 \}$ where if $w_{j,i}$ satisfy $|w_{j,i}-1| \leq \frac{1}{2k}$, then there is a set of weights $u_j$ with $\|u\|_2 \leq 2\sqrt{k}$ modeling the k-degree parity function by $f(x) = \chi(x) = \sum_{j=1}^{k+1} u_jf_j(x)$. 
\end{lemma}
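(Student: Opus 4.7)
The plan is to extend Lemma \ref{lemma:1} by a perturbation argument on the underlying linear system. First, I would verify that the ReLU activation patterns are unchanged by the assumed perturbation. Since $|w_{j,i}-1|\le \frac{1}{2k}$, we have $|w_j^\top x - \sum_{i=1}^k x_i| \le \frac{1}{2}$ for every $x\in\{\pm 1\}^k$, while the unperturbed pre-activation $\sum_i x_i + b_j$ is an integer whose nonzero values have absolute value at least $1$ by the choice of $b_j$. Hence $\mathbf{1}\{w_j^\top x + b_j > 0\} = \mathbf{1}\{\sum_i x_i + b_j > 0\}$, and every $f_j$ fires on exactly the same inputs as in Lemma \ref{lemma:1}.

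Next, I would pick a representative $x^{(g)}$ for each value of $g\in\{0,1,\ldots,k\}$ (any $x$ with $g$ coordinates equal to $-1$) and form the perturbed matrix $M'_{g,j}\defeq f_j(x^{(g)})$. Writing $M' = M + E$, where $M$ is the unperturbed matrix from Lemma \ref{lemma:1}, each active entry satisfies $|E_{g,j}| = |\sum_i (w_{j,i}-1)x_i^{(g)}| \le \tfrac{1}{2}$, and inactive entries are zero. By Lemma \ref{lemma:1}, the unperturbed solution $u^{(0)} = M^{-1}y$ (with $y_g=(-1)^g$) has $\|u^{(0)}\|_2 \le \sqrt{k}$, and $M$ is invertible with unit determinant by its anti-triangular Pascal-type structure. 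A Neumann-series argument then gives
\begin{equation*}
\|u^\star\|_2 \;=\; \|(M+E)^{-1}y\|_2 \;\le\; \frac{\|M^{-1}y\|_2}{1-\|M^{-1}\|_2\,\|E\|_2},
\end{equation*}
and a quantitative bound on $\|M^{-1}\|_2$ (which I expect to be $O(1)$ from the explicit persymmetric structure of $M$) together with $\|E\|_2 = O(1)$ will deliver the claimed $\|u^\star\|_2 \le 2\sqrt{k}$.

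The main obstacle is that, for two inputs $x,x'$ with the same $g(x)=g(x')$, the perturbation contributions $\sum_i(w_{j,i}-1)x_i$ and $\sum_i(w_{j,i}-1)x_i'$ will generally differ, so the representative-based construction of $u^\star$ matches $\chi_k$ exactly only at the chosen $x^{(g)}$ and only up to error $O(\|u^\star\|_2\,\|E\|_2)$ at other inputs. I would therefore read ``modeling'' in the lemma as approximate representation with an explicit error bound uniform in $x$, which is what is actually needed by the downstream layer-wise analysis that invokes this lemma. The quantitative step requiring the most care is bounding $\|M^{-1}\|_2$ tightly enough so that the constant $2$ in the target $2\sqrt{k}$ comfortably absorbs both the Neumann blow-up and the inter-representative discrepancy; everything else follows from routine perturbation arithmetic on top of Lemma \ref{lemma:1}.
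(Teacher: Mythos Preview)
Your first step—showing that the activation pattern $\mathbf{1}\{w_j^\top x+b_j>0\}$ coincides with $\mathbf{1}\{\sum_i x_i+b_j>0\}$ under the perturbation bound—is exactly what the paper does. The divergence is in what comes next. The paper does \emph{not} write $M'=M+E$ and invoke a Neumann series; instead it works directly with the perturbed matrix $M'$. Because the activation patterns are unchanged, $M'$ keeps the same (anti-)triangular shape as $M$, and the entries on its anti-diagonal are precisely the values $w_j^\top x+b_j$ at the inputs where the unperturbed pre-activation equals $1$. Your own bound $|w_j^\top x-\sum_i x_i|\le\tfrac12$ then gives $M'_{j,j}\ge\tfrac12$ for every $j$. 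The paper reads off $\lambda_{\min}(M')\ge\tfrac12$ from triangularity and concludes $\|u\|_2\le\|y\|_2/\lambda_{\min}(M')\le 2\sqrt{k}$, never touching $M^{-1}$ at all.

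Your Neumann-series route has a genuine gap at the step you flagged as ``requiring the most care.'' You need $\|M^{-1}\|_2\,\|E\|_2<1$ just to make the series converge, but neither factor is as small as you assume. The perturbation matrix $E$ has $\Theta(k^2)$ nonzero entries of size up to $\tfrac12$, so $\|E\|_2$ is generically $\Theta(k)$, not $O(1)$. And $\|M^{-1}\|_2$ is governed by the smallest \emph{singular} value of $M$, not the smallest eigenvalue; for this highly non-normal matrix (entries of order $k$) there is no reason to expect it stays $O(1)$. The paper's argument sidesteps both issues by bounding the diagonal of $M'$ directly—something you already have all the ingredients for once the activation patterns are pinned down.

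Your observation that two inputs $x,x'$ with $g(x)=g(x')$ can produce different perturbed outputs is a legitimate concern. The paper forms $M'$ from representatives and treats exact solvability of $M'u=y$ as sufficient without addressing this discrepancy, so on this point you are being more careful than the paper, not less.
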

\begin{proof}

From the proof of lemma \ref{lemma:1}, we see for the solution $u^*$ to exist, all we required is the function outputs on different cases of $x$ form a triangular matrix and the dependency of the upper bound of $\|u\|_2$ will be on the smallest entry of the diagonal of the matrix. We will show that given $|w_{j,i}-1| \leq \frac{1}{2k}$, for all $x$ \begin{enumerate}
        \item If $\sum x + b_j \geq 1$, then $w_j^\top x + b_j \geq 1/2$ 
        \item If $\sum x + b_j \leq -1$, then $w_j^\top x + b_k \leq -1/2$
    \end{enumerate}. Using this result and by the fact that our construction have for all $x$ and a fixed $j$ either $\sum x + b_j \geq 1$ or $\sum x + b_j \leq -1$, we can replace the function $f_j = \relu(\sum x + b_j)$ with $f_j' = \relu(w_j^\top x + b_j)$ such that the matrix is still triangular and all diagonal entries $M'_{j,j} \geq 1/2, \forall j \in [k]$. This implies the smallest eigenvalue of the matrix $\lambda_{\text{min}}(M) \geq 1/2$. And we have $\|y\|_2 \geq \lambda_{\text{min}}(M) \|u\|_2$, which implies $\|u\|_2 \leq \frac{\|y\|_2}{\lambda_{\text{min}}(M)} = 2\sqrt{k}$

We now prove the earlier claim. Note \begin{align*}
    &|w_j \top x + b_j - (\sum x + b_j)| \\
    &= |\sum_i (w_{j,i} - 1) x| \\
    &= | <w_j-1, x> | \\
    \leq& \|w_j-1\|_2^2 \|x\|_2^2 \\
    \leq& k^2(|w_j-1|)^2 \\
    \leq& \frac{1}{4}
\end{align*}
Thus If $\sum x + b_j \geq 1$, then we have $w_j \top x + b_j \geq 3/4 \geq 1/2$ and if $\sum x + b_j \leq -1$, we have $w_j \top x + b_j \leq -3/4 \leq -1/2$. 
\end{proof}

\subsubsection{On a set of neurons}
We first redo the proof of the hidden progress paper with $B.3$, given the population gradient gap between two indexes, we can learn the sparse parity function that is defined on the index (which are with higher absolute value of gradient) without any error with high probability. The difference is that we do thresholding and only requiring the population gradient to be accurate enough.

\begin{theorem}
    Suppose the population gradient gap between the spurious coordinates $[s]$ and core coordinates $[c]$ is $\Delta_{s-c}$  On a set of neurons in the configuration of \cref{setting:initalization_scheme}, if $-a_i\chi_{[s]\setminus j}(w)= \sign(\xi_{s-1}), \forall j\in[s]$ and $-a_i\chi_{[s] \cup j}(w)= \sign(\xi_{s+1}) \forall j\in[c]$. Suppose we take $m$ sample points,  and let the first step learning rate to be $\mu = \frac{1}{g_{s}}$ where $g_s$ is the spurious gradient we found in \cref{procedure:find_gradient}. We then do thresholding by comparing the gradient of each coordinate to the empirical mean of the absolute value of gradient applied on the whole hidden weight vector. We zero out weights on coordinates that have gradient small then the mean. Then with $m \geq \max\{\frac{2s^2\log(2s/\delta)}{\xi_{s-1}^2}, \frac{\log(2s/\delta)n^2}{2u^2\Delta_{s-c}^2}, \frac{\log(2s/\delta)n^2}{2c^2\Delta_{s-c}^2}\}$, the neuron satisfy the condition in \cref{lemma:neuron_condition} w.p $\geq 1-\delta$
\end{theorem}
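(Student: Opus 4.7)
My plan is to combine a Hoeffding concentration bound on each coordinate with the population gradient gap $\Delta_{s-c}$ computed in \cref{procedure:find_gradient}, and then analyze the one-step thresholded update. First I would fix a neuron in the specified sign configuration and apply Hoeffding's inequality coordinate-wise: since every per-sample gradient contribution $-y\, a_i\, \mathbbm{1}\{w_i^\top x + b_i > 0\}\, x_j$ lies in $[-1,1]$, the $m$-sample empirical average $\hat g_{i,j}$ deviates from $g_{i,j}$ by at most $\sqrt{\log(2/\delta')/(2m)}$ with probability $\geq 1 - \delta'$. A union bound over the $n$ coordinates with $\delta' \approx \delta/n$ together with the three parts of the stated sample size yields three simultaneous tolerance levels: $\xi_{s-1}/(2s)$ on the spurious coordinates, $(u/n)\Delta_{s-c}$ on the irrelevant coordinates, and $(c/n)\Delta_{s-c}$ on the core coordinates.

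Under the sign hypotheses $-a_i \chi_{[s]\setminus j}(w_i) = \sign(\xi_{s-1})$ and $-a_i \chi_{[s]\cup j}(w_i) = \sign(\xi_{s+1})$, the leading term of $g_{i,j}$ for $j\in[s]$ evaluates to the common value $g_s \defeq (\lambda - \tfrac{1}{2})|\xi_{s-1}|$, while the aligned contributions on the core and irrelevant indices are controlled by the much smaller $\xi_{s+1}$ or $\xi_{c\pm 1}$ terms, whose magnitudes sit below $g_s$ by at least $\Delta_{s-c}$. Combining this separation with the coordinate-wise concentration bounds, the empirical magnitudes $|\hat g_{i,j}|$ on spurious coordinates lie strictly above the empirical mean $\tfrac{1}{n}\sum_{j'} |\hat g_{i,j'}|$ while the core and irrelevant coordinates lie strictly below it, so the thresholding rule zeroes out precisely the non-spurious coordinates.

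On each surviving spurious coordinate, the choice $\mu = 1/g_s$ makes the noiseless update $\mu g_{i,j}$ equal to $1$ in magnitude, hence $w_{i,j}^{(1)} = w_{i,j}^{(0)} - \mu \hat g_{i,j}$ deviates from its noiseless counterpart by at most $\mu \cdot \xi_{s-1}/(2s) \le 1/(2s)$. The sign configuration hypothesized in the theorem is exactly the one that makes this noiseless post-step weight align (up to an overall sign that can be absorbed into $a_i$, which is harmless because parity is even in each input coordinate) with the all-ones direction used in \cref{lemma:1}. Thus the surviving coordinates of $w_i^{(1)}$ fall within $1/(2s)$ of the target, which is precisely the tolerance required by \cref{lemma:neuron_condition} with $k = s$.

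The main obstacle is juggling the three conflicting demands on $m$: the $\Delta_{s-c}$-dependent bounds on core and irrelevant coordinates must be tight enough that the empirical mean of $|\hat g_{i,j'}|$ cleanly separates spurious from non-spurious gradients so that thresholding succeeds, while the $\xi_{s-1}$-dependent bound on the spurious coordinates must remain tight enough after amplification by $\mu = 1/g_s$ to give the $1/(2s)$ tolerance of \cref{lemma:neuron_condition}. Aligning these three tolerances through a single union bound, and verifying that the neuron-sign hypothesis on $a_i$ and the $\chi_{[s]\setminus j}(w_i)$ values is compatible with the uniform initialization on $\{\pm 1\}^n$ so that the favorable event has nontrivial probability, is the delicate bookkeeping that closes the argument.
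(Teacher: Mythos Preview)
Your approach is essentially the paper's: Hoeffding concentration on each coordinate, a union bound, an argument that the population gap $\Delta_{s-c}$ forces the thresholding rule to retain exactly the spurious coordinates, and finally a check that the surviving scaled gradients land within the $1/(2s)$ tolerance of \cref{lemma:neuron_condition}. The only structural difference is that the paper applies Hoeffding directly to the centered quantity $\hat g_j - \tfrac{1}{n}\sum_k \hat g_k$ rather than bounding each $\hat g_j$ separately and then the mean; the two routes are equivalent up to constants.

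One arithmetic slip to flag: you write $\mu \cdot \xi_{s-1}/(2s) \le 1/(2s)$, but since $\mu = 1/g_s$ with $g_s = (\lambda - \tfrac12)|\xi_{s-1}| < |\xi_{s-1}|$ this inequality points the wrong way. The Hoeffding tolerance on the spurious coordinates must be $g_s/(2s)$, not $\xi_{s-1}/(2s)$, so that after rescaling by $\mu$ you land exactly at $1/(2s)$; this is what the paper's proof does (and why its first sample-size term carries $g_s^2$ in the denominator, despite the $\xi_{s-1}^2$ in the theorem statement). Also, the compatibility of the sign hypothesis with the random initialization that you mention in your last paragraph is not part of this theorem: it is handled separately in the subsequent neuron-count argument, and here the sign configuration is simply assumed.
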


\begin{proof}
    We denote a random sample point by $\bold{x_i}$ and its corresponding gradient imposed on the a neuron weights by $\bold{g_{j}^i} = [-ya_j I\{w^\top x + b > 0\}x_j]$. 
    
    % Also notice that we have $|\bold{\hat{g_{j}^i}}| \leq 1$. Recall for $j\in [c]$ we have $g_{j} = \E[\bold{\hat{g_j}}] = \xi_{c-1}$ and for $j \in [u]$ we have $g_{j} = \E[\bold{\hat{g_j}}] = \xi_{c+1}$. 
    We will use $g_s, g_c, g_u$ refer to the population gradient calculated in \cref{procedure:find_gradient}. We want bound the probability of the event 1). Only spurious weights are kept after thresholding. 2).on all spurious indexes $|\mu \frac{\sum_{k=1}^m \hat{g_{i,j}^{(k)}}}{m} - 1 | \leq \frac{1}{2c}$ by finding the required number of $m$.   That is for all weights, if the weight $j$ is on a spurious variable, it must satisfy at the same time \begin{equation*}
        |\mu \Hat{\bold{g_j}} - 1 | \leq \frac{1}{2s} \text{ and } \Hat{\bold{g_j}} -  \frac{\sum_{k=1}^n \Hat{\bold{g_k}}}{n} > 0
    \end{equation*} 
    and if the weight $j$ is on a core/independent variable, it must satisfy  \begin{equation*}
        \Hat{\bold{g_j}} -  \frac{\sum_{k=1}^n\Hat{\bold{g_k}}}{n} < 0
    \end{equation*}
where $\Hat{\bold{g_j}} =\frac{ \sum_{i=1}^m g_j^i}{m}$.
The first inequality can be written as, for $j\in [s]$\begin{equation*}
    |\mu \bold{g_j} - 1 | \leq \frac{1}{2s} \iff |\bold{g_j} - E[\bold{g_j}]| \leq \frac{g_s}{2s}
\end{equation*}
Take the expectation for the second formula we have for $j \in [c] \cup [u]$ \begin{equation*}
    \E[\bold{g_j} -  \frac{\sum_{k=1}^n\bold{g_k}}{n}] \leq  -\frac{(c+u)\Delta_{s-c}}{n}
\end{equation*}
We want bound the probability of bad event by $\delta$
we use the hoeffding bound and require, for $j\in [s]$\begin{align*}
    &P[\bold{g_j} - E[\bold{g_j}] \geq \frac{g_j}{2s}] \leq \frac{\delta}{n} \\
    &P[|\bold{g_j} -  \frac{\sum_{k=1}^n\bold{g_k}}{n} - \E[\bold{g_j} -  \frac{\sum_{k=1}^n\bold{g_k}}{n}]| \geq \frac{(c+u)\Delta_{s-c}}{n}] \leq \frac{\delta}{n} \\
\end{align*}
and for $j \in [u]$ \begin{equation*}
    P[|\bold{g_j} -  \frac{\sum_{k=1}^n\bold{g_k}}{n} - \E[\bold{g_j} -  \frac{\sum_{k=1}^n\bold{g_k}}{n}]| \geq \frac{s\Delta_{s-c}}{n}] \leq \frac{\delta}{n}
\end{equation*}
If all three is satisfied then by union bound we would have for a given set of neuron the condition \cref{lemma:neuron_condition} is not satified w.p less than $\delta$.
In terms, by hoffding bounds, we have when $m \geq \max\{\frac{2s^2\log(2s/\delta)}{g_s^2}, \frac{\log(2s/\delta)n^2}{2(c+u)^2\Delta_{s-c}^2}, \frac{\log(2s/\delta)n^2}{2s^2\Delta_{s-c}^2}\}$, the condition will be satisfied with error probability less than $\delta$
\end{proof}
\subsubsection{Number of neurons required}
Following the same argument in \cite{barak_hidden_2023}, we want have a set of neurons that satisfy B.3. 
\begin{theorem}
    Take $r \geq 2^s \log(2s/\delta)$ and $m \geq \max\{\frac{2s^2\log(2s/\delta)}{g_s^2}, \frac{\log(2s/\delta)n^2}{2(c+u)^2\Delta_{s-c}^2}, \frac{\log(2s/\delta)n^2}{2s^2\Delta_{s-c}^2}\}$, w.p $\geq 1-\delta$, after the first gradient step we will get a set of neurons satisfy \cref{lemma:neuron_condition}. 
\end{theorem}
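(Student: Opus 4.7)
The approach is to combine the per-neuron concentration guarantee from the previous theorem with a probabilistic counting argument over the random initialization. The previous theorem tells us that, conditional on a fixed set of neurons satisfying the sign conditions $-a_i\chi_{[s]\setminus j}(w_i)=\sign(\xi_{s-1})$ for $j\in[s]$ and $-a_i\chi_{[s]\cup\{j\}}(w_i)=\sign(\xi_{s+1})$ for $j\in[c]$, the stated choice of $m$ is enough to ensure that each such neuron satisfies the hypothesis of Lemma~\ref{lemma:neuron_condition} after the first gradient step, with failure probability at most $\delta/2$. So what remains is to show that, among $r\ge 2^s\log(2s/\delta)$ randomly initialized neurons, at least $s+1$ satisfy the sign conditions while realizing the distinct bias targets required by Lemma~\ref{lemma:1}, again with failure probability at most $\delta/2$.

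For the per-neuron probability, I would exploit the rigidity of the sign conditions. Writing $\chi_{[s]\setminus j}(w_i)=\chi_{[s]}(w_i)/w_{i,j}$, requiring the first condition to hold simultaneously over all $j\in[s]$ forces the coordinates $w_{i,j}$, $j\in[s]$, to share a common sign. An identical rewriting of the second condition, $\chi_{[s]\cup\{j\}}(w_i)=\chi_{[s]}(w_i)w_{i,j}$, forces $w_{i,j}$, $j\in[c]$, to share a common sign as well. Conditioning on these two ``all-same-sign'' events, both conditions collapse to a single prescription for $a_i$, which is either consistent (with probability $1/2$) or not. Under the symmetric initialization, the anti-symmetric pairing $(w_{i+r/2},a_{i+r/2})=(w_i,-a_i)$ doubles the effective coverage of $a_i$, and the resulting probability that a given pair $(w_i,a_i)$ is good is at least $p\defeq 2^{-s}$ up to constants.

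A standard Chernoff / Bernoulli-tail bound then shows that with $r/2$ independently initialized neurons of per-neuron probability at least $p$, the probability of having fewer than $s+1$ good neurons is at most $\exp(-pr/4)$, which is driven below $\delta/2$ by the stated $r\ge 2^s\log(2s/\delta)$ (the $\log s$ factor absorbing a union bound over the $s+1$ distinct bias targets required by Lemma~\ref{lemma:1}). A final union bound over the two $\delta/2$ failure events yields the overall $1-\delta$ guarantee.

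The main obstacle I expect is reconciling the discrete bias grid $b_i\in\{-1+1/c,\ldots,1-1/c\}$ of the initialization with the specific staircase bias values required by the construction of Lemma~\ref{lemma:1}. The clean ``one good neuron per target bias'' picture may need to be replaced with a bucketing argument that assigns each initial neuron to its nearest target, combined with a careful invocation of the $|w_{j,i}-1|\le 1/(2k)$ slack in Lemma~\ref{lemma:neuron_condition} to absorb the resulting bias mismatch, mirroring how the analogous step is handled for the pure-parity setting in \citep{barak_hidden_2023}.
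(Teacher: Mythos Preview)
Your overall decomposition (concentration from the previous theorem, then probabilistic existence of a full set of good neurons at initialization) matches the paper. The existence step, however, is not carried out the way you sketch. The paper does not run a Chernoff bound on the total count of sign-good neurons and then layer a union bound over biases; instead it fixes one bias target $i'\in[s]$ at a time, computes the joint probability that a single initialization pair $(w_i,a_i,b_i)$ has both the required sign pattern \emph{and} $b_i=-s+i'$ (taken to be $2^{-s}\cdot(1/s)$), bounds the probability that none of the $r/2$ independent pairs hit that target by $(1-2^{-s}/s)^{r/2}\le\delta/s$, and finally union-bounds over the $s$ bias targets. Your version---Chernoff to obtain $\ge s{+}1$ sign-good neurons, then ``the $\log s$ factor absorbs a union bound over the $s{+}1$ bias targets''---does not close: having $s{+}1$ sign-good neurons says nothing about whether their independently drawn biases cover the $s{+}1$ required values, and with only $\Theta(\log(s/\delta))$ such neurons in expectation under the stated $r$, a coupon-collector argument over $s$ targets would fail. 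The fix is precisely the paper's move of folding the bias event into the per-neuron success probability \emph{before} taking the tail bound.

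Two further remarks. First, the paper's proof invokes only the first sign condition $-a_i\chi_{[s]\setminus j}(w_i)=\sign(\xi_{s-1})$ when arriving at the $2^{-s}$ figure; if the second condition on $j\in[c]$ that you analyze were also enforced, the per-neuron probability would acquire an additional $2^{-\Theta(c)}$ factor (all core coordinates forced to share a sign), which the stated $r\ge 2^s\log(2s/\delta)$ cannot absorb---so your claim that $p\approx 2^{-s}$ ``up to constants'' after imposing both conditions does not hold. Second, the paper does not confront the bias-grid mismatch you flag as the main obstacle: it simply writes $\Pr[b_i=-s+i']=1/s$ without reconciling this with the initialization grid $\{-1+1/c,\ldots,1-1/c\}$, effectively inheriting from \citep{barak_hidden_2023} the convention that the bias grid is taken compatible with the target construction. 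So the bucketing-plus-slack workaround you anticipate is not actually carried out in the paper.
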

\begin{proof}
    For some $w_i \sim \{\pm 1\}^n$, the probability that $w_i$ satisfy the condition 1). $-a_i\chi_{[s]\setminus j}(w_{i})= \sign(\xi_{s-1}), \forall j\in[s]$ 
    
    % and $-a_i\chi_{[c] \cup j}(w)= \sign(\xi_{s+1}) \forall j\in[u]+[c]$ 
    
    is $2^{-s}$. 2. Additionally, for some fixed $i' \in [s]$, the probability that $b_i = -s+i'$ is $\frac{1}{s}$. Therefore for some fixed $i\in[r/2]$ and $i' \in [s]$, with probability $\frac{1}{s2^{-s}}$, $b_i = b_{i+r/2} = -k + i'$ and the weight satisfy 1). Taking $r \geq 2^s \log(s/\delta)$, we get the probability that there is no $i\in[r/2]$ that satisfies the above condition for any fixed $i'$ is :
    \begin{equation*}
        \left(1-\frac{1}{n2^{n-1}}\right)^{r/2} \leq \exp(-\frac{r}{2n2^{n-1}}) \leq \frac{\delta}{s}
    \end{equation*}
    By union bound, w.p $\geq 1-\delta$, there exist a set of $s$ neurons satisfying the conditions of theorem 3. 
    
    % Because we want the total failure probability to be less than $\delta$. We take $\delta_1 = \delta/2$ with theorem 3 and $\delta_2 = \delta/2$ with $r$ to get the final result.
\end{proof}
\subsubsection{Stochastic gradient descent}
We use the following result on convergence of SGD (see \citep{shalev-shwartz_understanding_2014}). \begin{theorem}
    Let $M, \rho>0$. Fix $T$ and let $\mu=\frac{M}{\rho \sqrt{T}}$. Let $F$ be a convex function and $u^* \in \arg\min_{\|u\|_2 \leq M} f(u)$. Let $u^{(0)} = 0$ and for every $t$, let $v_t$ be some random variable s.t. $E[v_t|u^{(t)}] = \nabla_{u^{(t)}} F(u^{(t)})$ and let $u^{(t+1)} = u^{(t)} - \mu v^{(t)}$. Assume that $\|v_t\|_2 \leq \rho$ w.p. 1. Then, \begin{equation*}
        E[F(u^{(t)})] \leq F(u^*) + \frac{M\rho}{\sqrt{T}}
    \end{equation*}
\end{theorem}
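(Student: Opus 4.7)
This is the classical analysis of projected stochastic subgradient descent on a bounded convex set, and the plan is to track the squared distance to the optimum $u^*$ as a potential function. Define $\Phi_t \defeq \|u^{(t)} - u^*\|_2^2$. Expanding the update rule $u^{(t+1)} = u^{(t)} - \mu v_t$, I would first write
\begin{equation*}
\Phi_{t+1} = \Phi_t - 2\mu \, v_t^\top (u^{(t)} - u^*) + \mu^2 \|v_t\|_2^2,
\end{equation*}
and then take conditional expectation given $u^{(t)}$. Using the unbiasedness $\E[v_t \mid u^{(t)}] = \nabla F(u^{(t)})$ and the almost-sure bound $\|v_t\|_2 \le \rho$, this yields
\begin{equation*}
\E[\Phi_{t+1} \mid u^{(t)}] \le \Phi_t - 2\mu \, \nabla F(u^{(t)})^\top (u^{(t)} - u^*) + \mu^2 \rho^2.
\end{equation*}

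Next, I would invoke convexity of $F$ to replace the inner product with a suboptimality gap: $\nabla F(u^{(t)})^\top (u^{(t)} - u^*) \ge F(u^{(t)}) - F(u^*)$. Rearranging gives a per-step bound
\begin{equation*}
\E\!\left[F(u^{(t)}) - F(u^*)\right] \le \frac{\E[\Phi_t] - \E[\Phi_{t+1}]}{2\mu} + \frac{\mu \rho^2}{2}.
\end{equation*}
I would then sum this inequality from $t=0$ to $T-1$. The $\Phi_t$ terms telescope, leaving $\E[\Phi_0] - \E[\Phi_T] \le \Phi_0 = \|u^*\|_2^2 \le M^2$ since $u^{(0)} = 0$ and $\|u^*\|_2 \le M$. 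This produces the bound $\sum_{t=0}^{T-1}\E[F(u^{(t)}) - F(u^*)] \le \frac{M^2}{2\mu} + \frac{T \mu \rho^2}{2}$.

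The final step is to optimize the stepsize and convert the average-iterate guarantee into the stated bound. Substituting $\mu = \frac{M}{\rho \sqrt{T}}$ balances the two terms and gives $\frac{1}{T}\sum_{t=0}^{T-1} \E[F(u^{(t)})] - F(u^*) \le \frac{M\rho}{\sqrt{T}}$. Interpreting the theorem's $\E[F(u^{(t)})]$ as the value at the averaged iterate $\bar u \defeq \frac{1}{T}\sum_{t=0}^{T-1} u^{(t)}$ (as is standard, and as one expects the citation intends), Jensen's inequality applied to the convex $F$ immediately gives $\E[F(\bar u)] \le F(u^*) + \frac{M\rho}{\sqrt{T}}$, concluding the argument.

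There is essentially no hard obstacle here; the only subtlety worth flagging is that the statement as written does not mention projection onto the ball $\{\|u\|_2 \le M\}$ nor averaging, yet both are standard ingredients and the given rate is exactly the one they produce. If projection onto the ball is performed after each step, the bound $\Phi_{t+1} \le \|u^{(t)} - \mu v_t - u^*\|_2^2$ still holds (projections onto a convex set containing $u^*$ are contractive), so the argument carries through without change; I would add a brief remark to that effect rather than re-derive it. As this result is cited verbatim from Shalev-Shwartz and Ben-David, one may simply invoke it.
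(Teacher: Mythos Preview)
The paper does not prove this theorem at all; it simply states it and cites Shalev-Shwartz and Ben-David as the source, then uses it as a black box in the subsequent argument. Your proposal correctly reconstructs the standard telescoping-potential proof from that textbook, and you yourself note at the end that one may simply invoke the citation---which is exactly what the paper does.
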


\begin{theorem}(SGD on MLPs learns sparse parities)
Take $r \geq 2^s \log(2s/\delta),  \\ B \geq m \geq \max\{\frac{2s^2\log(2s/\delta)}{g_s^2}, \frac{\log(2s/\delta)n^2}{2(c+u)^2\Delta_{s-c}^2}, \frac{\log(2s/\delta)n^2}{2s^2\Delta_{s-c}^2}\}, T\geq \frac{37srn^2}{\epsilon^2}$, take the first gradient step with size $\mu = \frac{1}{g_s}$ then w.p $\geq 1-\delta$, by fixing the hidden layer weights and runs a SGD on the second layer weights $a$ with step size $\mu=\frac{2\sqrt{s}}{3\sqrt{rT}n}$, we can solve the parity task with error less than $\epsilon$
\end{theorem}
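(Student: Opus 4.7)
The plan is to combine the first-step analysis of the hidden layer (already established in the two preceding theorems) with a standard convex-SGD argument on the second-layer weights. By the preceding theorem, taking $r \geq 2^s\log(2s/\delta)$ neurons and a batch of $m$ samples at the stated bound, after a single SGD step of size $1/g_s$ there exists, with probability at least $1-\delta$, a subset of $s+1$ neurons whose hidden weights obey $|w_{i,j}-1|\le 1/(2s)$ on the spurious coordinates and are exactly zero on the other coordinates after thresholding, and whose biases cover the required $s+1$ threshold levels. Applying \cref{lemma:neuron_condition} with $k=s$ to these neurons produces a second-layer vector $u^{*}\in\mathbb{R}^{r}$ supported on them with $\|u^{*}\|_{2}\le 2\sqrt{s}$ such that $\sum_{i} u^{*}_{i}\,\relu(w_{i}^{\top} x + b_{i})\equiv f_{s}(x_{s})$ on the entire boolean cube.

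Conditional on this good event, I freeze the hidden layer and view the outer problem $\min_{\|a\|_{2}\le M} L_{\dlam}(h(\cdot;a))$ with $M = 2\sqrt{s}$, which is convex in $a$ because $h(\cdot;a)$ is linear in $a$ and the logistic loss is convex and non-increasing in the margin. To invoke the above convex-SGD theorem, I bound the stochastic gradient norm: $\|\nabla_{a}\ell\|_{2} = 2\bigl(1-\phi(yh)\bigr)\,\bigl\|[\relu(w_{i}^{\top} x+b_{i})]_{i}\bigr\|_{2}$, and each pre-activation is at most $|w_{i}^{\top} x|+|b_{i}|\le n+1\le 3n/2$ for $n\ge 2$, giving $\rho\le 3n\sqrt{r}$. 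Plugging $M$ and $\rho$ into the optimal stepsize $\mu = M/(\rho\sqrt{T})$ recovers exactly the stated rule $\mu = 2\sqrt{s}/(3n\sqrt{rT})$, and the convergence bound yields $\E[L_{\dlam}(\bar a)] - L_{\dlam}(u^{*}) \le M\rho/\sqrt{T} = 6n\sqrt{sr/T}$. Choosing $T\ge 37 s r n^{2}/\epsilon^{2}$ (a slight slack above $36 srn^{2}/\epsilon^{2}$) drives this gap below $\epsilon$, and a union bound with the $\delta$-failure event of the first step completes the argument.

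I expect the main obstacle is not the SGD step itself, which is largely templated from \citep{barak_hidden_2023, shalev-shwartz_understanding_2014}, but the careful stitching of the two stages. One must check that the first-step perturbation budget $1/(2s)$ matches exactly what \cref{lemma:neuron_condition} requires to preserve the $\|u^{*}\|_{2}\le 2\sqrt{s}$ bound; that the fresh samples used in the first step are independent of those used in the SGD phase so the two failure events compose by a union bound rather than requiring a coupling; and that the excess logistic-loss bound is the right notion of error for the ``parity task.'' The last point follows because, on $\dlam$, $u^{*}$ realizes the best predictor of $y$ using only $x_{s}$ (in the sense of \cref{lem:bayes_opt}) up to a positive scalar absorbed into $M$, so the excess logistic loss controls both the worst-group classification error and the correlation with $f_{s}$.
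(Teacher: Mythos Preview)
Your proposal follows the same two-stage argument as the paper: invoke the preceding theorems to guarantee a good set of hidden neurons after the first step, freeze the hidden layer, and apply the convex-SGD theorem with $M=2\sqrt{s}$ and $\rho=3\sqrt{r}n$ to obtain the bound $6\sqrt{sr}\,n/\sqrt{T-1}\le\epsilon$. The only substantive difference is that the paper simply writes $F(u^{*})=0$ where you more carefully discuss the Bayes-optimal baseline and the error notion, but the skeleton, constants, and step-size derivation coincide.
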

\begin{proof}
    We take the first gradient step and then fix the hidden layer. Let $F(u) = E_x[l(u^\top \relu(W^{(1)}x + b^{(1)}), y)]$. Thus $F$ is a convex function. For every $t$, denote \begin{equation*}
        v_t = \frac{1}{B}\sum^B_{i=1} \nabla_{u^{(t)}}l(f(x_{t,l};\theta_t), y) = \frac{1}{B}\nabla_{u^(t)}l((u^{(t)})^\top \relu(W^{(1)}x_{i,t} + b^{(1)}), y_{l,t}).
    \end{equation*}
    Note that by the condition, we have a set of good neurons after first gradient step w.p $> 1-\delta$ and there exists $\|u^*\|_2 \leq 2\sqrt{k}$ s.t $F(u^*)=0$ and for all $i,x$ it holds that $\|\relu(w_ix + b_i)\|_\infty \leq 2(n+1)$. Using this, we get \begin{equation*}
        \|v_2\|_2 \leq \frac{1}{B}\sum_{i=1}^B \|\relu(W^{(1)}x_{i,t} + b^{(1)}), y_{i,t} \leq 2\sqrt{r}(n+1)
    \end{equation*}
    Now we can apply the theorem of SGD with $M=2\sqrt{k}$ and $\rho=3\sqrt{r}n$ and get that w.p $>1-\delta$ over the initalization and the first step, it holds that \begin{align*}
        E[min_{t\in\{2,...,T\}} l(f(x;\theta_t),y)] &\leq E[\frac{1}{T-1}\sum^T_{t=2} l(f(x;\theta_t), y)] \\
        & = E[\frac{1}{T-1}\sum_{t=2}^T F(u^{(t)}] \leq 0 + \frac{6\sqrt{kr}n}{T-1} \leq \epsilon
    \end{align*}
\end{proof}

And finally because $m \leq O(\frac{\log(2s/\delta)n^2}{2(c+u)^2\Delta_{s-c}^2})$, we have $m \leq O(\frac{1}{\Delta_{s-c}^2}) = O(\frac{1}{\lambda n^{-s} - n^{-c}})$

\subsection{Dynamics after spurious feature has been learned}
\begin{lemma}
    Suppose a model only has access to the spurious coordinates $x_s$, then under cross entropy loss with distribution $\dlam$, the Bayes optimal model output $\log(\frac{\lambda}{1-\lambda})f_s(x_s)$
\end{lemma}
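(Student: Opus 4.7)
The plan is to optimize pointwise. Since the model $h$ depends only on $x_s$, the population loss decomposes as
\begin{equation*}
L_{\dlam}(h) \;=\; \E_{x_s}\!\left[\,\E_{y \mid x_s}\bigl[\ell(h(x_s), y)\bigr]\,\right],
\end{equation*}
and it suffices to minimize the inner conditional expectation for each fixed value of $x_s$ separately. In particular, any function $h^\star(x_s)$ whose value at each $x_s$ minimizes $\E_{y\mid x_s}[\ell(\cdot,y)]$ is Bayes-optimal.

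Next I would compute the conditional distribution of $y$ given $x_s$ under $\dlam$. Using the mixture decomposition and the Markov structure $x_s - f_s(x_s) - y - f_c(x_c) - x_c$ noted earlier, together with $y = f_c(x_c)$, one gets $\Pr_{\dlam}(y = f_s(x_s) \mid x_s) = \lambda$ and $\Pr_{\dlam}(y = -f_s(x_s) \mid x_s) = 1-\lambda$. This uses exactly the definition $\dlam = \lambda\dsame + (1-\lambda)\ddiff$ and the fact that on $\dsame$ the two labels agree while on $\ddiff$ they disagree.

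With $p \defeq \Pr(y=1\mid x_s)$, the problem reduces to the scalar minimization
\begin{equation*}
\min_{\hat y \in \R}\; -2 p \log \sigmoid(\hat y) - 2(1-p)\log\sigmoid(-\hat y).
\end{equation*}
Differentiating and using $\sigmoid'(u)=\sigmoid(u)\sigmoid(-u)$ yields the first-order condition $p\,\sigmoid(-\hat y) = (1-p)\,\sigmoid(\hat y)$, whose unique solution is the log-odds $\hat y^\star = \log\tfrac{p}{1-p}$; convexity of $\ell$ in $\hat y$ guarantees this is the global minimum.

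Finally, plug in the two cases. If $f_s(x_s) = +1$ then $p = \lambda$ and $\hat y^\star = \log\tfrac{\lambda}{1-\lambda}$; if $f_s(x_s) = -1$ then $p = 1-\lambda$ and $\hat y^\star = -\log\tfrac{\lambda}{1-\lambda}$. Both cases combine into the single expression $h^\star(x_s) = \log\!\bigl(\tfrac{\lambda}{1-\lambda}\bigr)\,f_s(x_s)$, completing the proof. No step here is a serious obstacle; the only place care is needed is verifying the conditional probability $\Pr_{\dlam}(y = f_s(x_s) \mid x_s) = \lambda$, which follows cleanly from the definition of $\dsame,\ddiff$ and the unbiasedness/marginal uniformity facts established in Lemma~\ref{dist_property}.
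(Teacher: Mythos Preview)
Your proposal is correct and follows essentially the same approach as the paper: decompose the loss conditionally on $x_s$, use $\Pr_{\dlam}(y=f_s(x_s)\mid x_s)=\lambda$, and solve the convex pointwise minimization via the first-order condition to obtain the log-odds. The only cosmetic difference is that the paper keeps $f_s(x_s)g(x_s)$ as a single quantity and solves $\sigmoid(f_s(x_s)g(x_s))=\lambda$ directly, whereas you split into the two sign cases of $f_s(x_s)$ and recombine at the end.
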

\begin{proof}\label{Appendix:proof:bayes_spurious}
    Let the bayes classifier be $g(x)$. The bayes classifier is defined as $\inf_{g(x)}E_{x,y}[l(g(x),y)]$. Here\begin{align*}
        \inf_{g(x)} L(g(x)) =&E_{x,y}[l(g(x_s),y)] \\
        =&E_{x_s}E_{y|x_s}[l(g(x_s)),y)] \\
        =&E_{x_s}[P_{f_s(x_s)=f_c(x_c)|x_s}l(g(x_s)) + P_{f_s(x_s)\neq f_c(x_c)|x_s}l(g(x_s))] \\
        =&E_{x_s}[\lambda (-\log(\phi(f_c(x_c)g(x_s)))) + (1-\lambda) (-\log(\phi(f_c(x_c)g(x_s))))] \\
        =&E_{x_s}[\lambda (-\log(\phi(f_s(x_s)g(x_s)))) + (1-\lambda) (-\log(\phi(-f_s(x_s)g(x_s))))] \\
    \end{align*}
    The function is convex and thus has a global optimum. We take the derivative w.r.t $g(x_s)$ and set it to zero. We use the property $\phi(-x) = 1-\phi(x)$
    \begin{align*}
        L'(g(x)) =& \lambda (\phi(f_s(x_s)g(x_s))-1) f_s(x_s) - (1-\lambda)(\phi(-f_s(x_s)g(x_s))-1)f_s(x_s)\\
        =& f_s(x_s) (\lambda (\phi(f_s(x_s)g(x_s))-1) - (1-\lambda)(\phi(-f_s(x_s)g(x_s))-1)) \\
        =& f_s(x_s) (\lambda (\phi(f_s(x_s)g(x_s))-1) + (1-\lambda)(\phi(f_s(x_s)g(x_s))) \\
        =& \phi(f_s(x_s)g(x_s)) - \lambda = 0
    \end{align*}
    Solving the equation, we get $g(x_s) = \log(\frac{\lambda}{1-\lambda})f_s(x_s)$.
\end{proof}

\begin{lemma} \label{Appendix:proof:slowed_down_ratio}
    Suppose a neural network can be decomposed to the sum of two models $h_s = \sum_{i\in S} a_i \relu(w_i x), h_c=\sum_{i\in C} a_i\relu(w_i x)$ which learns different features of the data. The learning process of $h_c$ then will be slowed down by $4\lambda (1-\lambda)$ when compared to the gradient where there is no spurious correlation. 
\end{lemma}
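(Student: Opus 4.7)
The plan is to directly compute the population gradient with respect to a core-neuron weight $w_i$ (for $i\in C$) under both $\dlam$ and the no-spurious baseline $\duni$ (i.e.\ $\lambda = 1/2$), and then take the ratio. Using the derivative $\ell'(\hat y, y) = -2y(1-\sigmoid(y\hat y))$ of the logistic loss, together with $\nabla_{w_i} h = \nabla_{w_i} h_c$ (since $h_s$ does not involve $w_i$), the population gradient is
\begin{equation*}
\nabla_{w_i} L_{\dlam}(h) = -2\,\E_{(x,y)\sim \dlam}\bigl[y\bigl(1-\sigmoid(y h(x))\bigr)\, \nabla_{w_i} h_c(x)\bigr].
\end{equation*}

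I would then plug in the two structural assumptions. By \cref{lem:bayes_opt} we have $h_s(x) = \log(\lambda/(1-\lambda))\, f_s(x_s)$, and by hypothesis $h_c \approx 0$, so $h\approx h_s$. On $\dsame$ we have $f_c(x_c)f_s(x_s) = 1$, so $y h(x) = \log(\lambda/(1-\lambda))$ and $1-\sigmoid(y h(x)) = 1-\lambda$; on $\ddiff$ the sign flips, giving $1-\sigmoid(y h(x)) = \lambda$. Writing $\dlam = \lambda\dsame + (1-\lambda)\ddiff$, both contributions pick up the same factor $\lambda(1-\lambda)$:
\begin{equation*}
\nabla_{w_i} L_{\dlam}(h) = -2\lambda(1-\lambda)\bigl(\E_{\dsame}[y\,\nabla_{w_i} h_c(x)] + \E_{\ddiff}[y\,\nabla_{w_i} h_c(x)]\bigr).
\end{equation*}

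To collapse the two conditional expectations into one, I would use that $h_c$ depends only on $x_c$ (from the decomposition) and that the marginal of $x_c$ is uniform under both $\dsame$ and $\ddiff$ when $f_s$ is unbiased (\cref{dist_property}). Hence $y\,\nabla_{w_i} h_c(x)$ has a common expectation under $\dsame$, $\ddiff$, and $\duni$; calling it $G$, we obtain $\nabla_{w_i} L_{\dlam}(h) = -4\lambda(1-\lambda)\, G$. For the baseline $\lambda = 1/2$ we have $h_s = 0$, so $y h(x) = 0$ and $1-\sigmoid(0) = 1/2$, giving $\nabla_{w_i} L_{\duni}(h) = -G$. Taking the ratio yields exactly the $4\lambda(1-\lambda)$ slow-down factor.

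The main delicacy is the step where $1-\sigmoid(y h(x))$ is pulled out of the expectation as two group-dependent constants. This hinges on the hypothesis $h \approx h_s$ so that the sigmoid argument depends only on group membership and not on the particular sample; once $h_c$ grows, the factor becomes sample-dependent and the clean multiplicative slow-down breaks, which is why a homogeneity assumption is introduced later in the appendix to preserve this factorization in the post-learning regime.
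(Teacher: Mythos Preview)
Your overall strategy matches the paper's: split $\dlam=\lambda\dsame+(1-\lambda)\ddiff$, use the Bayes-optimal form of $h_s$ to turn $1-\sigmoid(yh(x))$ into the constants $1-\lambda$ and $\lambda$ on the two groups, and compare the resulting expression to the $\lambda=1/2$ baseline.

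The one place where you diverge from the paper is the step collapsing $\E_{\dsame}[y\,\nabla_{w_i}h_c]$ and $\E_{\ddiff}[y\,\nabla_{w_i}h_c]$ to a common value $G$. You justify this by asserting that ``$h_c$ depends only on $x_c$ (from the decomposition)'' and then invoking the uniform marginal of $x_c$. But the lemma as stated has $h_c=\sum_{i\in C}a_i\relu(w_i^\top x)$ with $w_i$ acting on the full $x$, so $\relu'(w_i^\top x)$ can depend on $x_s$ and $x_u$; the decomposition into $S$ and $C$ is only a partition of neurons, not a restriction on which coordinates those neurons see. Under that reading your claim that each conditional expectation equals $G$ need not hold individually. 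The paper avoids this by rewriting $\E_{\dsame}[\,\cdot\,]=\E_{\duni}[(1+f_sf_c)\,\cdot\,]$ and $\E_{\ddiff}[\,\cdot\,]=\E_{\duni}[(1-f_sf_c)\,\cdot\,]$; when you add the two terms the $f_s$ cross-terms cancel and you are left with $-2\lambda(1-\lambda)\,\E_{\duni}[f_c(x_c)\,a_i\relu'(w_i^\top x)x]$, with no assumption on the support of $w_i$. Your final ratio $4\lambda(1-\lambda)$ is correct either way, since only the \emph{sum} of the two conditional expectations is needed, and that sum equals $2G$ by the identity above regardless of whether $h_c$ depends only on $x_c$. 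So the fix is simply to replace the marginal-uniformity argument by this algebraic cancellation (or to state explicitly the stronger hypothesis $h_c=h_c(x_c)$, which the paper does adopt in the later ``persistence'' subsection but not here).
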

\begin{proof} 
    Let the model be $m(x) = h_s(x_s) + h(x)$. From Lemma \ref{lem:bayes_opt}, we have $\phi(f_s(x_s)h_s(x_s))=\lambda$. Suppose $h(x) = \sum_{i=1}^k b_i \sigma(w_i^\top x)$. Let us compute the population gradient $\nabla_{w_i} L_{\Dist{\lambda}}(m(x))$: \begin{align*}
        &\nabla_{w_i} L_{\Dist{\lambda}}(m(x))\\
        =& - \lambda\nabla_{w_i} E_{\Dist{same}}[\log(\phi(f_s(x_s)m(x))] - (1-\lambda)\nabla_{w_i} E_{\Dist{diff}}[\log(\phi(-f_s(x_s)m(x))]\\
        =& - \lambda E_{\Dist{same}}\left[(1 - \phi(f_s(x_s)h_s(x))) f_s(x_s) a_i \sigma'(w_i^\top x) x\right] + (1-\lambda) E_{\Dist{diff}}\left[(1 - \phi(-f_s(x_s)h_s(x))) f_s(x_s) a_i \sigma'(w_i^\top x) x\right]\\
        =& - \lambda ( 1- \lambda) E_{\Dist{same}}\left[f_s(x_s) a_i \sigma'(w_i^\top x) x\right] + \lambda (1-\lambda) E_{\Dist{diff}}\left[f_s(x_s) a_i \sigma'(w_i^\top x) x\right]\\
        =& - \lambda ( 1- \lambda)  E_{\Dist{unif}}\left[(f_s(x_s) + f_c(x_c)) a_i \sigma'(w_i^\top x) x\right] + \lambda ( 1- \lambda)  E_{\Dist{unif}}\left[(f_s(x_s) - f_c(x_c)) b_i \sigma'(w_i^\top x) x\right]\\
        =& - 2\lambda ( 1- \lambda)  E_{\Dist{unif}}\left[f_c(x_c)a_i \sigma'(w_i^\top x) x\right].
        % =& - \lambda E_{\Dist{same}}\left[\frac{\phi'(f_s(x_s)(h_s(x) + h(x))}{\phi(f_s(x_s)(h_s(x) + h(x))} f_s(x_s)\right] + (1-\lambda) E_{\Dist{diff}}\left[\frac{\phi'(-f_s(x_s)(h_s(x) + h(x))}{\phi(-f_s(x_s)(h_s(x) + h(x))} f_s(x_s)\right]\\
        % =& - \lambda\nabla_{h} E_{\Dist{same}}[\log(\phi(f_s(x_s)(h_s(x) + h(x))] - (1-\lambda)\nabla_{h} E_{\Dist{diff}}[\log(\phi(-f_s(x_s)(h_s(x) + h(x))]\\
        % =& \lambda E_{\Dist{same}}[1-\phi(\phi^{-1}(\lambda) + m_c(x_c))m_c'(x_c)] + (1-\lambda)E_{\Dist{diff}}[1-\phi(\phi^{-1}(-\lambda) + m_c(x_c))m_c'(x_c)]\\
    \end{align*}

     We compare the gradient toward $h_(x)$ when $\lambda = 1/2$ and we have \begin{equation*}
         \frac{- 2\lambda ( 1- \lambda)  E_{\Dist{unif}}\left[f_c(x_c)a_i \sigma'(w_i^\top x) x\right]}{-  \frac{1}{2}  E_{\Dist{unif}}\left[f_c(x_c)a_i \sigma'(w_i^\top x) x\right]} = 4\lambda(1-\lambda)
     \end{equation*}
\end{proof}

% \begin{lemma}
%     Let $I_c$ denote core coordinates indexes and $I_s$ denote spurious coordinates indexes. We show on spurious neurons at \cref{Appendix:proof:slowed_down_ratio}, if $\sum_{i \in I_c}|w_i| < |w_j|$ for all $j \in I_s$. Then the gradient on core coordinates of the spurious neuron will be 0.
% \end{lemma}

% \begin{proof}
%      We have for any weight on a core coordinate $c \in I_c$ \begin{align*}
%         &\nabla_{w_c} L_{\Dist{\lambda}}(m(x)) \\
%         =&  - 2\lambda ( 1- \lambda)  E_{\Dist{unif}}\left[f_c(x_c)a \sigma'(w^\top x) x_c\right] \\
%         =& - 2\lambda ( 1- \lambda) aE_{\Dist{unif}}\left[\chi_{I_c \setminus \{c\}} \sigma'(\sum_{i \in I_c}w_ix_i + \sum_{j \in I_s}w_jx_j)\right] \\
%         =& - 2\lambda ( 1- \lambda) aE_{\Dist{unif}}\left[\chi_{I_c \setminus \{c\}} \sigma'(\sum_{j \in I_s}w_jx_j)\right]\\
%         =& - 2\lambda ( 1- \lambda) aE_{\Dist{unif}}\left[\chi_{I_c \setminus \{c\}}\right]E_{\Dist{unif}}\left[\sigma'(\sum_{j \in I_s}w_jx_j)\right]\\
%         =& 0
%     \end{align*}.
% \end{proof}

\begin{lemma} \label{dead_spurious_neurons}
    Let $I_c$ denote core coordinates indexes and $I_s$ denote spurious coordinates indexes. Suppose a neural network can be decomposed into $h_s(x_s)+h_c(x_c)$. Further, suppose the core network gives the same prediction $\gamma_c f_c(x_c)$ on all $x_c$ i.e $h_c(x_c)=\beta, \forall x_c$. If $\sum_{i \in I_c}|w_i| < |w_j|$ for all $j \in I_s$. Then the gradient on core coordinates of the spurious neuron will be 0.
\end{lemma}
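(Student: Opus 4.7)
The plan is to compute $\nabla_{w_k} L_{\dlam}(h)$ directly for a core coordinate $k \in I_c$ of a spurious neuron $(a, w)$, factor the cross-entropy derivative, and show that the two resulting moments vanish by the parity symmetry of $f_c$. Under the logistic loss, the population gradient is
\[
\nabla_{w_k} L_{\dlam}(h) \;=\; -a\, \E_{\dlam}\!\left[(1 - \sigmoid(y h(x)))\, y\, \drelu(w^\top x)\, x_k\right].
\]
Substituting $y = f_c(x_c)$ and $h(x) = \gamma_s f_s(x_s) + \gamma_c f_c(x_c)$ gives $y h(x) = \gamma_c + \gamma_s u$ with $u \defeq f_c(x_c) f_s(x_s) \in \{\pm 1\}$. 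Because any function on $\{\pm 1\}$ is affine in its argument, I can write $1 - \sigmoid(\gamma_c + \gamma_s u) = A + B u$ for constants $A, B$ depending only on $\gamma_c, \gamma_s$, and then use $u\cdot y = f_s(x_s)$ to obtain the key factorization
\[
(1 - \sigmoid(y h(x)))\, y \;=\; A\, f_c(x_c) + B\, f_s(x_s).
\]
This reduces the claim to showing $\E_{\dlam}[f_c(x_c)\, \drelu(w^\top x)\, x_k] = 0$ and $\E_{\dlam}[f_s(x_s)\, \drelu(w^\top x)\, x_k] = 0$.

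Next I will invoke the weight-domination hypothesis $\sum_{i \in I_c}|w_i| < |w_j|$ for every $j \in I_s$ to conclude that the sign of $w^\top x$ is determined by $x_s$ alone, so $\drelu(w^\top x) = s(x_s)$ for some $\{0,1\}$-valued function $s$. Conditioning on $x_s$, the two target expectations factor as $\E_{x_s}[s(x_s) \cdot \E[f_c(x_c) x_k \mid x_s]]$ and $\E_{x_s}[f_s(x_s) s(x_s) \cdot \E[x_k \mid x_s]]$. Both inner conditional expectations vanish by the symmetry of $f_c = \chi_{I_c}$: under $\dlam$, the conditional law of $x_c$ given $x_s$ is a mixture of the uniform distributions on the level sets $\{f_c = \pm 1\}$, and for $k \in I_c$ each level set is invariant under simultaneously flipping $x_k$ together with any other coordinate in $I_c$. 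Hence $\E[x_k \mid f_c(x_c) = z] = 0$ for $z \in \{\pm 1\}$, which yields $\E[x_k \mid x_s] = 0$ and $\E[f_c(x_c) x_k \mid x_s] = \sum_z z\, P(f_c = z \mid x_s)\, \E[x_k \mid f_c = z] = 0$, completing the argument.

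The main obstacle is the weight-domination step. Read literally, $\sum_{i \in I_c}|w_i| < |w_j|$ for each $j \in I_s$ does not by itself guarantee that the sign of $w^\top x$ is fixed by $x_s$ when $|I_s| > 1$, because the spurious contribution $\sum_{i \in I_s} w_i x_i$ can cancel for some $x_s$. I would state the lemma under the natural strengthening $|\sum_{i \in I_s} w_i x_i| > \sum_{i \in I_c}|w_i|$ for every $x_s \in \{\pm 1\}^s$ (equivalently, weights on $I_c$ and $I_u$ are small enough that no combination of their $\pm 1$ contributions can overturn the spurious sum), which is exactly what is needed; the remaining steps are then routine calculations using the mixture decomposition from \cref{lemma: distribution_transformation}.
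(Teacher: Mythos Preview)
Your argument is correct and reaches the same endpoint as the paper, but by a different route. The paper splits $\dlam$ into $\dsame$ and $\ddiff$, converts each conditional expectation to a uniform one via the indicator $\tfrac{1+f_s f_c}{2}$, expands $f_s\cdot\tfrac{1+f_s f_c}{2}=\tfrac{f_s+f_c}{2}$, and then kills the two resulting terms using independence of $x_s,x_c$ under $\duni$ together with $f_c(x_c)x_k=\chi_{I_c\setminus\{k\}}(x_c)$. You instead linearize $1-\sigmoid(\gamma_c+\gamma_s u)=A+Bu$ on $u\in\{\pm1\}$ to get $(1-\sigmoid(yh))\,y=A f_c+B f_s$ in one step, then condition on $x_s$ under $\dlam$ and use the level-set symmetry $\E[x_k\mid f_c=z]=0$. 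Your factorization is cleaner and avoids the $\dsame/\ddiff$ bookkeeping, at the cost of using the standing assumption $h_s=\gamma_s f_s$ more essentially; the paper's calculation actually goes through for arbitrary $h_s(x_s)$ since the factor $1-\sigmoid(f_s h_s+\gamma_c)$ is already a function of $x_s$ alone.

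Your critique of the weight-domination hypothesis is on point and applies equally to the paper's proof: the paper simply writes $\relu'(w^\top x)=\relu'(w_s^\top x_s)$ without justification, which implicitly requires exactly the strengthening you propose (that the spurious part of the pre-activation cannot be overturned by the core and independent contributions for any $x$). The literal condition $\sum_{i\in I_c}|w_i|<|w_j|$ for each $j\in I_s$ is not sufficient when $|I_s|>1$, since the spurious contributions can cancel. One minor addendum: if the spurious neuron carries nonzero weight on $I_u$, you should condition on $(x_s,x_u)$ rather than $x_s$ alone; the inner moments still vanish because $x_c$ is conditionally independent of $x_u$ under $\dlam$.
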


\begin{proof}
     We have for any weight on a core coordinate $c \in I_c$, $\nabla_{w_c} L_{\Dist{\lambda}}(m(x)) = - \lambda\nabla_{w_c} E_{\Dist{same}}[\log(\phi(f_s(x_s)m(x))] - (1-\lambda)\nabla_{w_c} E_{\Dist{diff}}[\log(\phi(-f_s(x_s)m(x))]$. We will show  $\nabla_{w_c}E_{\Dist{same}}[\log(\phi(f_s(x_s)m(x))] = 0$ and the same can be shown in a similar manner for $\Dist{diff}$. \begin{align*}
         &\nabla_{w_c}E_{\Dist{same}}[\log(\phi(f_s(x_s)m(x))]\\
         =& E_{\Dist{same}}[(1-\phi(f_s(x_s)h_s(x_s)+\gamma_c)f_s(x_s)a\relu'(w^\top x)x_c] \\
         =& 2aE_{\Dist{unif}}[(1-\phi(f_s(x_s)h_s(x_s)+\gamma_c)f_s(x_s) \frac{f_s(x_s)f_c(x_c)+1}{2}x_c] \\
         =& aE_{\Dist{unif}}[(1-\phi(f_s(x_s)h_s(x_s)+\gamma_c)(f_s(x_s)+f_c(x_c))\relu'(w_s^\top x_s)x_c] \\
         =& aE_{\Dist{unif}}[(1-\phi(f_s(x_s)h_s(x_s)+\gamma_c)\chi_{I_c\setminus \{c\}}\relu'(w_s^\top x_s)] +
         aE_{\Dist{unif}}[(1-\phi(f_s(x_s)h_s(x_s)+\gamma_c)f_s(x_s)\relu'(w_s^\top x_s)x_c] \\
         =& 0 + 0 
     \end{align*}

\end{proof}
\section{Additional Experiments}
\label{Appendix:more experiments}
\subsection{Detailed Experiments configuration}
\label{Appendix:Exp_configs}
In this section, we conduct a more detailed experiment report and discussion of the results and claims presented in the main paper. We first show the training procedure we adopt. 

\paragraph{Model and Default Hyper-Parameters} 
All training experiments were conducted using PyTorch\citep{paszke_pytorch_2019}. While the majority of networks evaluated in our primary empirical findings are relatively compact, we trained a substantial number of models to validate the breadth of the "robust space" outcomes. These experiments utilized NVIDIA T4 and Quadro RTX 8000 GPUs, cumulatively consuming around 2,500 GPU hours.

Neural networks were initialized using a uniform distribution (as by the default setting of pytorch). For all the boolean experiments, without further specification, we use a 2-layer, 100 neurons NN. For the Domino and Waterbird datasets, we adopt the ResNet-50 and add an additional linear layer on top of it based on the number of class label. The model is trained either from scratch with randomly initialized weights or finetuned with \verb|IMAGENET1K_V1| pre-trained weights. The default optimization parameters is listed in \cref{table:default_optimization_parameters}. We adopt the parameters based on \cite{labonte_towards_2023, sagawa_distributionally_2020, idrissi_simple_2022, kirichenko_last_2023, izmailov_feature_2022} with minor modifications to ensure consistency across different experiments. Notably, for different boolean datasets and domino datasets we control the number of gradient updates of one epoch to be the same. This ensure us to compare them in a consistent manner. We provide further experiments to check how different hyper-parameters choices affect learning dynamics on different datasets \cref{fig:depth}, \cref{fig:waterbirds} and \cref{fig:weight-decay}. 

\begin{table}[htbp]
\centering
\caption{Default Training Hyper-Parameters Across Datasets}
\begin{tabular}{@{}llllllll@{}}
\toprule
Dataset & $\eta$ & $\beta$ & Optimizer & Weight Decay & $B$ & Reweight Method& Sample per Epoch \\ \midrule
boolean & 0.0001 & 0.5 & SGD & 0 & 64 & N/A & 10000 \\
domino & 0.001 & 0.9 & Adam & 0 & 64 & N/A & 5000 \\
% waterbirds & 0.001 & 0.9 & Adam & 0 & 32 & reweight & 500 \\
waterbirds & 0.003 & 0.5 & SGD & 0.0001 & 32 & reweight & 500 \\
celebA & 0.001 & 0.5 & SGD & 0.0001 & 100 & resample & dataset size \\
cmnist & 0.003 & 0.5 & SGD & 0.0001 & 32 & N/A & dataset size \\
multinli & 0.00001 & N/A & AdamW & 0.0001 & 16 & resample & dataset size \\
civilcomment & 0.00001 & N/A & AdamW & 0.0001 & 16 & resample & dataset size \\ \bottomrule
\end{tabular}
\caption{
The optimization hyperparameters for various datasets are outlined in the table with $\eta$: learning rate, $\beta$: momentum, $B$: batch size. There are three possible methods for "Label Class Reweight." For class balanced dataset, such as the Domino, CMnist, and boolean datasets, no class balancing technique is necessary. In cases of class imbalance, we employ either resampling, which ensures each batch contains an equal number of samples from each class, or reweighting, which assigns greater importance or weight to samples from the minority class, as discussed in \citep{idrissi_simple_2022}.  Should there be any deviations from these default parameters in specific experiments, such changes will be explicitly noted.}
\label{table:default_optimization_parameters}
\end{table}

\paragraph{Training and Evaluation Procedure}
As specified in the main paper, we mainly use correlation and decoded correlation to assess the state of the model. For datasets that has more than two labels or spurious classes (MultiNli, CivilComments, CMnist), we use a generalized metrics of correlation $\mathsf{corr}(f_c,h)= P_x[f_c(x) = h(x)] - P_x[f_c(x) \neq h(x)]$ where $f_c$ is the core/ground truth label and $h(x)$ represent the prediction made by the trained model. 

The decoded correlation is calculated in the following procedure:
\begin{enumerate}
    \item At the end of each epoch, we select $\min(2000, \lceil n/2 \rceil)$ samples from the group balanced validation dataset and use the model to output their embeddings, where $n$ is the size of the group balanced dataset.
    \item We then fit a logistic regression model on the output embeddings with default hyperparameters provided by the Scikit-learn package \citep{pedregosa_scikit-learn_2011}.
    \item Finally, we take another $\min(2000, \lceil n/2 \rceil)$ samples from the validation dataset and output their embeddings. We then evaluate the correlation of the trained logistic regression model on the output embeddings. We note that the default hyperparameters in most cases could provide the near optimal correlation score, and adjusting the hyperparameters like the regularization term or strength only yields marginal improvement.
\end{enumerate}

\begin{figure} \label{fig:real_world_dynamics}
    \centering
    \includegraphics[width=1\textwidth]{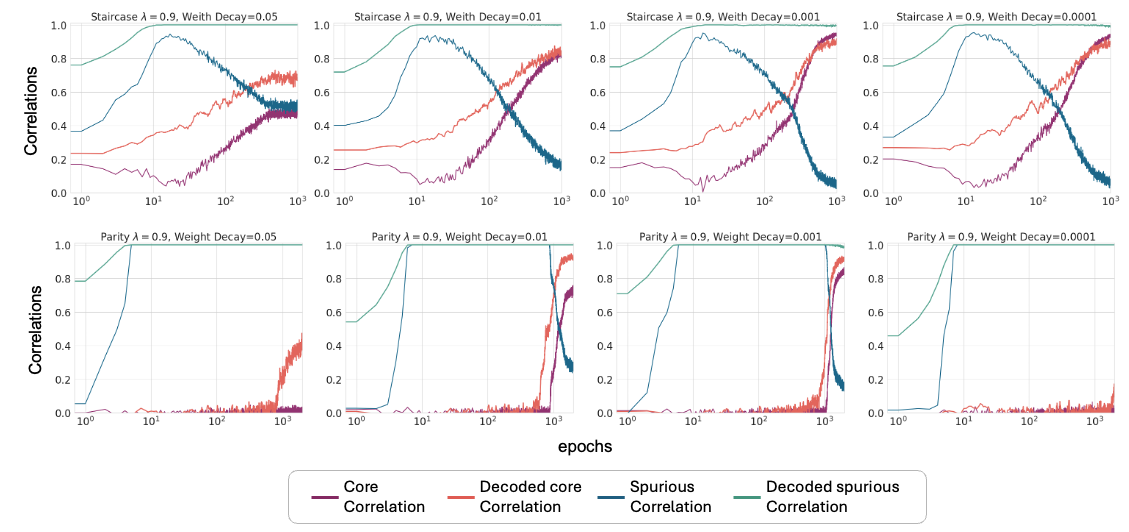}
    \caption{Learning dynamics on Spurious Boolean Dataset under different weight decay/l2 regularization values.}
    \label{fig:weight-decay}
\end{figure}

\begin{figure}
    \centering
    \includegraphics[width=1\textwidth]{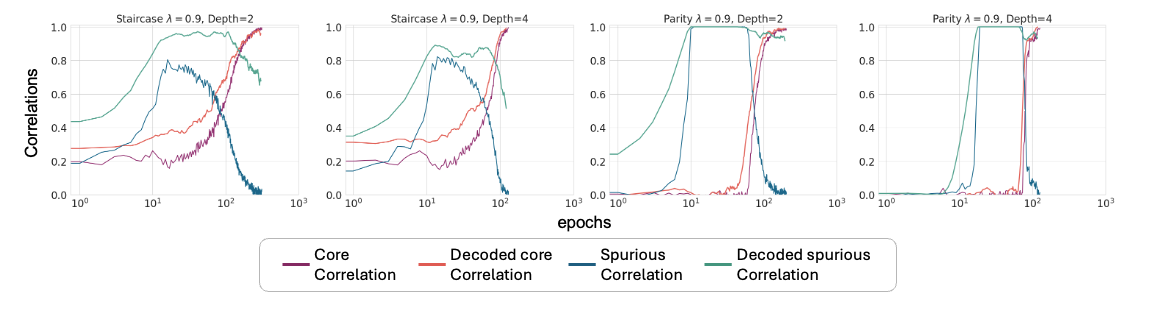}
    \caption{Learning dynamics on Spurious Boolean Dataset under different model depths.}
    \label{fig:depth}
\end{figure}

\begin{figure}
    \centering
    \includegraphics[width=1\linewidth]{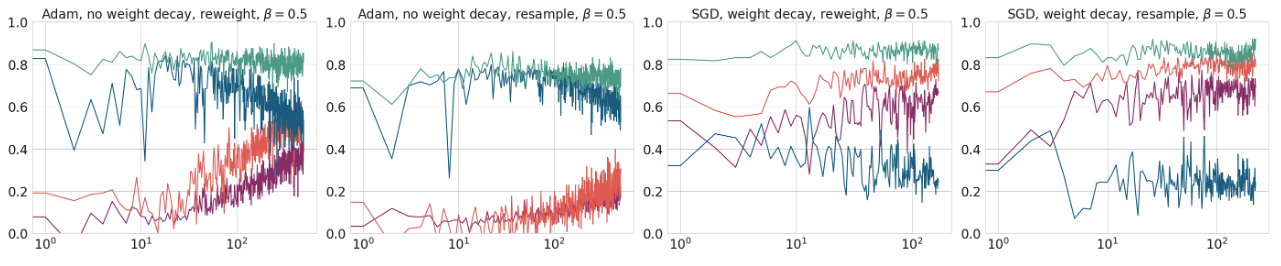}
    \caption{Learning dynamics on Waterbirds under different hyperparameters. $\lambda=0.95$. Model is pretrained ResNet. lr=0.001. Weight Decay is $0.0001$ or $0$, $\beta$ is Momentum.}
    \label{fig:waterbirds}
\end{figure}

\subsection{Additional Experiments on the interplay between Confounder Strength and Complexity}
We divide this section into two sections to provide a comprehensive review of the influence of the two factors, complexity and confounder strength on learning either on a online setting learning or a finite dataset.
\subsubsection{Complexity}

\paragraph{Parity}
\cref{fig:parity_online_complexity}, \cref{fig:parity_finite_complexity} show the learning dynamics of parity functions under different $deg(f_s)$. Note that the variance between repeated experiments is significant when $\lambda$ and the complexity of the spurious function ($\text{deg}(f_s)$) are both high. In the context of learning parity with finite datasets, numerous runs converging to a low core correlation value. For learning under finite dataset, it is worth highlighting that the end performance of the network is heavily influenced by the randomness of initialization. Note here the total length of the feature vector is fixed to $20$ so the computational complexity in learning core parity function stay fixed if $\lambda=0.5$ for each case.
\begin{figure}
    % \centering
    % \begin{minipage}{\textwidth}
    %     \centering
    %     \includegraphics[width=\linewidth]{}
    %     \caption{$\lambda=0.9$}
    %     \label{fig:lambda_0.9}
    % \end{minipage}%
    % \hfill
    \begin{minipage}{\textwidth}
        \centering
        \includegraphics[width=\linewidth]{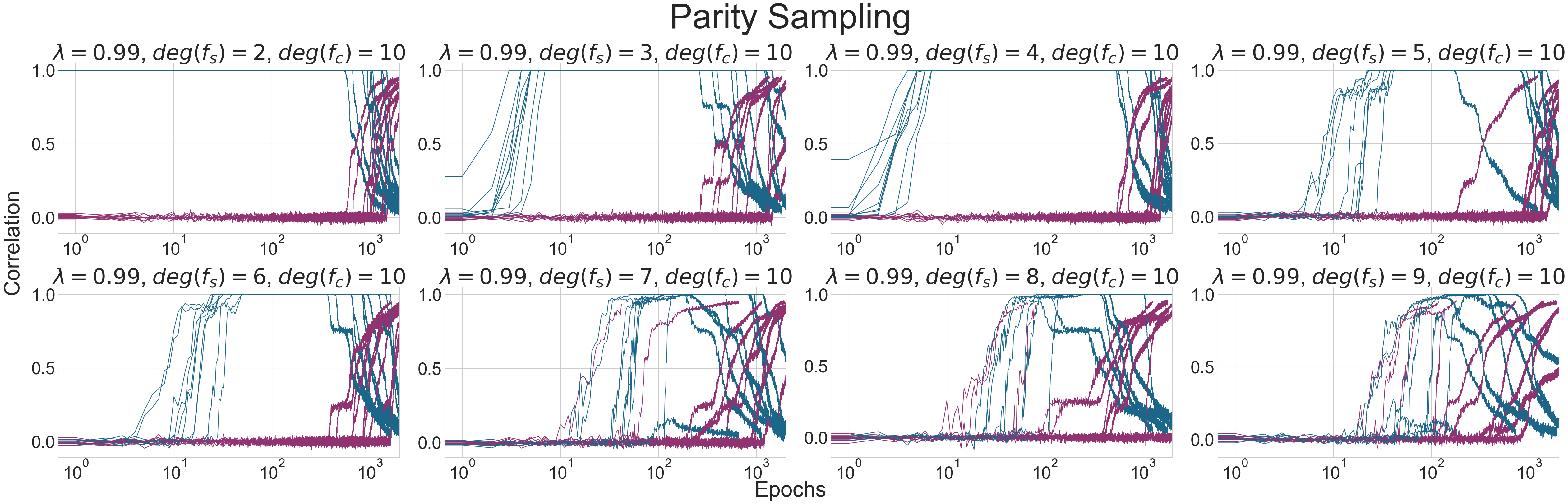}
        \label{fig:lambda_0.99a}
    \end{minipage}
    \hfill
    \begin{minipage}{\textwidth}
        \centering
        \includegraphics[width=\linewidth]{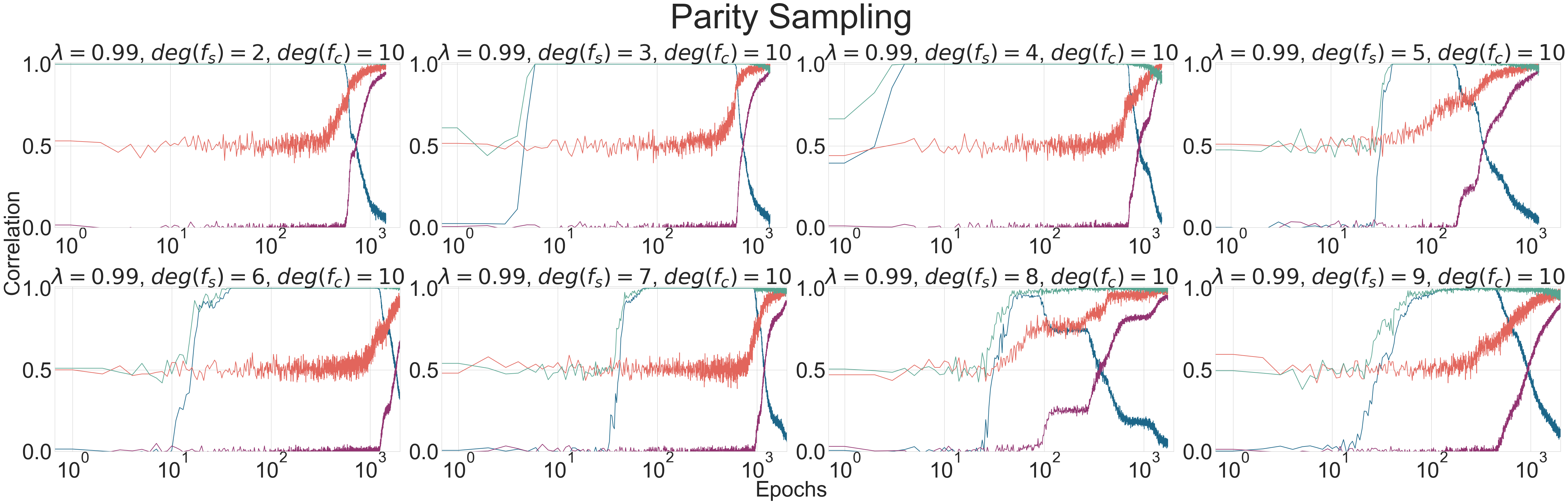}
        \label{fig:lambda_0.99b}
    \end{minipage}
    \begin{minipage}{\textwidth}
        \centering
        \includegraphics[width=0.5\linewidth]{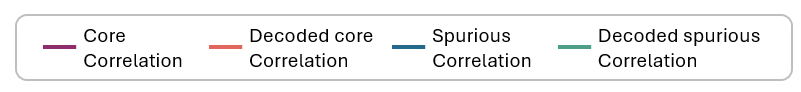}
    \end{minipage}
    \caption{\textbf{Online Parity Learning:} Upper: repeated experiments. Bottom: single experiment taken from the repeated experiments.}
    \label{fig:parity_online_complexity}
\end{figure}

\begin{figure}
    \centering
    \begin{minipage}{\textwidth}
        \centering
        \includegraphics[width=\linewidth]{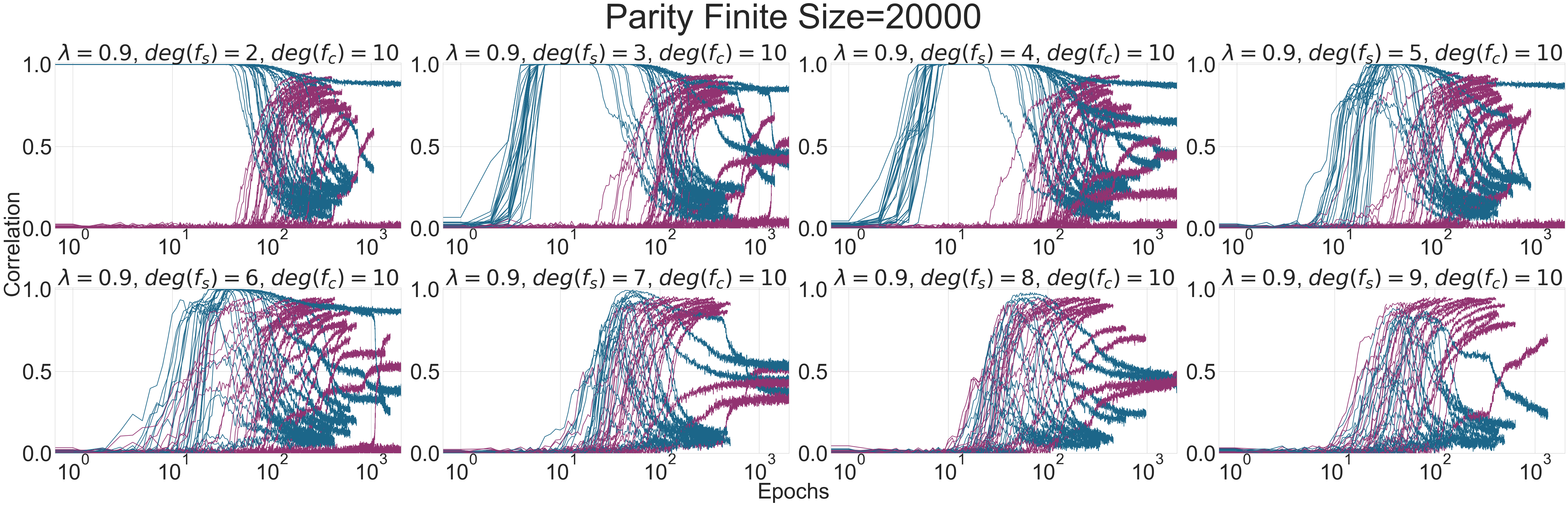}
        \label{fig:lambda_0.90a}
    \end{minipage}%
    \hfill
    \begin{minipage}{\textwidth}
        \centering
        \includegraphics[width=\linewidth]{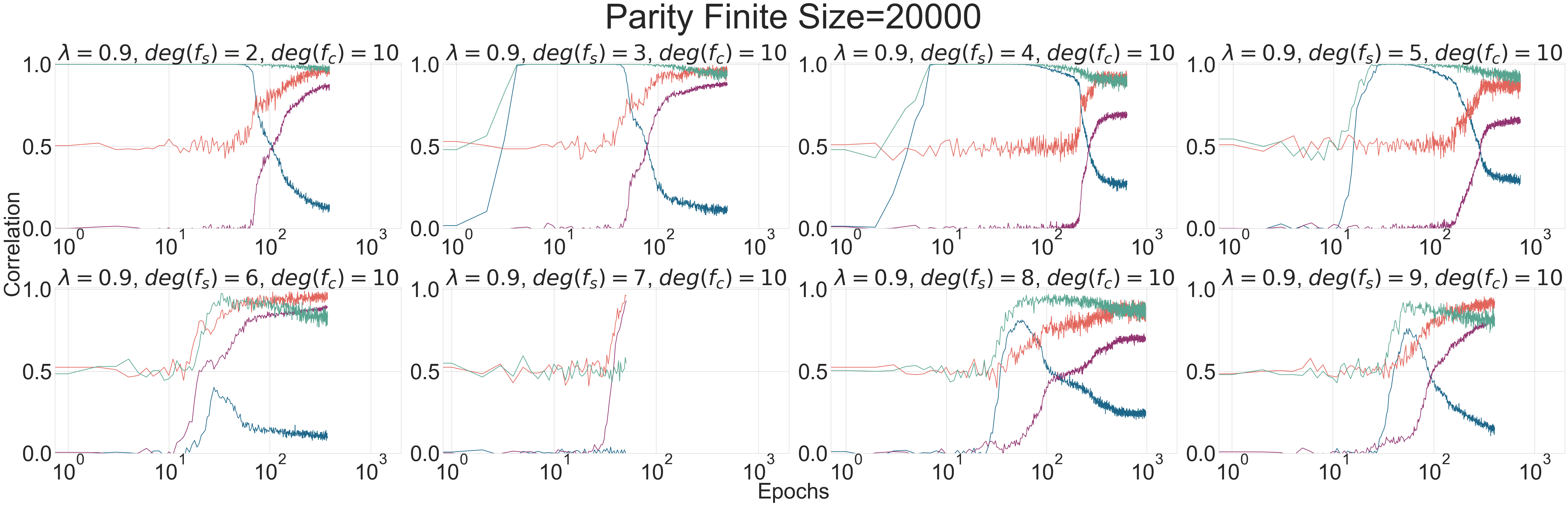}
        \label{fig:lambda_0.90b}
    \end{minipage}
    \begin{minipage}{\textwidth}
        \centering
        \includegraphics[width=0.5\linewidth]{icml/appendix/legend.png}
    \end{minipage}
    \caption{\textbf{Finite Parity Learning with 20000 Sampled Points.} Upper: repeated experiments Bottom: single experiment}
    \label{fig:parity_finite_complexity}
\end{figure}

\paragraph{Staircase} \label{appendix:fig_finite_staircase}
Refer to \cref{fig:staircase_online_complexity} and \cref{fig:staircase_finite_complexity}. In the case of the staircase task, the influence of simpler spurious features on convergence slowdown becomes more obvious. Different from the parity cases, the learning dynamics remain consistently stable across repeated runs in the staircase task. 
\begin{figure}
    \centering
    \begin{minipage}{\textwidth}
        \centering
        \includegraphics[width=\linewidth]{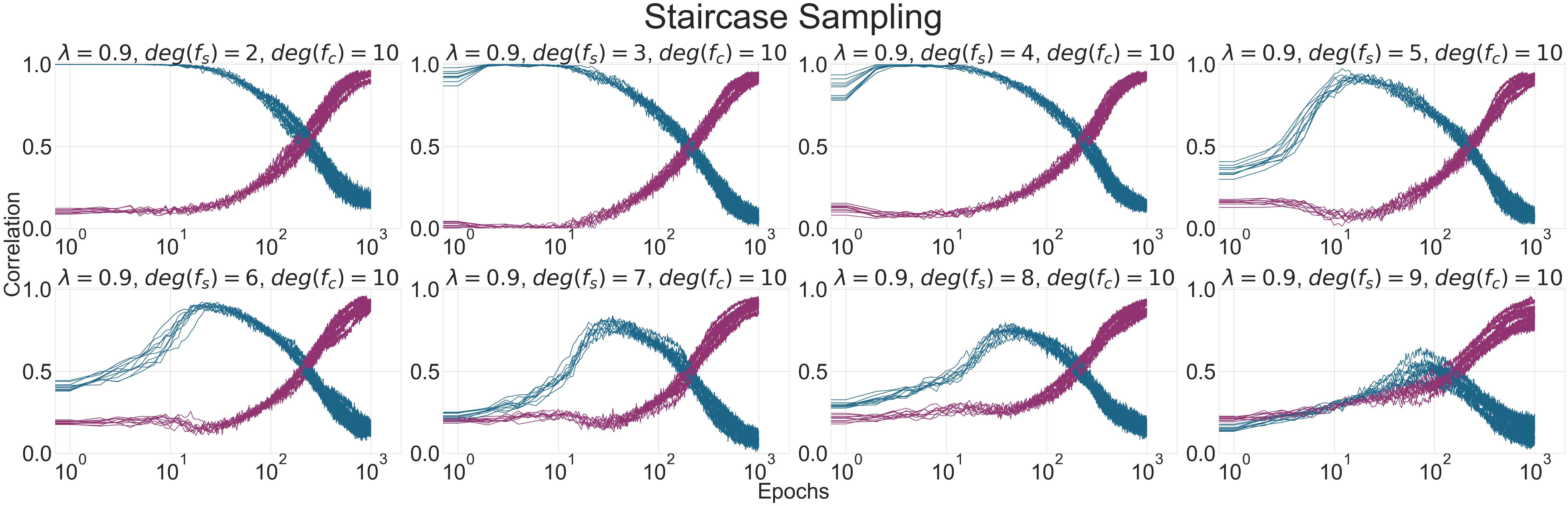}
        \label{fig:lambda_0.90_staircase_a}
    \end{minipage}%
    \hfill
    \begin{minipage}{\textwidth}
        \centering
        \includegraphics[width=\linewidth]{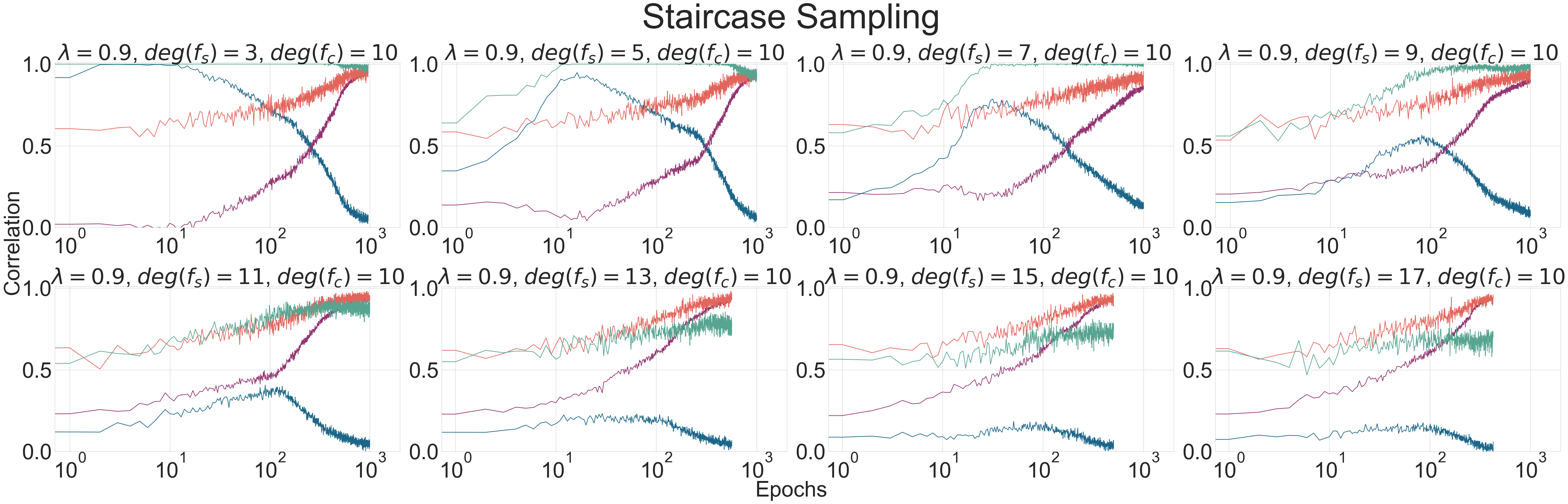}
        \label{fig:lambda_0.90_staircase_b}
    \end{minipage}
    \begin{minipage}{\textwidth}
        \centering
        \includegraphics[width=0.5\linewidth]{icml/appendix/legend.png}
    \end{minipage}
    \caption{\textbf{Online Staircase:} Upper: repeated experiments. Bottom: single experiment}
    \label{fig:staircase_online_complexity}
\end{figure}

\begin{figure}
    \centering
    \includegraphics[width=1\textwidth]{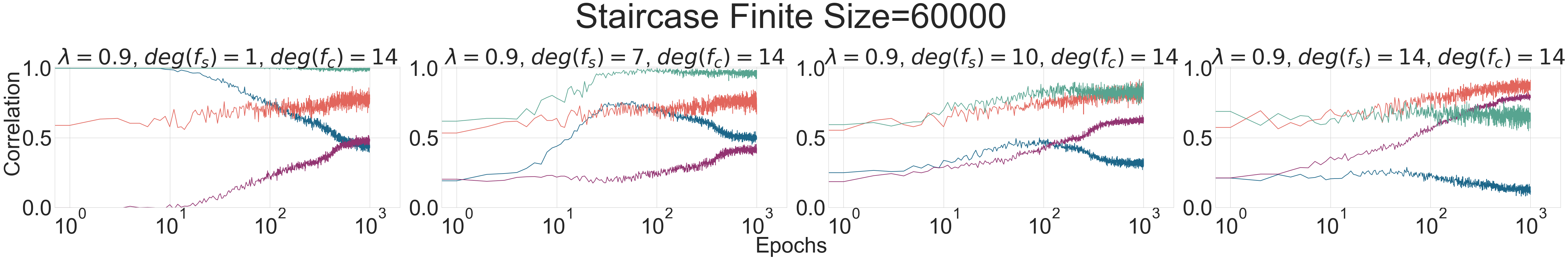}
    \caption{\textbf{Finite Staircase with 60000 Sampled Points:} $\lambda = 0.90$}
    \label{fig:staircase_finite_complexity}
\end{figure}

\paragraph{Domino}
See \cref{fig:domino_complexity_dynamics}. We adhere to the convention of employing three image datasets as spurious features: MNIST-01, MNIST-79, and Fashion dress-coat, arranged in ascending order of difficulty with CIFAR-truck-automobile as the core feature \cite{izmailov_feature_2022, kirichenko_last_2023}. It is noteworthy that the semi-real datasets including the domino datasets utilized in spurious correlation research are inherently noisy, meaning that the model cannot learn the core feature perfectly or achieve 0 generalization error, as highlighted in \citep{kirichenko_last_2023}. In fact, we see the the core correlation of the model is well below 0.9. Furthermore, these datasets are limited in size, with only 10,000 images available for CIFAR-truck-automobile.

Previous studies have primarily focused on utilizing pretrained models to learn the spurious task. However, such an approach can obscure our understanding of feature learning dynamics, as pretrained models often achieve exceptionally high decoded core correlations from the outset, as noted in \citep{joshi_towards_2023}. To better show this point, we present the learning dynamics for both pretrained and randomly initialized weights. We also found that \textbf{pretrained models are more robust to spurious features at different complexities}.

\begin{figure}
    \centering
    \begin{minipage}{\textwidth}
        \centering
        \includegraphics[width=\linewidth]{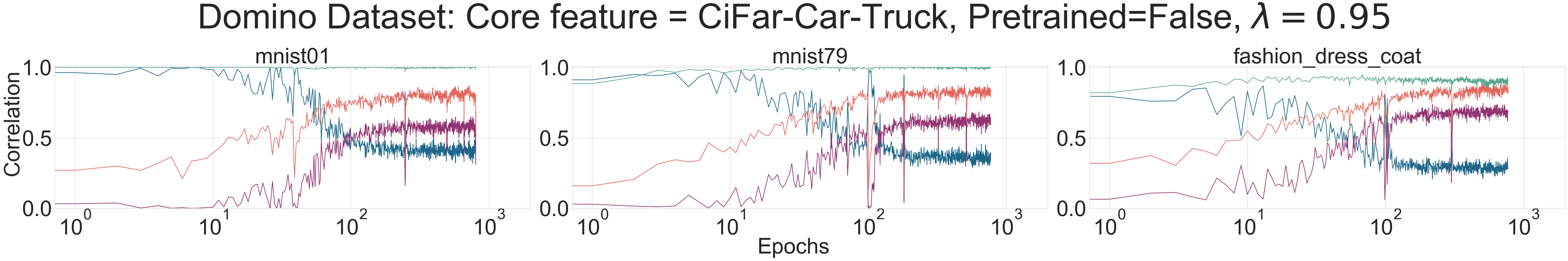}
        \label{fig:domino_no_pretrained}
    \end{minipage}%
    \hfill
    \begin{minipage}{\textwidth}
        \centering
        \includegraphics[width=\linewidth]{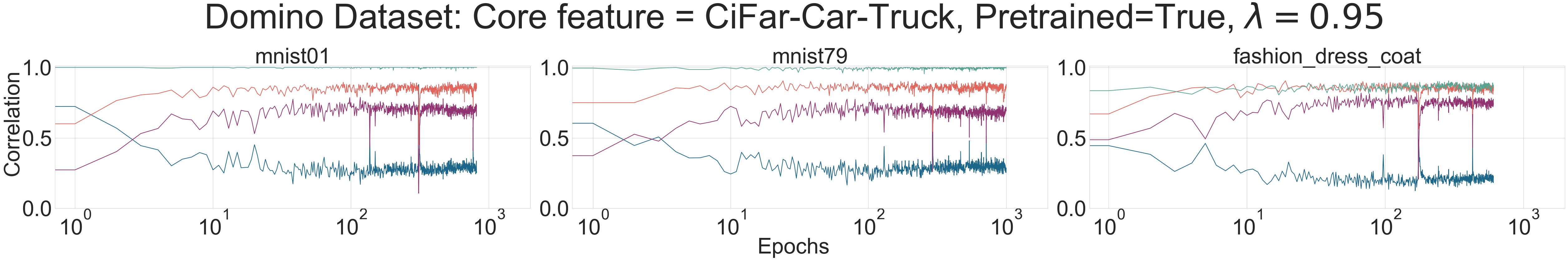}
        \label{fig:domino_pretrained}
    \end{minipage}
    \begin{minipage}{\textwidth}
        \centering
        \includegraphics[width=0.5\linewidth]{icml/appendix/legend.png}
    \end{minipage}
    \caption{\textbf{Learning dynamics of Domino dataset.} The spurious dataset become harder from left to right. The subplot title shows the spurious task. mnist01: Classification task of handwritten digits images of 0 and 1 taken from the MNIST dataset. mnist79: Classification task of handwritten digits images of 7 and 9 taken from the MNIST dataset. fashion dress coat: Classification task of dress and coat images taken from the FashionMnist dataset. The core task is classification of Truck and AutoMobile images taken from the CiFar dataset. The plots shows similar to the staircase dataset, harder spurious feature has less influence of the end performance of the model. Note confounder strength $\lambda=0.95$ is fixed.}
    \label{fig:domino_complexity_dynamics}
\end{figure}

\subsubsection{Confounder Strength} \label{Appendix:confounder_strength}
The impact of confounder strength on learning is more straight forward than the complexity. As confounder strength increases, the number of epochs needed for convergence also rises significantly. Notably, learning remains relatively insensitive to confounder strength until it reaches a threshold of 0.8, at which point we observe a notable increase in training epochs.The information of spurious feature i.e how well the spurious feature is learned and than memorized depends heavily on the confounder strength.
\paragraph{Parity}
For parity functions (\cref{fig:parity_online_cs}), we see when confounder strength surpass 0.9, it converge much slower after the phase transition when compared to the experiment with lower $\lambda$. The slower convergence reflect on learning under finite dataset where the end performance of the model is significantly impaired (\cref{fig:parity_finite_cs}).
\begin{figure}
    \centering
    \begin{minipage}{\textwidth}
        \centering
        \includegraphics[width=\linewidth]{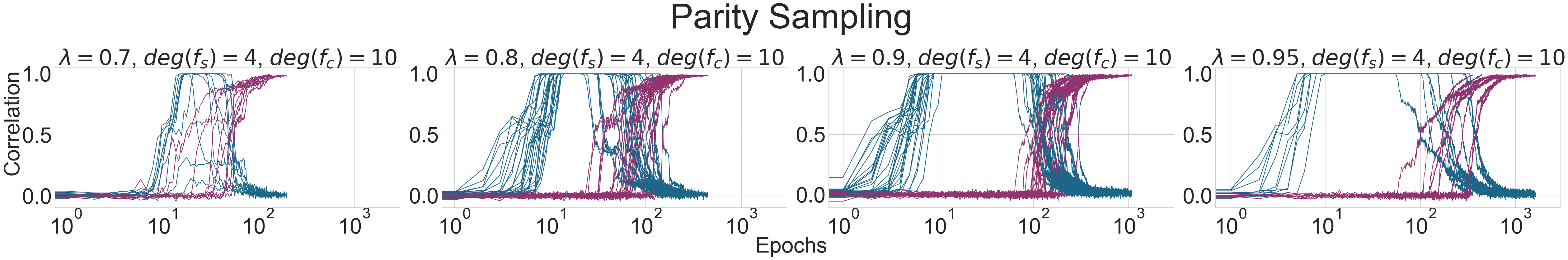}
        \label{fig:deg_fs_4_online_a}
    \end{minipage}%
    \hfill
    \begin{minipage}{\textwidth}
        \centering
        \includegraphics[width=\linewidth]{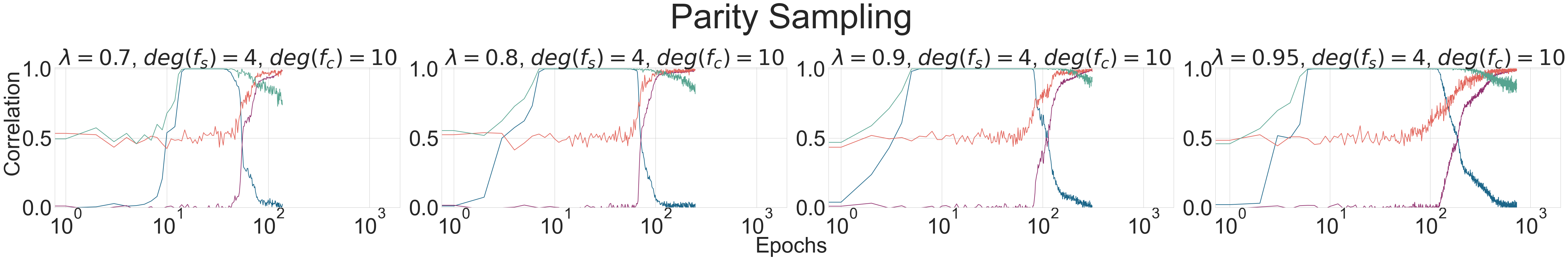}
        \label{fig:deg_fs_4_online_b}
    \end{minipage}
    \begin{minipage}{\textwidth}
        \centering
        \includegraphics[width=0.5\linewidth]{icml/appendix/legend.png}
    \end{minipage}
    \caption{\textbf{Online Parity}: Upper: repeated experiments Bottom: single experiment}
    \label{fig:parity_online_cs}
\end{figure}

\begin{figure}
    \centering
    \begin{minipage}{\textwidth}
        \centering
        \includegraphics[width=\linewidth]{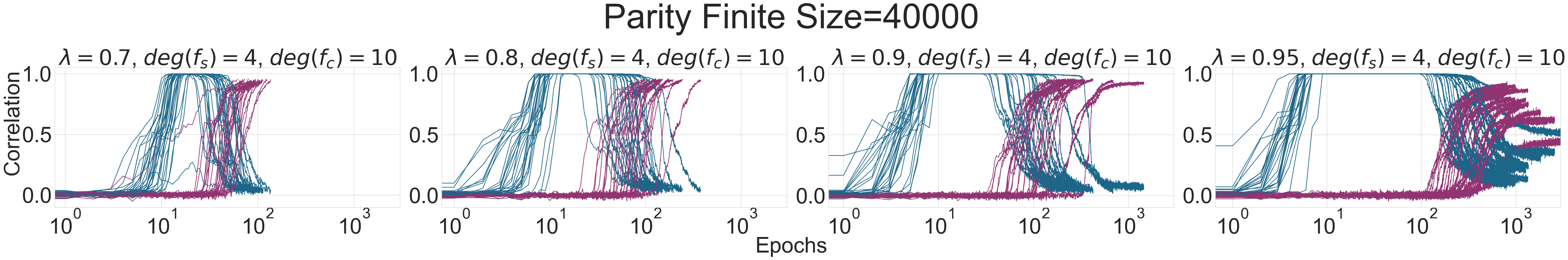}
        \label{fig:deg_fs_4_finite_a}
    \end{minipage}%
    \hfill
    \begin{minipage}{\textwidth}
        \centering
        \includegraphics[width=\linewidth]{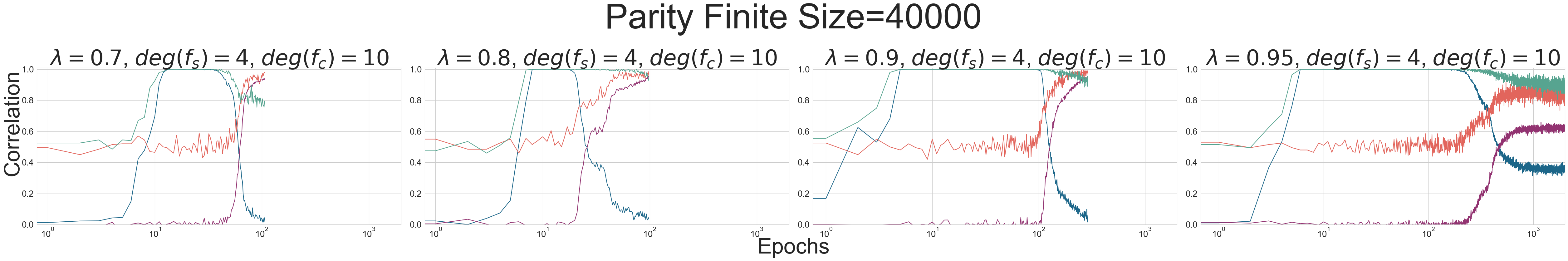}
        \label{fig:deg_fs_4_finite_b}
    \end{minipage}
    \begin{minipage}{\textwidth}
        \centering
        \includegraphics[width=0.5\linewidth]{icml/appendix/legend.png}
    \end{minipage}
    \caption{\textbf{Finite Parity with 40000 Sampled Points}: Upper: repeated experiments. Bottom: single experiment}
    \label{fig:parity_finite_cs}
\end{figure}

\paragraph{Staircase}
At higher confounder strength, the model has higher correlation to the spurious, simpler staircase function at the early stage of learning. This imply the spurious staircase function is learned and memorized better by the model. We see higher $\lambda$ cause harm to the end performance under finite dataset just as parity (\cref{fig:staircase_cs}).
\begin{figure}
    \centering
    \begin{minipage}{\textwidth}
        \centering
        \includegraphics[width=\linewidth]{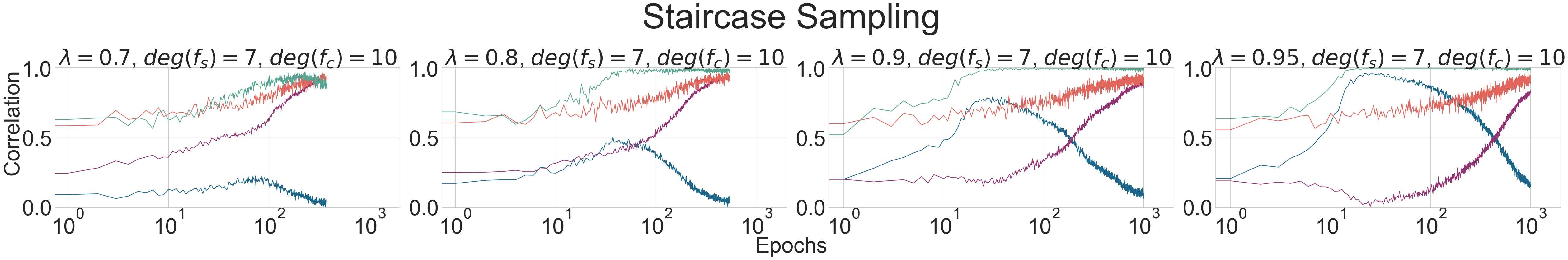}
        \label{fig:online_deg_fs_4}
    \end{minipage}
    \hfill
    \begin{minipage}{\textwidth}
        \centering
        \includegraphics[width=\linewidth]{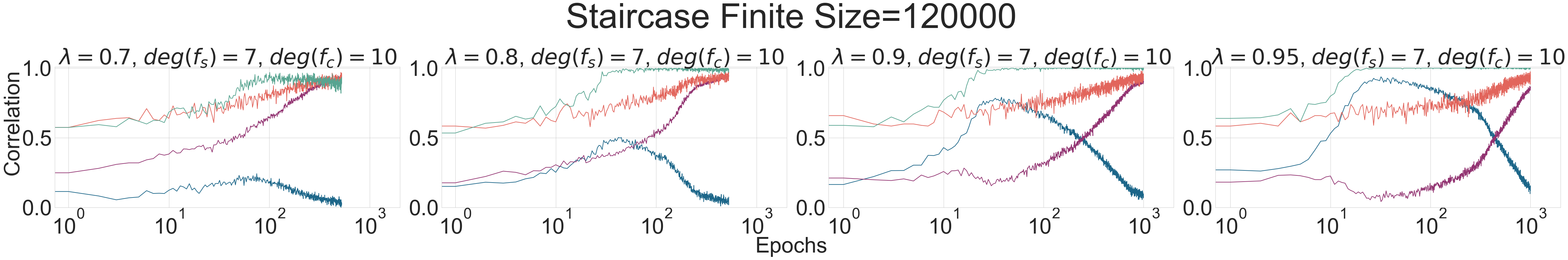}
        \label{fig:finite_120k_deg_fs_4}
    \end{minipage}
    \begin{minipage}{\textwidth}
        \centering
        \includegraphics[width=0.5\linewidth]{icml/appendix/legend.png}
    \end{minipage}
    \caption{\textbf{Staircase.} Upper: Learning dynamics under sampling. Bottom: Learning dynamics under finite dataset. }
    \label{fig:staircase_cs}
\end{figure}

\paragraph{Domino-Image, WaterBirds} (\cref{fig:domino_cs,fig:waterbirds_cs}) \label{Appendix:real_dataset}
Surprisingly, our observations indicate that the pretrained model exhibits not only insensitivity to spurious features across a spectrum of complexities but also a remarkable resistance to higher $\lambda$ values. Additionally, when compared to the initialization with random weights, models with pretrained weights consistently maintain low spurious correlations throughout the training process.

Regarding the waterbirds dataset, it is noteworthy that initialization with random weights fails to learn the core feature entirely, as reported in \citep{kirichenko_last_2023,joshi_towards_2023}.
\begin{figure}
    \centering
    \subfigure[Domino-Image: Pretrained Weights]{
        \includegraphics[width=1\textwidth]{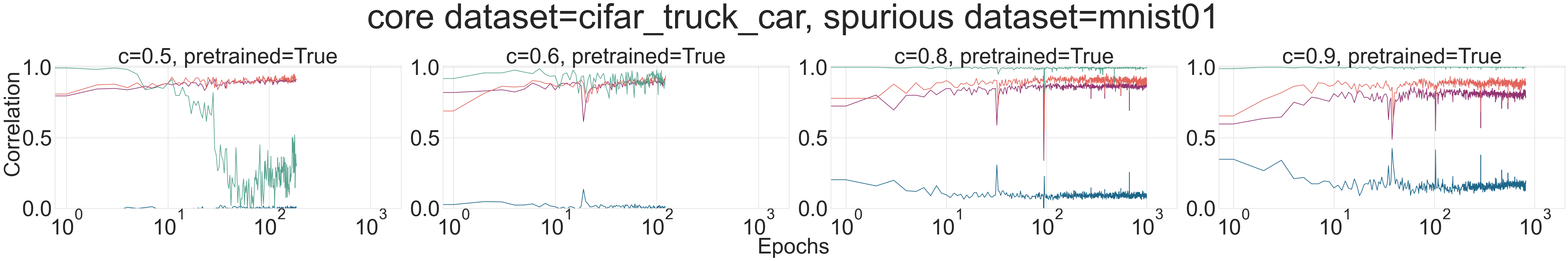}
    }
    \subfigure[Domino-Image: Random Weights]{
        \includegraphics[width=1\textwidth]{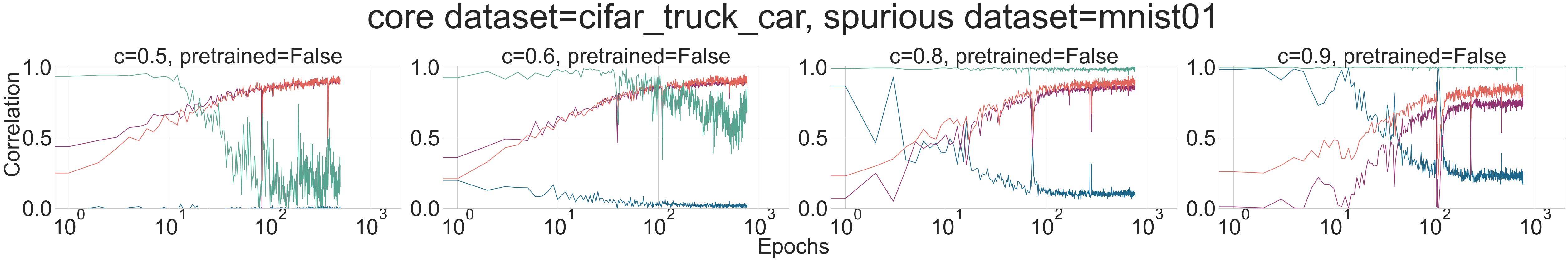}
    }
    \begin{minipage}{\textwidth}
        \centering
        \includegraphics[width=0.5\linewidth]{icml/appendix/legend.png}
    \end{minipage}
    \caption{\textbf{Domino-Image.} The plot shows pretrained model is more robust to the existence of a spurious feature across varying counfounder strength.}
    \label{fig:domino_cs}
\end{figure}

\begin{figure}
    \centering
    \includegraphics[width=1\textwidth]{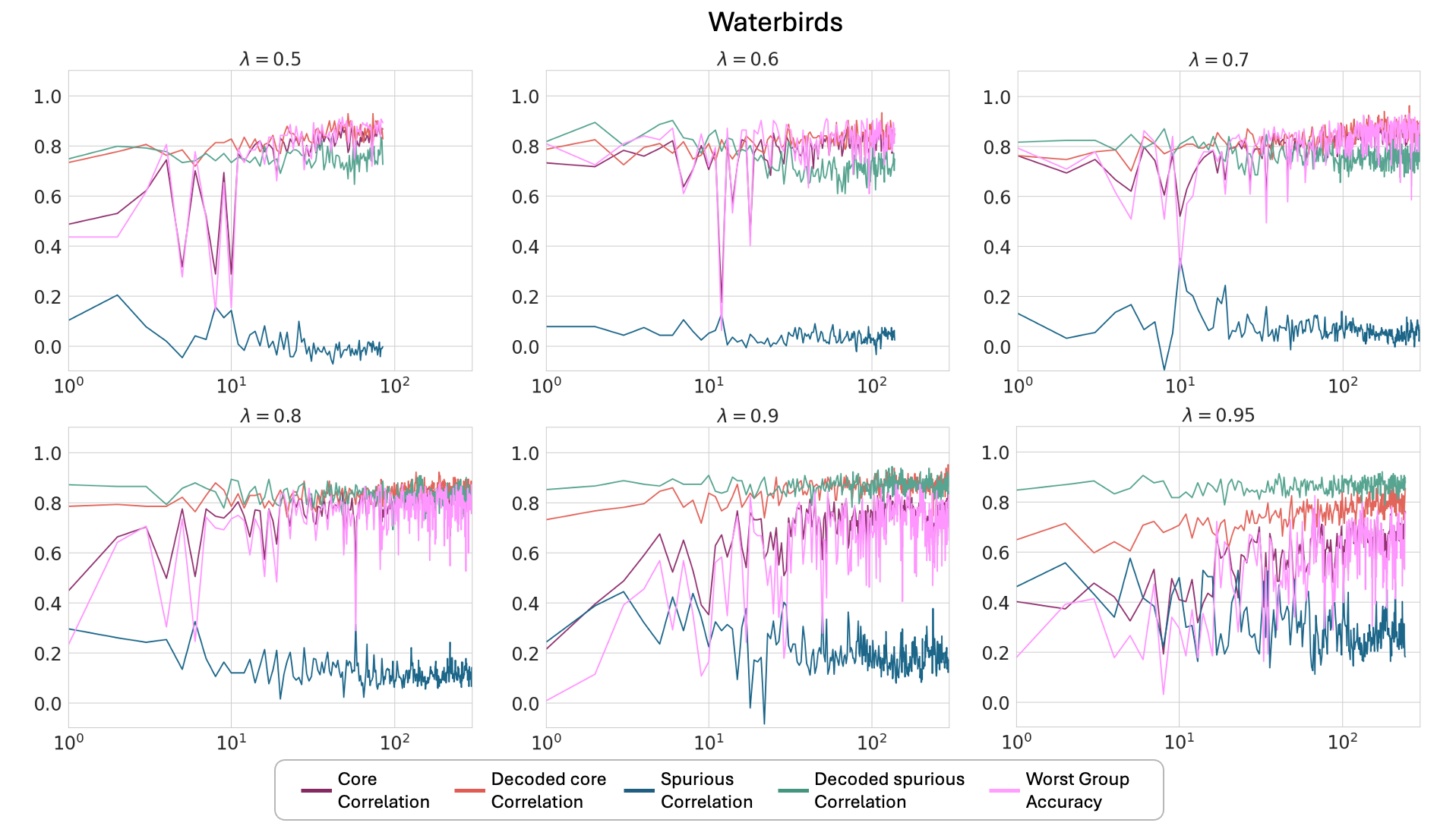}
    \caption{\textbf{Learning Dynamics on Waterbirds}}
    \label{fig:waterbirds_cs}
\end{figure}

\subsection{Core and Spurious subnetwork}
\subsubsection{More neuron plots on a 2 layer NN}
 \label{Appendix:neurons}
Refer to \cref{fig:parity_neuron}, \cref{fig:staircase_neuron}. We show the dynamics of a random batch of spurious neurons and core neurons for both the parity and staircase spurious learning task. It can be seen that spurious neurons have higher weights on spurious coordinates throughout training. And core neurons which has significant weights on the core coordinates are specifically the neurons which does not have spurious weight spike at the start when the spurious feature is learned. 

% We have also observed that the number of spurious neurons (see table \cref{tab:neuron_num}) seems to be correlated with the convergence rate of core feature learning as shown in \cref{fig:impact_corr_comp}.  
\begin{figure}
    \centering
    \subfigure[Parity: Core Neurons at $\lambda=0.95$]{
        \includegraphics[width=1\textwidth]{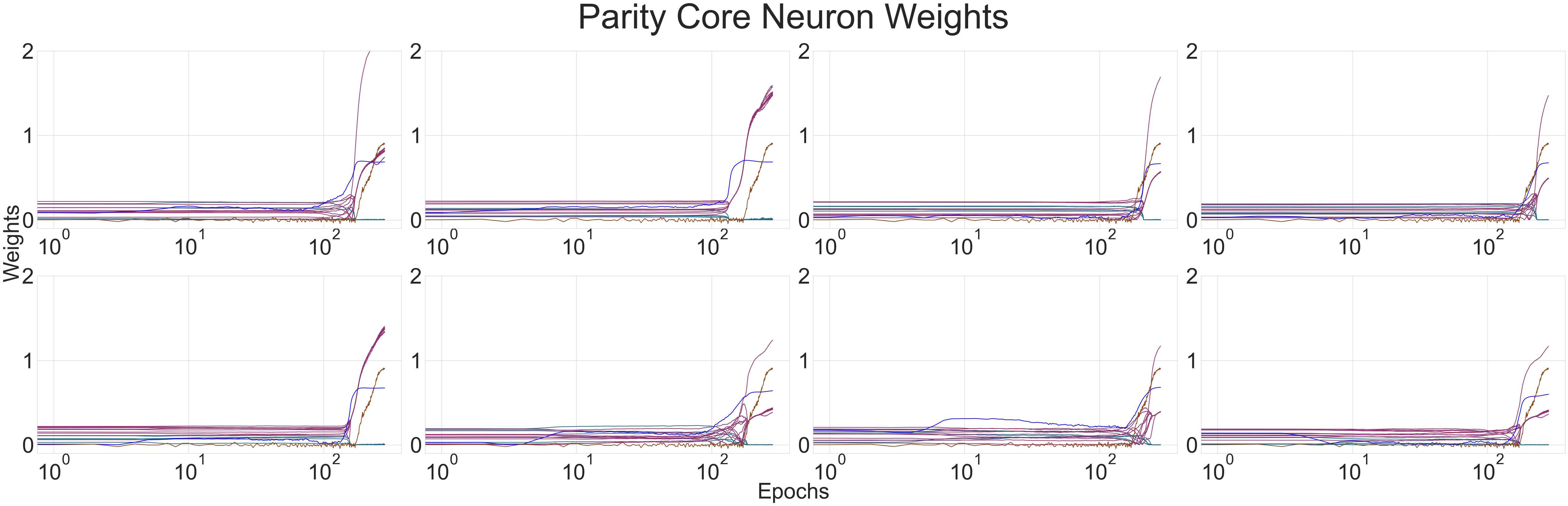}
    }
    \subfigure[Parity: Spurious Neurons at $\lambda=0.95$]{
        \includegraphics[width=1\textwidth]{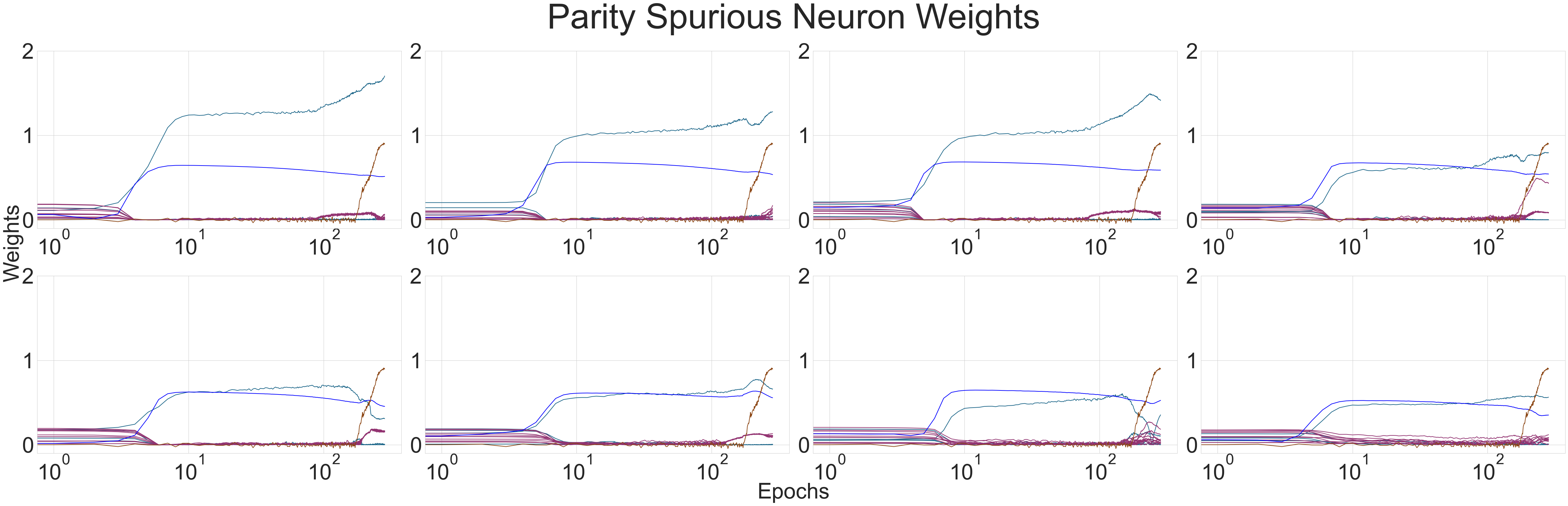}
    }
    \subfigure[Parity: All first layer neurons at $\lambda=0.95$]{
        \includegraphics[width=0.5\textwidth]{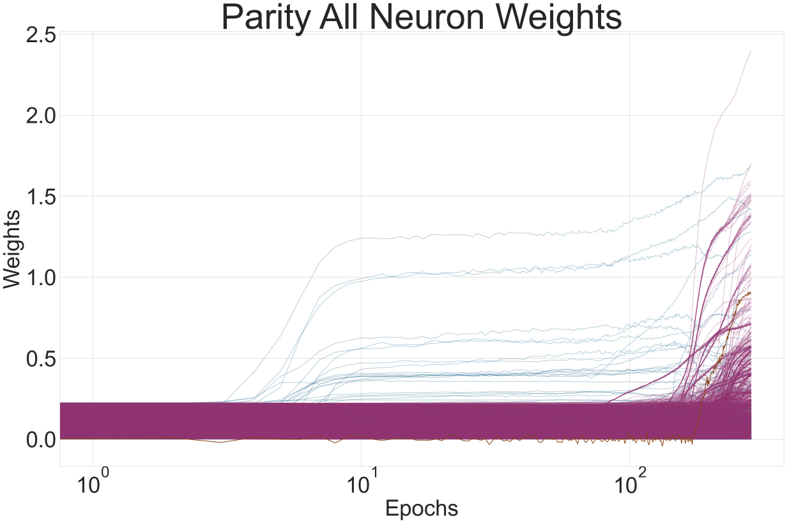}
    }
    \begin{minipage}{\textwidth}
        \centering
        \includegraphics[width=0.5\linewidth]{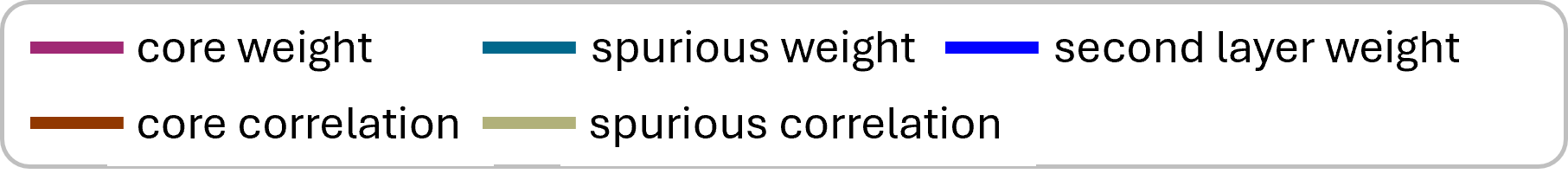}
    \end{minipage}
    \caption{\textbf{Dynamics of neurons on Parity task.}}
    \label{fig:parity_neuron}
\end{figure}

\begin{figure}
    \centering
    \subfigure[Staircase: Core Neurons at $\lambda=0.90$]{
        \includegraphics[width=1\textwidth]{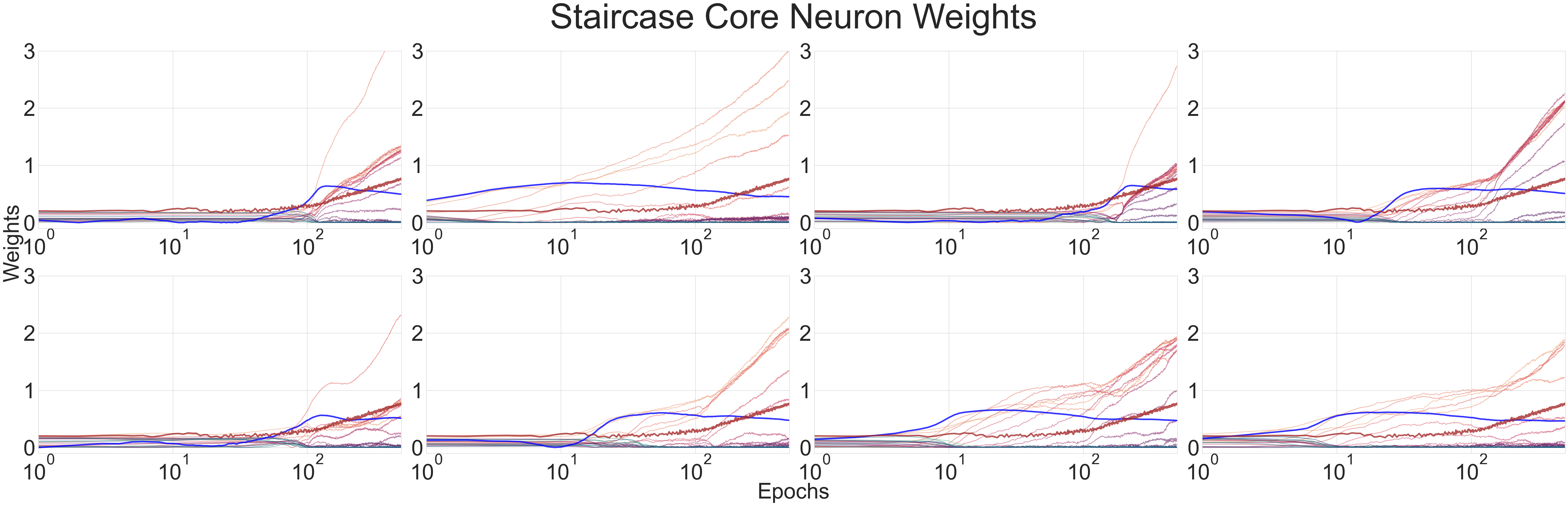}
    }
    \subfigure[Staircase: Spurious Neurons at $\lambda=0.90$]{
        \includegraphics[width=1\textwidth]{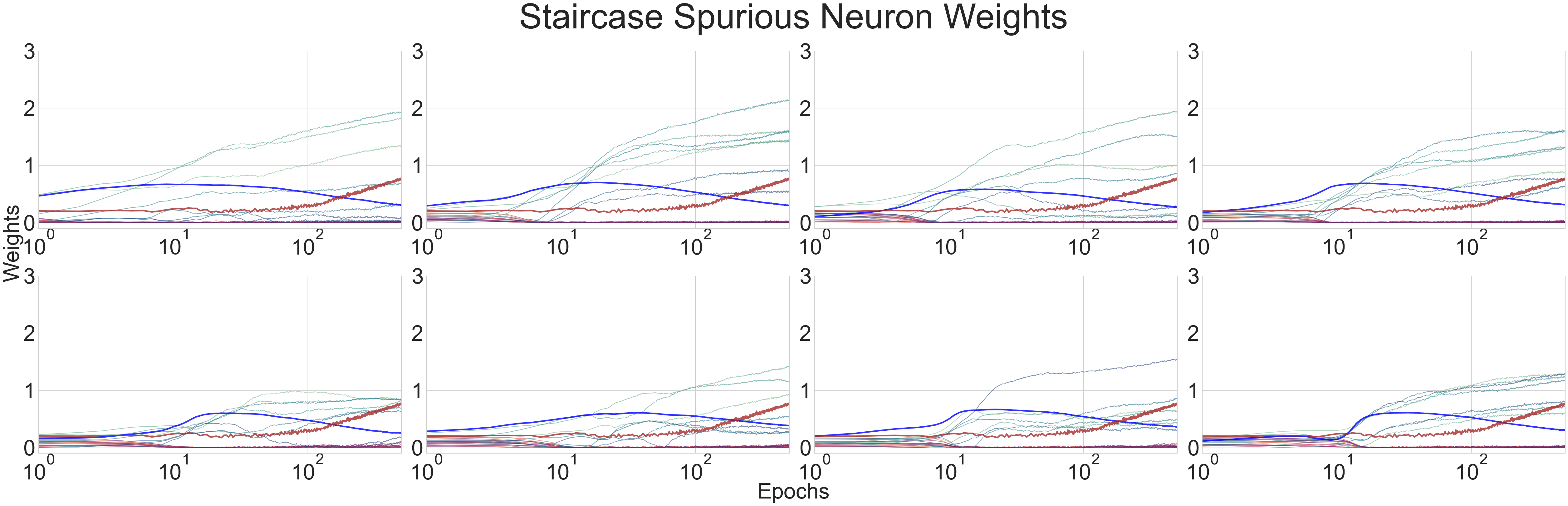}
    }
    \subfigure[Staircase: All neurons at $\lambda=0.90$]{
        \includegraphics[width=0.5\textwidth]{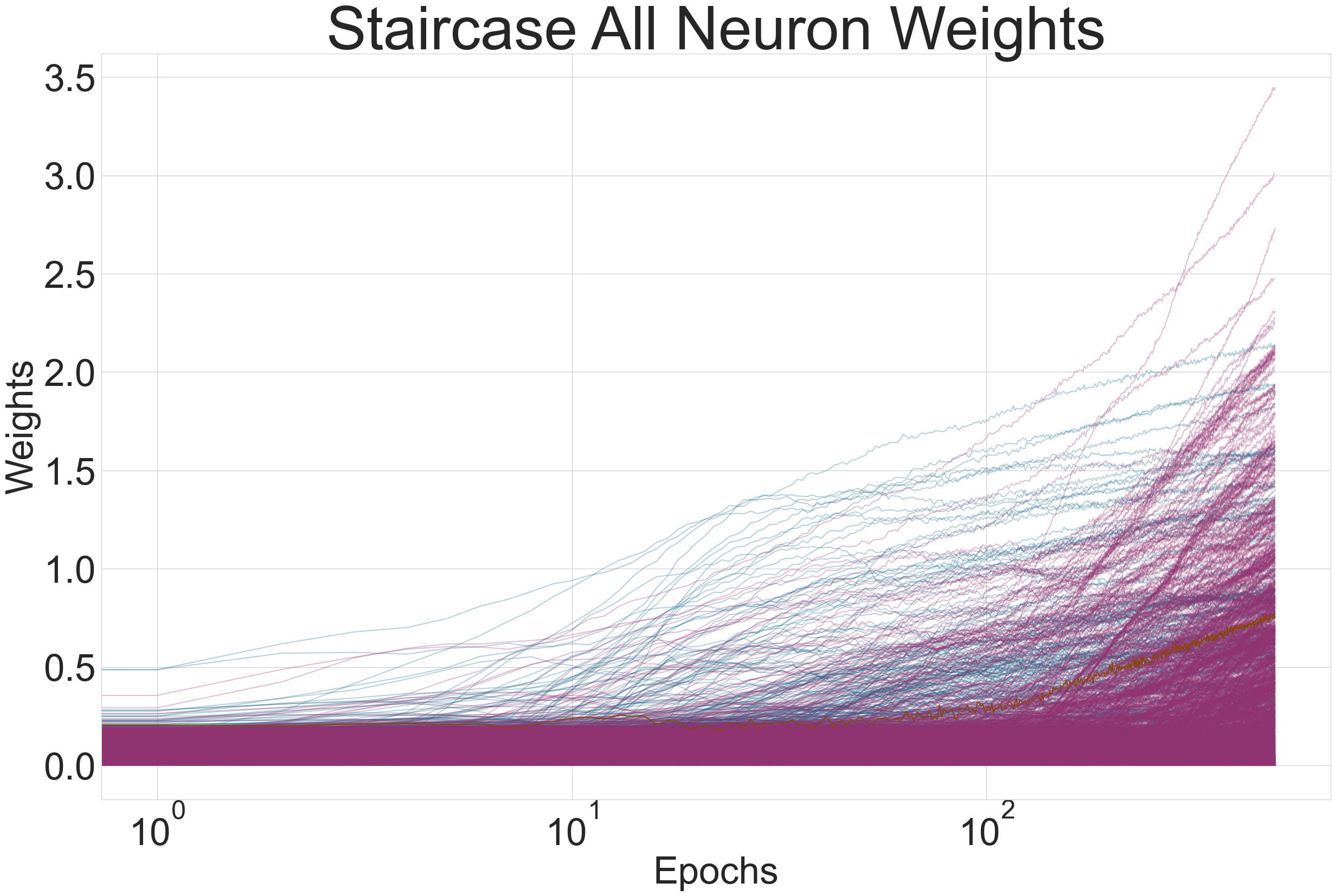}
    }
    \begin{minipage}{\textwidth}
        \centering
        \includegraphics[width=0.5\linewidth]{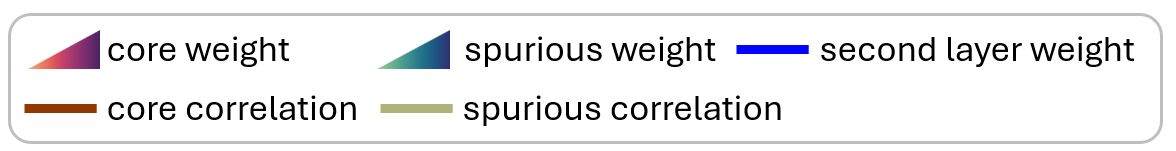}
    \end{minipage}
    \caption{\textbf{Dynamics of neurons on Staircase task.}}
    \label{fig:staircase_neuron}
\end{figure}

\subsubsection{Neurons plots on a Multi-Layer-NN}
We show a Multilayer Perceptron (MLP) may also seperate into two subnetworks. We take a 4-layer MLP trained on a staircase boolean task with $deg(f_s)=10, deg(f_c)=14, \lambda=0.9$ as an example. At each layer, starting from the bottom layer to the top layer, we recursively categorized neurons into spurious and core neurons. We observed that core neurons in the next hidden layer primarily focus on core neurons in the current hidden layer, and the same applies to spurious neurons. See \cref{fig:third_layer_neurons} for a plot on a random batch of neurons in the third layer. This discovery implies that the theory we derived in the main paper can also be extended to MLP architectures.
% \begin{figure}[p] % 'p' places the figure on a page containing only floats.
%     \centering
%     \subfigure[Staircase: Multi-layer Layer 1 Random Neurons at $\lambda=0.9$]{
%         \includegraphics[width=1\textwidth]{}
%     }
%     \subfigure[Staircase: Multi-layer Layer 2 Random Neurons at $\lambda=0.9$]{
%         \includegraphics[width=1\textwidth]{}
%     }
%     \subfigure[Staircase: Multi-layer Layer 3 Random Neurons at $\lambda=0.9$]{
%         \includegraphics[width=1\textwidth]{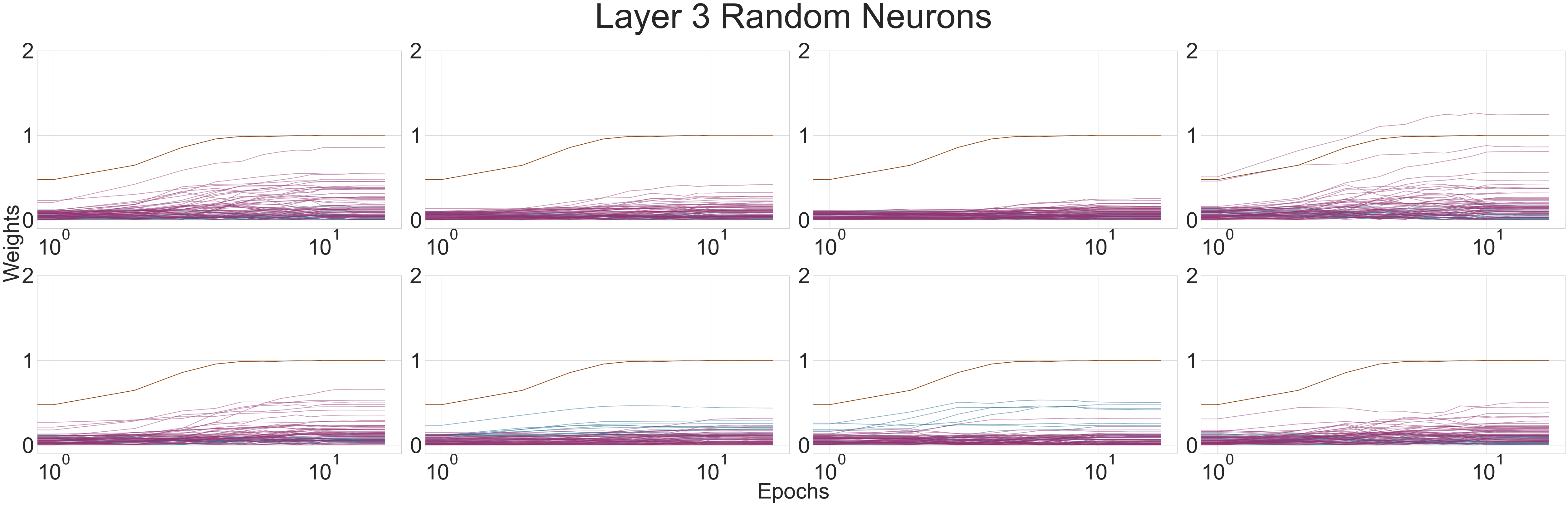}
%     }
%     \caption{Dynamics of Multi-layer neurons on Staircase task (continued on next page).}
%     \label{fig:MLP_neurons}
% \end{figure}
% \begin{figure}[ht] % Resume the figure on the next page
%     \ContinuedFloat % Continue the earlier figure
%     \centering
%     \subfigure[Staircase: Multi-layer Layer 4 Random Neurons at $\lambda=0.9$]{
%         \includegraphics[width=1\textwidth]{}
%     }
%     \caption{Dynamics of Multi-layer neurons on Staircase task (cont.). The plots show the neurons at different layers are also separated into core and spurious neurons. We take a random batch of neurons from each layer and classify neurons into core and spurious neurons. The color of each line corresponds to the legend shown in \cref{fig:neurons}.}
%     % No need to repeat the \label as it is the same figure.
% \end{figure}

\begin{figure}
    \centering
    \includegraphics[width= 1\textwidth]{}
    \caption{\textbf{Dynamics of Multi-layer neurons on Staircase task.} The plot shows the neurons in third layers are also separated into core and spurious neurons. The color of each line corresponds to the legend shown in \cref{fig:neurons} with core coordinates being the core neurons identified in the previous layer and the same for spurious neurons.}
    \label{fig:third_layer_neurons}
\end{figure}
% \subsubsection{Sub-network Weight dynamics}

\subsubsection{Spurious/Core neurons on Vision Datasets}
We show the finding here that neural networks trained on vision datasets are also separated/distangled into a spurious sub-network and a core sub-network. We retrain the model either on the core or  spurious feature and record the retrained correlation score. We further zero out the weights on intersected neurons which is higher than a threshold value. The result is shown in \cref{table:neuron_count}, \cref{table:neuron_performance}. 

% We further present the last layer neuron heatmap in \cref{fig:neuron_heatmaps}.

\begin{table}[h!]
\centering
\sisetup{table-format=3.0} % adjust table number format as needed
\begin{tabular}{@{}lSSSSS[table-format=1.2]@{}}
\toprule
{Dataset} & {Core Neurons} & {Spurious Neurons} & {Intersected Neurons} & {Before Retrain} \\
\midrule
Waterbirds & 135 & 72  & 6 & 1622\\
Domino     & 86  & 81  & 5  & 74 \\
CelebA     & 149 & 134 & 12 & 1449 \\
\bottomrule
\end{tabular}
\caption{Number of different types of neurons. The table shows the number of each type of neurons before and after retraining on core, spurious feature. Before Retrain: Number of activated neurons before retrain. Threshold are 0.01, 0.05, 0.01 for waterbirds, Domino and CelebA respectively.}
\label{table:neuron_count}
\end{table}

\begin{table}[h!]

\centering
\sisetup{table-format=1.3} % adjust table number format as needed
\begin{tabular}{@{}lSSSS@{}}
\toprule
{Dataset} & {Core} & {Spurious} & {Core w.o Spurious} & {Spurious w.o core} \\
\midrule
Waterbirds & 0.834 & 0.826 & 0.779 & 0.816 \\
Domino     & 0.804 & 1     & 0.8   & 1     \\
CelebA     & 0.814 & 0.578 & 0.792 & 0.542 \\
\bottomrule
\end{tabular}
\caption{Performance of the Retrained model using neurons in \cref{table:neuron_count}. Core/Spurious:Retrained the model on a group balanced dataset to predict core/spurious feature. Core w.o Spurious/Spurious w.o Core: Performance of the model after removing intersected neuron weights. }
\label{table:neuron_performance}
\end{table}

% \begin{figure}
%     \centering
%     \subfigure[Waterbirds: Core Neurons Heatmap $\lambda=0.90$]{
%         \includegraphics[width=0.7\textwidth]{}
%     }
%     \subfigure[Domino CiFar-truck-car|Mnist-0-1: Neurons Heatmap at $\lambda=0.90$]{
%         \includegraphics[width=0.7\textwidth]{}
%     }
%     \subfigure[CelebA: Neurons Heatmap]{
%         \includegraphics[width=0.7\textwidth]{}
%     }
%     \caption{Neuron Heatmap on Different Datasets. We expand the last layer of the model we trained (a resnet model with 2048 last layer neurons). We expand the classifier weights on them to be $64 X 32$. And then plot the heatmap of them. Each pixel of a heatmap is matched to a specific neuron. We see here that last layer retraining significantly increase the sparsity of the neurons. Also core and spurious neurons are not intersected.}
%     \label{fig:neuron_heatmaps}
% \end{figure}

\subsubsection{Last Layer Retraining reduces spurious sub-network weights}
See \cref{table:core_spurious_ratios}. We compare the last layer weights ratio between core and spurious sub-network after Lat Layer Retraining either with spurious dataset and balanced dataset. We see after retraining, the ratio increase significantly in both cases, which has also been observed in \citep{labonte_towards_2023}. 
\begin{table}[h]
\centering
\begin{tabular}{@{}lcccccc@{}}
\toprule
                     & \multicolumn{1}{l}{} & \multicolumn{1}{l}{Parity} & \multicolumn{1}{l}{} & \multicolumn{1}{l}{} & \multicolumn{1}{l}{Staircase} & \multicolumn{1}{l}{} \\ \cmidrule(l){2-7} 
\multicolumn{1}{c}{} & Core                 & Spurious                   & Ratio                & Core                 & Spurious                      & Ratio                \\ \midrule
Before Retrain       & 1.53                 & 0.97                       & 1.58                 & 1.38                 & 0.77                          & 1.79                 \\ \midrule
Retrain Spurious     & 0.57                 & 0.16                       & 3.56                 & 0.17                 & 0.08                          & 2.13                 \\ \midrule
Retrain Clean        & 0.57                 & 0.01                       & 57.00                & 0.17                 & 0.01                          & 17.00                \\ \bottomrule
\end{tabular}
\caption{The table shows the mean weights of core and spurious neurons before and after retraining with either the original spurious dataset $\dlam$ or group balanced dataset $D_{\lambda=0.5}$. Comparison of Core and Spurious Ratios in Parity and Staircase Cases before and after training with spurious dataset and clean dataset. The ratio shows $\frac{\text{Core}}{\text{Spurious}}$.}
\label{table:core_spurious_ratios}
\end{table}

\begin{figure}
    \centering
    \includegraphics[width=0.5\textwidth]{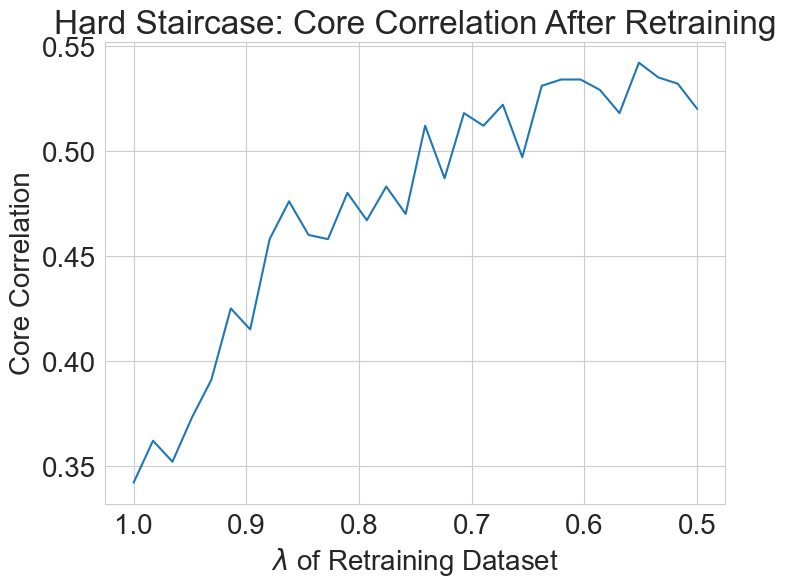}
    \caption{Hard Staircase: Core Correlation After Retrained On Dataset with Different $\lambda$. We see the result follows the same trend as observed in \citep{labonte_towards_2023} and is correctly predicted by our theory.}
    \label{fig:retrained_core_correlation_hard_staircase}
\end{figure}
\section{Discussion}
\subsection{Limitations of Existing Debiasing Algorithms}
\label{Appendix:debiasing_algorithm_limitations} 
In instances where a spurious attribute is absent, numerous debiasing algorithms \citep{liu_just_2021, liu_avoiding_2023, utama_towards_2020, nam_learning_2020, yaghoobzadeh_increasing_2021} typically follow a two-stage methodology. The first stage involves training a conventional model using Early Stop Empirical Risk Minimization (ERM). These algorithms diverge in the second stage, where each implements a distinct heuristic to distinguish and separate data from minority groups. This separation is based on the initial model, which is then utilized to either upweight or upsample these data points in the next stage. We find these methods have several inherent limitations, evident even in our toy settings: (1) identifying the right time for early stopping, (2) assessing whether the first model sufficiently identifies minority group data points, (3) determining the quantity of data points to be selected, (4) establishing the appropriate degree of upweighting for the selected points.It is also unclear whether the first model provide enough information to separate data points in the first place.
If the algorithm aims to accurately identify data points from a minority group, then we can use the Jaccard score and Containment score to evaluate their performance.

 These methods implicitly assume a distinct separation in the learning phases of spurious and core features, often influenced by the simplicity of the spurious feature and the strength of confounders. This is particularly apparent in JTT, which upweights all misclassified points in the second stage, implicitly assuming a specific temporal point where the model correlates more with spurious features than core features. This assumption holds true in cases where the spurious feature is trivial to learn compared to the core feature. such as with parity cases and popular spurious datasets like the image Domino dataset \citep{shah_pitfalls_2020}, Waterbirds \citep{sagawa_distributionally_2020}, and Color-MNIST \citep{arjovsky_invariant_2020}. However, our findings suggest that this demarcation can remain unclear throughout training, particularly with more challenging spurious features and limited datasets, as demonstrated by the limited hard staircase dataset and the hard domino dataset. As a result, these debiasing algorithms struggle to accurately distinguish minority groups from others, leading to unwanted bias in the model, as evidenced by low Jaccard scores and containment score(refer to the right two plots in Figure \cref{fig:weakness_algo}).

Unlike clustering methods such as JTT and SPARE, \citep{labonte_towards_2023} proposes a more generalized approach. Instead of segregating points into groups for upweighting based on the inferred group's size, this method selects points using the initial model, focusing on those with the highest cross-entropy or KL divergence loss from a subsequently trained model. Assuming we can accurately determine the timing for early stopping, the challenge then becomes deciding on the number of points to select. We observe that minority group samples tend to rank higher in terms of loss, as indicated by a high containment score relative to the number of selected points. However, identifying the optimal number of points without explicit knowledge of the spurious attribute can be challenging, limiting the practicality of the algorithm. Detailed statistics for four methods—JTT, Spare, Highest CE loss, Highest Disagreement Score—over six popular spurious datasets are provided in the appendix, showcasing the accuracy of these methods in identifying minority groups. For the assessment of the ranking methods, we select the number of points to be equivalent to the total number of minority points present in the spurious dataset.

Note despite not directly using a spurious attribute in training, previous algorithms often presume the availability of a validation dataset for hyperparameter tuning, which is impractical. Consequently, the reported performance typically reflects models tuned with optimal hyperparameters.

Our experiments employed the SpuCo library \citep{joshi_towards_2023}. At each epoch, we paused the initial model's training to perform group inference, following which we calculated the Jaccard and containment score to measure the accuracy of the inferred minority group against the actual minority group.

% \begin{table}[h]
%     \centering
%     \begin{minipage}{.5\textwidth}
%         \centering
%         \input{table/neuron_number_conf}
%     \end{minipage}%
%     \begin{minipage}{.5\textwidth}
%         \centering
%         \input{table/neuron_number_conp}
%     \end{minipage}
%     \caption{Number of Spurious Neurons: The default setting is Parity:$deg(f_s)=4, deg(f_c)=10, \lambda=0.9$ Staircase:$deg(f_s)=7, deg(f_c)=14, \lambda=0.9$. We vary one of the parameters with other parameter fixed for each experiment.}
%     \label{tab:neuron_num}
% \end{table}

\begin{figure}
    \centering
    \includegraphics[width=1\textwidth]{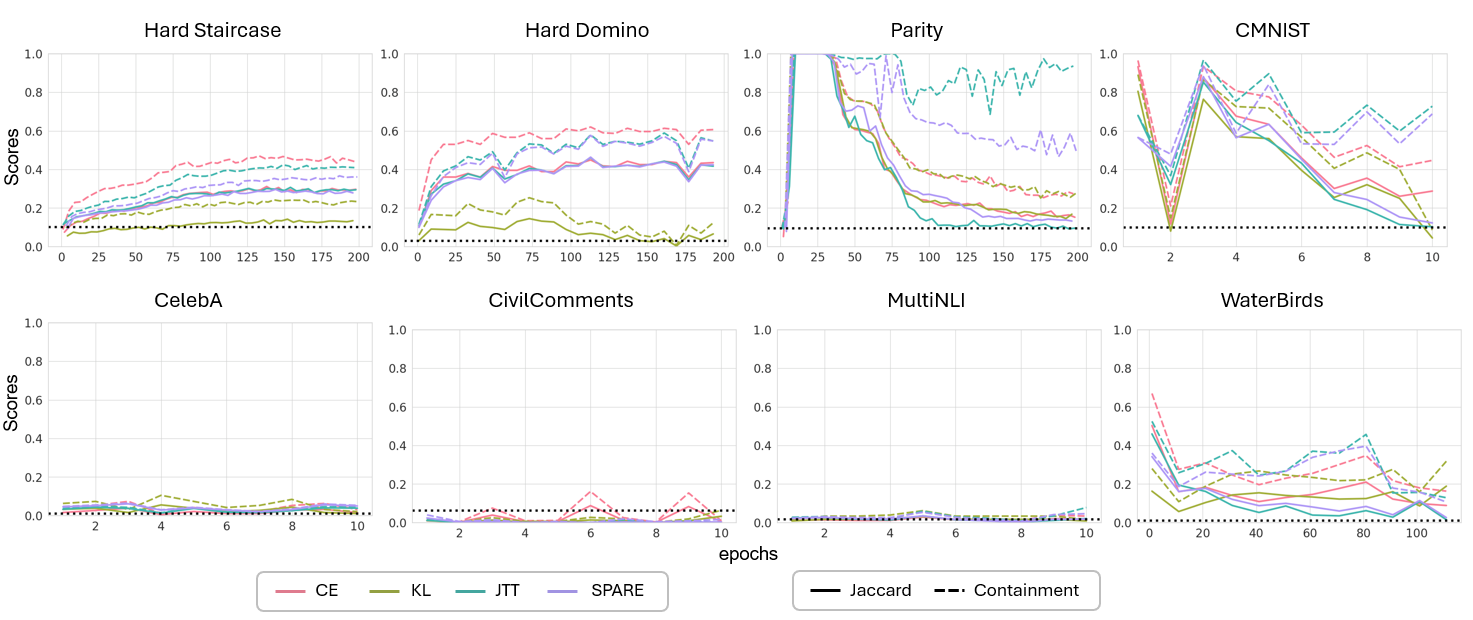}
    \caption{Performance of different debiasing methods on widely-used spurious datasets. Hard Staircase: $\lambda=0.9, \deg(f_s)=10, \deg(f_c)=14, \text{Dataset Size }60000$; Hard Domino: $\lambda=0.95$. the core feature is Cat-Dog and the spurious feature is truck-automobile from CIFAR\citep{krizhevsky_learning_nodate} dataset (see \cref{fig:hard_domino_sample_images} for an sample batch of images from this dataset). We additional report that after upsampling the inferred group (performing the second training using the best group inference result among all early stopped models), the worst group accuracy for the hard staircase and hard domino dataset are 0.42 and 0.53 respectively; Parity: $\lambda=0.9, \deg(f_s)=10, \deg(f_c)=4$; CMNIST: $\lambda=0.9$; WaterBirds: $\lambda=0.95$.}
    \label{fig:real}
\end{figure}

\begin{figure}
    \centering
    \includegraphics[width=1\textwidth]{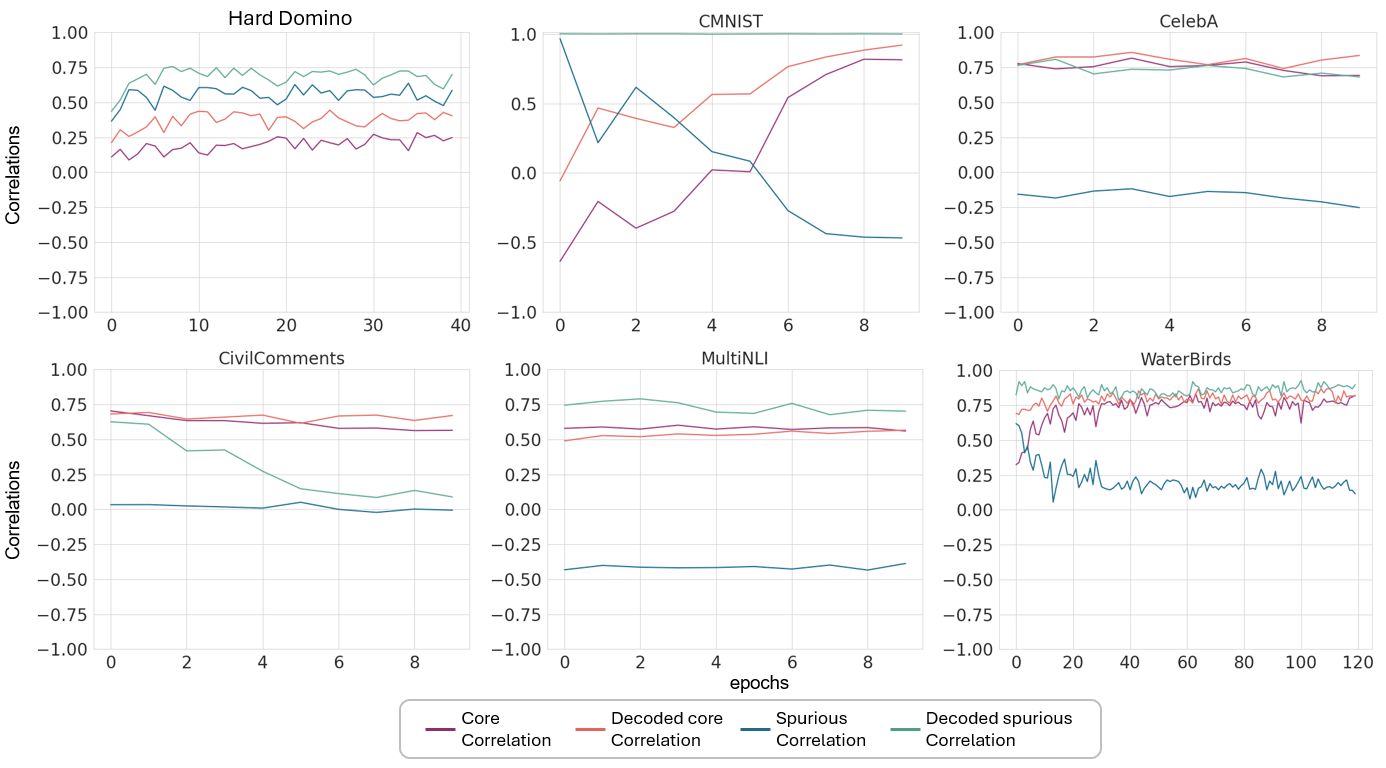}
    \caption{Training dynamics of different spurious datasets with default pretrained weights: Dataset specification is outlined in \cref{fig:real}}
    \label{fig:real-correlation-pretrained}
\end{figure}

\begin{figure}
    \centering
    \includegraphics[width=1\textwidth]{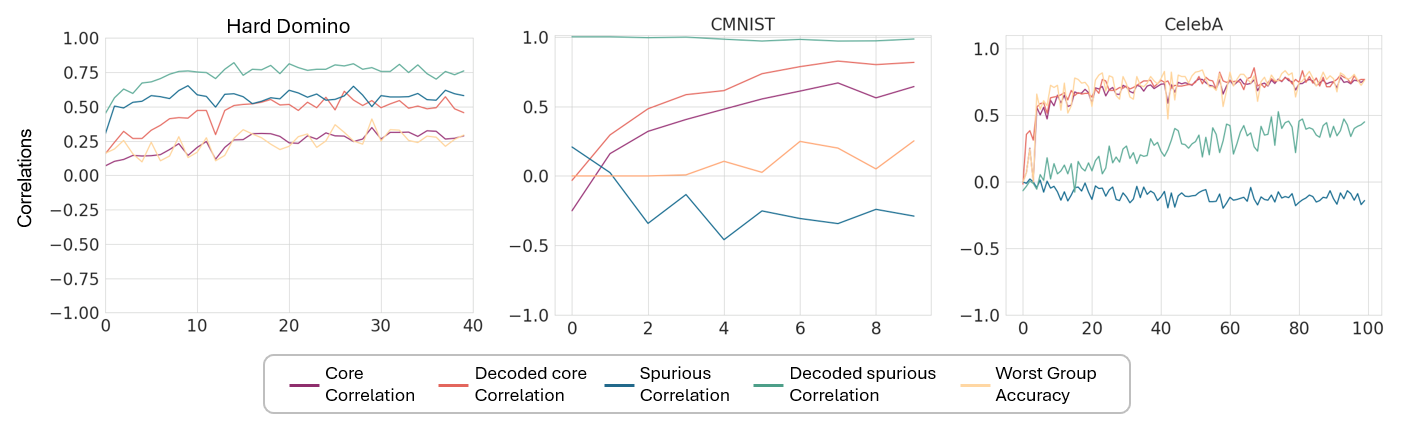}
    \caption{Training dynamics of different spurious datasets with random weights: Dataset specification is outlined in \cref{fig:real}}
    \label{fig:real-correlation-random}
\end{figure}

\begin{figure} 
    \centering
    \includegraphics[width=1\textwidth]{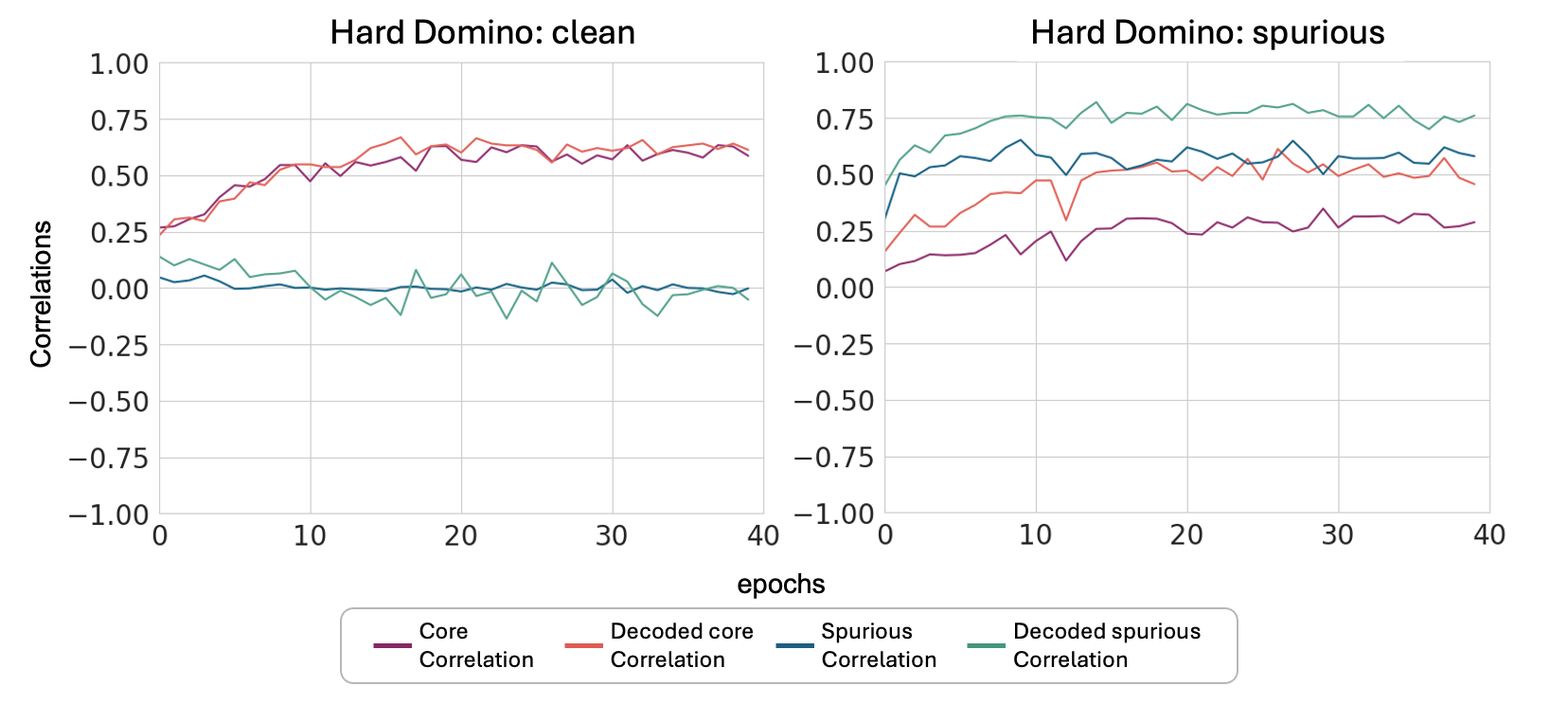}
    \caption{Core/spurious correlation and decoded correlation dynamics of the designed Hard Domino Dataset. Clean: $\lambda=0.5$; Spurious: $\lambda=0.95$. Even when the spurious feature in the domino task is challenging, it significantly influences the learning of the core feature. Thus justify its appropriateness as an benchmark that can be considered by future debiasing methods. The core correlation achieved by the model in the clean case represents the maximum possible correlation that any debiasing algorithm could achieve.}
    \label{fig:hard-domino}
\end{figure}

\begin{table}[ht]
\centering
\begin{tabular}{l|lllllll}
\hline
\textbf{Dataset/Metric}     & \textbf{clean} & \textbf{original} & \textbf{CE} & \textbf{KL} & \textbf{JTT} & \textbf{SPARE} & \textbf{clean LLR} \\ \hline
\textbf{Hard Staircase}  & 0.76           & 0.56              & -0.31       & 0.28        & 0.28         & 0.15           & 0.62               \\
\textbf{Hard Domino} & 0.70          & 0.375             & -0.22       & 0            & 0.15         & 0.02           & 0.55               \\ \hline
\end{tabular}
\caption{End Performance of Different Debiasing methods on designed experiments: \textbf{clean}: Trained from scratch with $\lambda=0.5$ which can be seen as a optimal case. \textbf{original}: Before Retraining with $\lambda=0.95$ for Hard Domino task and $\lambda=0.9$ for staircase task.\textbf{clean LLR}: Last Layer Retraining with $\lambda=0.5$ dataset. The value here denote core correlation after retraining. It is observed that previous debiasing algorithms either failed on the designed spurious task or cause failure to core feature learning.}
\label{tab:end_performance}
\end{table}

\begin{figure}
    \centering
    \includegraphics[width=1\textwidth]{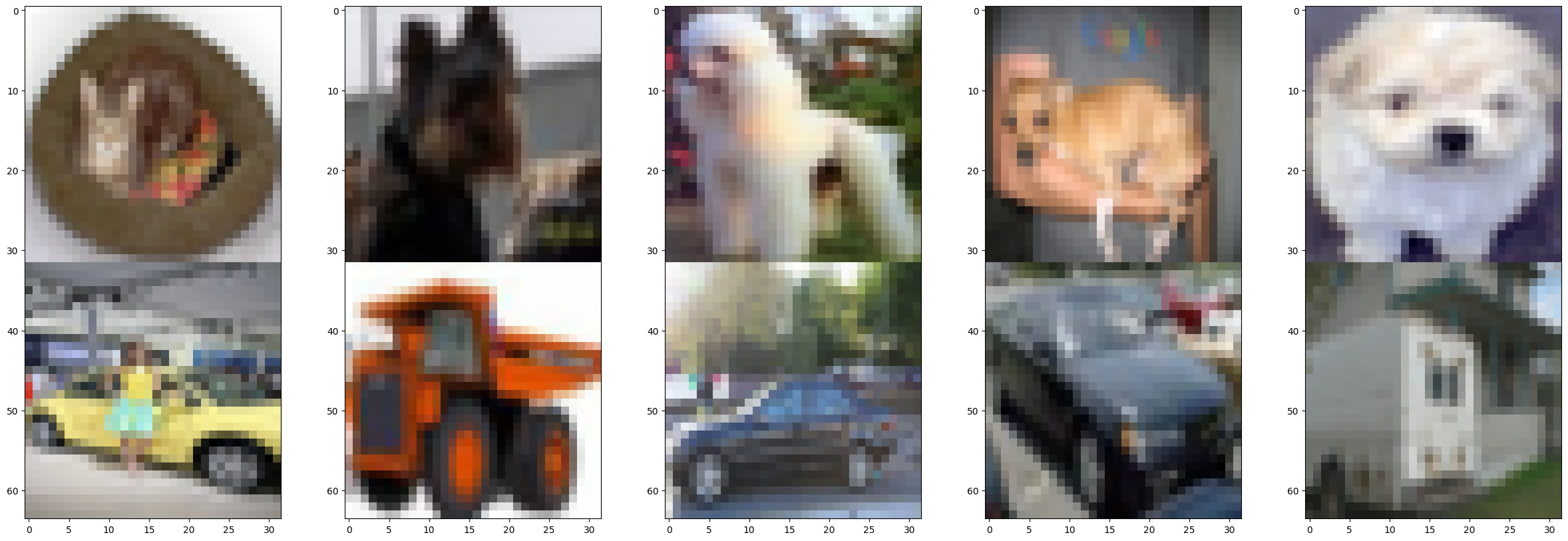}
    \caption{A sample batch of images from the constructed Hard Domino dataset.}
    \label{fig:hard_domino_sample_images}
\end{figure}

\subsection{Discussion on spurious real dataset}
\label{Appendix:discussion_real_dataset}
Real-world datasets typically used to study spurious correlations, such as MultiNLI\citep{williams_broad-coverage_2018} and Civilcomment\citep{duchene_benchmark_2023}, often do not meet the properties of the boolean spurious datasets where core feature and spurious feature are disentangled and realizable. In these datasets, spurious attributes are intricately intertwined at the word level, such that the removal of a negative word can significantly alter a sentence's semantic meaning. Furthermore, it has been reported that the labels in these datasets are not entirely clear-cut and may be inherently ambiguous, posing challenges for semantic labeling even for human annotators, thereby violating the realizability condition. We observe that the training dynamics in such real-world datasets are diversified and not fully understood, as indicated by \citep{izmailov_feature_2022, joshi_towards_2023} (refer to \cref{fig:real-correlation-pretrained}, \cref{fig:real-correlation-random}). It lacks justifications whether these datasets are suitable to be used in studying spurious correlation. 
% For instance, as demonstrated in our study on CelebA, the spurious feature is, in fact, not decodable. This suggests that gender is likely, or perhaps intuitively, a more complex feature than whether a person has blonde hair. As implied by our study, it should have little to no influence on core feature learning. Thus, it is questionable and not well-justified whether the spurious feature in these datasets has a significant or observable impact on core feature learning that warrants more sophisticated treatment. 

As previous studies \citep{labonte_towards_2023, idrissi_simple_2022} have repeatedly shown, class-balanced training achieves comparable performance to other, more sophisticated debiasing algorithms, or even group-balanced training such as DRO \citep{sagawa_distributionally_2020} on real dataset. The analysis of Jaccard and containment scores reveals that the debiasing methods tested exhibit poor performance on all the tested real spurious dataset, casting doubt on their effectiveness for enhancing the second model. Therefore, the potential of these methods to improve core feature learning is questionable. The pervasive use of pretrained models further complicates the evaluation of debiasing algorithms. A deeper understanding of the complex nature of real-world data, an area that remains largely unexplored, is crucial for a comprehensive understanding of the training dynamics in these scenarios.

\end{document}